\declaretheorem[name=Lemma]{lemma}
\setlist[enumerate]{leftmargin=*}
\setlist[enumerate,1]{itemsep=0pt,topsep=0pt,parsep=0pt}
\title{4-bit Shampoo for Memory-Efficient Network Training}
\author{%
	Sike Wang \\
	Beijing Normal University\\
	\texttt{sikewang@mail.bnu.edu.cn} \\
	\And
	Pan Zhou\\
	Singapore Management University\\
	\texttt{panzhou@smu.edu.sg} \\
	\And
	Jia Li$^{\dagger}$\\
	Beijing Normal University\\
	\texttt{jiali@bnu.edu.cn} \\
	\And
	Hua Huang\\
	Beijing Normal University\\
	\texttt{huahuang@bnu.edu.cn} \\
}
\begin{document}

\maketitle

\vspace{-0.4cm}
\begin{abstract}
Second-order optimizers, maintaining a matrix termed a preconditioner, are superior to first-order optimizers in both theory and practice.
The states forming the preconditioner and its inverse root restrict the maximum size of models trained by second-order optimizers.
To address this, compressing 32-bit optimizer states to lower bitwidths has shown promise in reducing memory usage.
However, current approaches only pertain to first-order optimizers.
In this paper, we propose the first 4-bit second-order optimizers, exemplified by 4-bit Shampoo, maintaining performance similar to that of 32-bit ones.
We show that quantizing the eigenvector matrix of the preconditioner in 4-bit Shampoo is remarkably better than quantizing the preconditioner itself both theoretically and experimentally.
By rectifying the orthogonality of the quantized eigenvector matrix, we enhance the approximation of the preconditioner's eigenvector matrix, which also benefits the computation of its inverse 4-th root.
Besides, we find that linear square quantization slightly outperforms dynamic tree quantization when quantizing second-order optimizer states.
Evaluation on various networks for image classification and natural language modeling demonstrates that our 4-bit Shampoo achieves comparable performance to its 32-bit counterpart while being more memory-efficient\footnote[1]{Code is available at \url{https://github.com/Sike-Wang/low-bit-Shampoo}.}.
\end{abstract}
\footnotetext[2]{Corresponding author.}

\vspace{-0.4cm}
\section{Introduction}
Deep neural networks (DNNs) have achieved great success in numerous fields, e.g., computer vision~\cite{He_CVPR_2016}, natural language processing~\cite{Vaswani_NIPS_2017}, and speech recognition~\cite{Gulati_Interspeech_2020}. A significant part of such success is attributed to first-order optimizers such as stochastic gradient
descent with momentum (SGDM)~\cite{Ning_NeuralNetworks_1999} and AdamW~\cite{Loshchilov_ICLR_2019}. Second-order optimizers, including K-FAC~\cite{Martens_ICML_2015}, Shampoo~\cite{Vineet_ICML_2018}, AdaBK~\cite{Yong_CVPR_2023}, CASPR~\cite{Duvvuri_ICLR_2024}, and Sophia~\cite{Liu_ICLR_2024}, show great convergence properties, but often involve noticeable computation and memory costs. Anil et al.~\cite{Anil_arxiv_2020} provided several practical techniques for second-order optimizers to achieve substantial wall-clock time improvements over traditional first-order optimizers. The fast convergence property of second-order optimizers benefits from preconditioning the gradient with a matrix known as a preconditioner. The optimizer states for constructing the preconditioner and its inverse root can speed up optimization compared to first-order optimizers, but consume memory that could be used for model parameters, limiting the maximum model size trained within a given memory budget. With the increase in model size, the memory utilized by optimizer states can become a predominant factor in memory usage. This is the primary obstacle hindering the widespread use of second-order optimizers in the era of large models.

There are two main attempts to reduce memory consumed by optimizer states. Factorization uses low-rank approximation to optimizer states. This strategy has been applied to first-order optimizers~\cite{Shazeer_ICML_2018, Anil_NIPS_2019} and second-order optimizers~\cite{Feinberg_NIPS_2023, JuiNan_NIPS_2023}. In a comparable but distinct line of work, quantization utilizes low-bit to compress 32-bit optimizer states. Quantization is attractive due to its simplicity and wide
applicability, which has been applied to first-order optimizers~\cite{Dettmers_ICLR_2022, Li_NIPS_2023}. Applying quantization to  second-order optimizers poses a greater challenge, as first-order optimizers' states are elementwise, whereas second-order optimizers rely on matrix operations. To our knowledge, it has not been attempted before.

\textbf{Contributions:} In this paper, we present the first second-order optimizers with 4-bit optimizer states by taking Shampoo~\cite{Vineet_ICML_2018} as an example, while preserving the performance achieved with 32-bit optimizer states. While our focus is on Shampoo, we believe that our approach could also be applied to other second-order optimizers (see Table~\ref{tab:ablation-experi-otherSO}). Our main contributions are highlighted below.

Firstly, to maintain 32-bit performance, we propose quantizing the eigenvector matrix of a preconditioner in 4-bit Shampoo, rather than the preconditioner itself. The reason is that the small singular values of the preconditioner matter. Directly quantizing the preconditioner via block-wise quantization~\cite{Dettmers_ICLR_2022} at 4-bit precision can significantly alter the small singular values, leading to a drastic change in its inverse 4-th root and thus harming 4-bit Shampoo's performance. Quantizing the eigenvector matrix can help alleviate this issue, which is supported by experimental validation and theoretical insight. 
Additionally, with the eigenvector matrix, computing the inverse 4-th root is straightforward, ensuring that quantizing the eigenvector matrix does not lead to a rise in the total wall-clock time compared to quantizing the preconditioner (see Figure~\ref{fig:swin-tiny-cifar100}).

Secondly, we present two techniques for enhancing performance.  As the eigenvector matrix of a preconditioner is orthogonal, we apply Bj\"orck orthonormalization~\cite{Bjorck_SIAM_1971} to rectify the orthogonality of the quantized eigenvector matrix, leading to improved approximation of preconditioner's eigenvector matrix and facilitating computation of its inverse 4-th root. Additionally, we observe that linear square quantization outperforms dynamic tree quantization~\cite{Dettmers_ICLR_2016} marginally when quantizing second-order optimizer states. The superiority of our developed 4-bit Shampoo is demonstrated in Figure~\ref{fig:swin-tiny-cifar100}.

Finally, we evaluate our 4-bit Shampoo on different image classification and natural language modeling tasks using convolutional neural network (CNN) and transformer architectures. Across all these benchmarks, our 4-bit Shampoo achieves similarly fast convergence comparable to its 32-bit counterpart, with no significant increase in losses for the trained models. Our 4-bit Shampoo uses less memory than its 32-bit counterpart, allowing for training of larger models with given resources.

\begin{figure}[t]	
	\centering
	\subfigure[Swin-Tiny on CIFAR-100] {
		\includegraphics[width=0.48\textwidth]{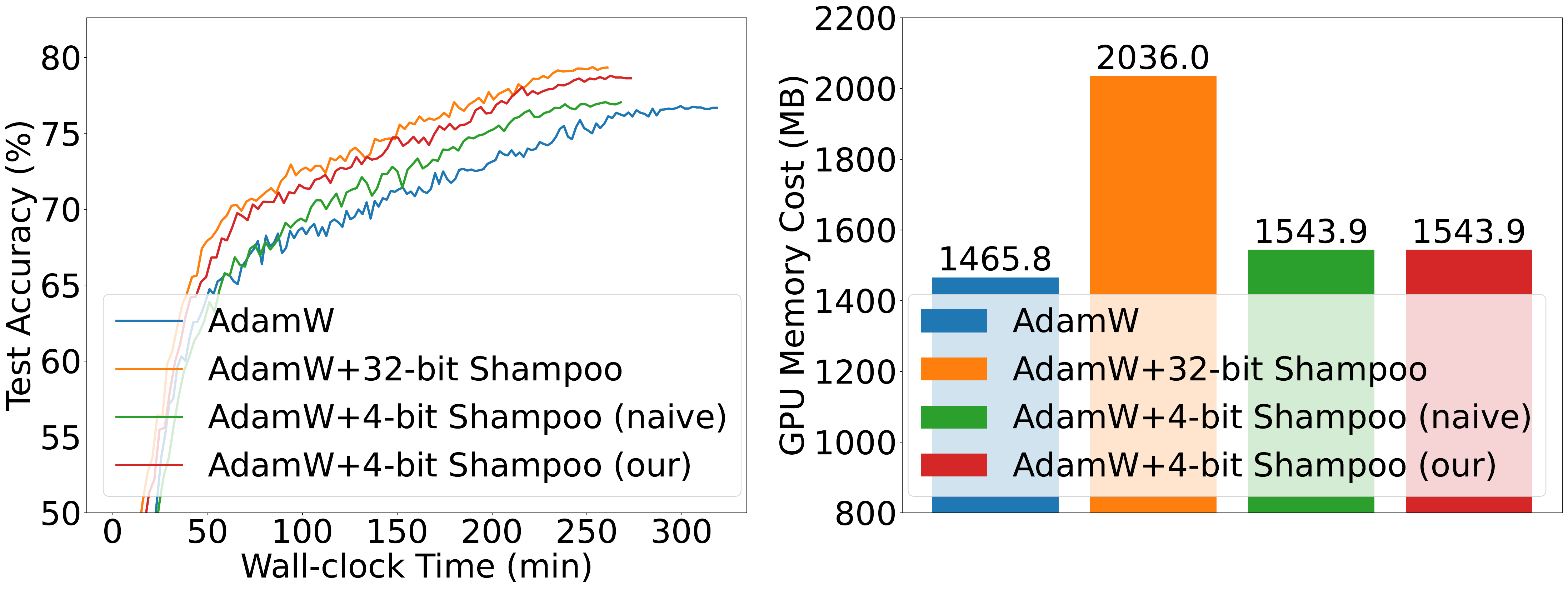}	
	}
	\subfigure[ViT-Base/32 on ImageNet-1k] {
		\includegraphics[width=0.48\textwidth]{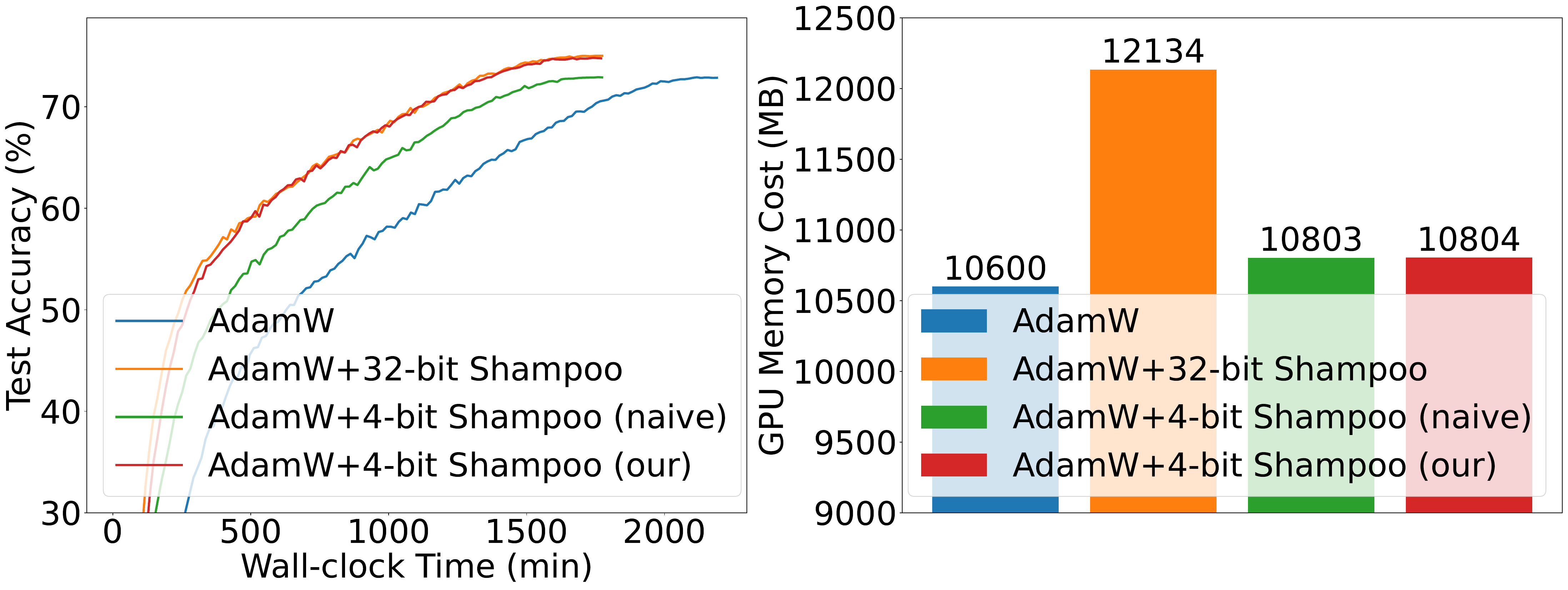}
	}
	\caption{
		Visualization of test accuracies and total GPU memory costs of vision transformers. 4-bit Shampoo (naive) quantizes the preconditioner, while 4-bit Shampoo (our) quantizes its eigenvector matrix.
	}
	\label{fig:swin-tiny-cifar100}
\end{figure}

\section{Preliminaries}\label{sec:preliminaries}
In this section, we present Shampoo and its implementation in our experiments.
We also discuss quantization-based compression methods in a general formulation.

\textbf{Notations.}
We use a non-bold letter like $a$ or $A$ to denote a scalar, a boldfaced lower-case letter like $\bm{a}$ to denote a vector, and a boldfaced upper-case letter such as $\bm{A}$ to denote a matrix.
$\bm{u}\!=\![u_i]^{\mathsf{T}}$ means that the $i$-th element of column vector $\bm{u}$ is $u_i$ and $\bm{U}\!=\![\bm{u}_i]$ means the $i$-th column vector of matrix $\bm{U}$ is $\bm{u}_i$. 
Let $\bm{A}$ be a positive definite (PD) matrix and $s\in\mathbb{R}$, we define $\bm{A}^s\!=\!\bm{U}\bm{\Lambda}^s\bm{U}^{\mathsf{T}}$, where $\bm{U}\bm{\Lambda}\bm{U}^{\mathsf{T}}$ is the Singular Value Decomposition (SVD) of $\bm{A}$.
${\rm tr}(\bm{A})$ represents the trace of a matrix $\bm{A}$. 
The inner product of two matrices $\bm{A}$ and $\bm{B}$ is denoted as $\langle\bm{A}, \bm{B}\rangle\!=\!{\rm tr}(\bm{A}^{\mathsf{T}}\bm{B})$.
The Frobenius norm of a matrix $\bm{A}$ is $\|\bm{A}\|_F\!=\!\sqrt{\langle\bm{A}, \bm{A}\rangle}$.
$\bm{A}\odot\bm{B}$ means the elementwise matrix product (Hadamard product). ${\rm Diag}(\bm{a})$ is a diagonal matrix with diagonal vector $\bm{a}$, while ${\rm diag}(\bm{A})$ means the diagonal vector of matrix $\bm{A}$.

\subsection{Shampoo for Matrices}
The update rule of Shampoo in the matrix case combined with a first-order optimizer $\mathcal{F}$ is
\begin{align}
	\label{eq:Shampoo_alg}
	\textbf{Shampoo}(\bm{W}_{t-1}, \bm{L}_{t-1}, \bm{R}_{t-1}, \bm{s}_{t-1}, \bm{G}_t) =
	\begin{cases}
		\bm{L}_t \!=\! \bm{L}_{t-1}\!+\!\bm{G}_t\bm{G}_t^{\mathsf{T}} \\
		\bm{R}_t \!=\! \bm{R}_{t-1}\!+\!\bm{G}_t^{\mathsf{T}}\bm{G}_t \\
		\widehat{\bm{G}}_t \!= \!\bm{L}_t^{-1/4}\bm{G}_t\bm{R}_t^{-1/4} \\
		\widetilde{\bm{G}}_t\!=\!\widehat{\bm{G}}_t(\|\bm{G}_t\|_F / \|\widehat{\bm{G}}_t\|_F) \\
		\bm{W}_t, \bm{s}_t \!=\! \mathcal{F}(\bm{W}_{t-1}, \bm{s}_{t-1}, \widetilde{\bm{G}}_t)
	\end{cases}
\end{align}
where $\bm{W}_t$ is the model parameters in matrix form, $\bm{L}_t$ and $\bm{R}_t$ are called preconditioners, $\bm{s}_t$ is the optimizer state of $\mathcal{F}$, and $\bm{G}_t$ is the gradient at $\bm{W}_{t-1}$. Note that $\bm{L}_t, \bm{R}_t$, $\bm{L}_t^{-1/4}$, and $\bm{R}_t^{-1/4}$  are PD matrices. The penultimate step in~\eqref{eq:Shampoo_alg} is the grafting trick~\cite{agarwal2020disentangling}, which enables Shampoo to roughly apply the well-tuned learning rate schedule of $\mathcal{F}$. The optimization variable $\bm{W}_t$ does not represent all model parameters. It denotes a tensor of the model~\cite{Vineet_ICML_2018} or one block of a tensor~\cite{Anil_arxiv_2020}. In practice, we adopt an efficient and effective implementation of Shampoo for training DNNs following~\cite{Anil_arxiv_2020, Yong_CVPR_2023} as described in Algorithm~\ref{alg:shampoo_prac}. In order to achieve efficient training, $\bm{L}_t, \bm{R}_t$, $\bm{L}_t^{-1/4}$, and $\bm{R}_t^{-1/4}$ are computed once every few hundred iterations. 
In this case, besides $\bm{L}_t$ and $\bm{R}_t$, their inverse 4-th roots should also be stored in memory, as computing them is computationally expensive.
So training large models with Shampoo can be memory-intensive, consuming a significant amount of memory.

\subsection{Quantization-based Compression Methods}\label{sec:preliminaries-quantization}
Quantizing updated optimizer states using a quantizer and then dequantizing them with a dequantizer prior to use is an effective method for conserving memory. We focus exclusively on vectors, as tensors can be reshaped into vectors.

\textbf{Quantization.}
According to the idea in~\cite{Dettmers_ICLR_2022, Li_NIPS_2023}, a $b$-bit quantizer $\mathcal{Q}$ for $p$-dimensional real vectors is a mapping given by
\begin{align*}
	\mathcal{Q} = (\mathcal{I}\circ\mathcal{N}, \mathcal{M}) : \mathbb{R}^p \to \mathbb{T}_b^p\times\mathbb{R}^p,
\end{align*}
where $\mathcal{N}$ is a normalization operator on $\mathbb{R}^p$, $\mathcal{I}$ is an elementwise function mapping any real number to an element of $\mathbb{T}_b\!=\!\{0, 1, \dots, 2^b\!-\!1\}$, and $\mathcal{M}$ is a maximum operator on $\mathbb{R}^p$.
For any $\bm{x}\in\mathbb{R}^p$, $\mathcal{N}$ and $\mathcal{M}$ satisfy $\mathcal{N}(\bm{x})\!\odot\!\mathcal{M}(\bm{x}) \!=\! \bm{x}$.

A normalization operator $\mathcal{N}$ for $p$-dimensional vectors is a transformation on $\mathbb{R}^p$.
It scales each element of a vector $\bm{x}\in\mathbb{R}^p$ into $[-1, 1]$.
A block-wise normalization operator for a $p$-dimensional vector $\bm{x}=[x_1, x_2, \dots, x_p]^{\mathsf{T}}$ is defined as
\begin{align*}
	\mathcal{N}(\bm{x})_i = \frac{x_i}{\max_{j\in\mathbb{X}_i}\{x_j\}}, 
\end{align*}
where $\mathcal{N}(\bm{x})_i$ is the $i$-th element of $\mathcal{N}(\bm{x})$, and $\mathbb{X}_i$ is a set 
satisfying $i\in\mathbb{X}_i\subset\{1, \dots, p\}$.
Usually, $\mathbb{X}_i$ should also satisfy $\mathbb{X}_i\!=\!\mathbb{X}_j$ or $\mathbb{X}_i\cap\mathbb{X}_j\!=\!\emptyset$ for $i,j\in\{1, \dots, p\}$.
In this case, for any $\bm{x}\in\mathbb{R}^p$, the number of different elements in $\mathcal{M}(\bm{x})$ is equal to the number of elements in set $\{ \mathbb{X}_i | i=1, \dots, p \}$.
Meanwhile, the number of the elements in $\mathbb{X}_i$ for any $i$ should be as close as possible to a value called block size.

The mapping $\mathcal{I}$ for $x\in\mathbb{R}$ in a $b$-bit quantizer $\mathcal{Q}$ is defined as
\begin{align*}
	\mathcal{I}(x) = \mathop{\text{argmin}}\limits_{j\in\mathbb{T}_b}\left|x-\mathcal{R}(j)\right|,
\end{align*}
where $\mathcal{R}$ named quantization mapping is an elementwise function that maps any element in $\mathbb{T}_b$ into $[-1, 1]$, and $|\cdot|$ is the absolute operator for a scalar.
There are three typical quantization mappings: linear quantization, dynamic quantization, and quantile quantization. Their specifications and visualizations can be found in~\cite{Dettmers_ICLR_2022}.

\textbf{Dequantization.}
Given a $b$-bit quantizer $\mathcal{Q} \!=\! (\mathcal{I}\circ\mathcal{N}, \mathcal{M})$ for a $p$-dimensional real vector $\bm{x}\in\mathbb{R}^p$, the corresponding dequantizer $\mathcal{D}$ is a mapping defined as
\begin{align*}
\mathcal{D}(\mathcal{Q}(\bm{x}))\! =\! \mathcal{D}(\mathcal{I}\circ\mathcal{N}(\bm{x}), \mathcal{M}(\bm{x})) \!=\! \mathcal{R}(\mathcal{I}\circ\mathcal{N}(\bm{x})) \odot \mathcal{M}(\bm{x}) : \mathbb{T}_b^p\times\mathbb{R}^p \to \mathbb{R}^p.
\end{align*}
\section{Methodology}
In this section, we describe the design of our quantization-based compression method to realize 4-bit Shampoo with fast and high precision quantization. Let $\mathcal{Q}\!=\!(\mathcal{I}\circ\mathcal{N}, \mathcal{M})$ be a quantizer and $\mathcal{D}$ be its corresponding dequantizer as described in Subsection~\ref{sec:preliminaries-quantization}.

\subsection{Quantizing the Eigenvector Matrices}\label{sec:metho-quan-U}
A naive approach to realize 4-bit Shampoo is applying the compression methods proposed in~\cite{Dettmers_ICLR_2022, Li_NIPS_2023} to  $\bm{L}_t$, $\bm{R}_t$, $\bm{L}_t^{-1/4}$, and $\bm{R}_t^{-1/4}$ in Shampoo (see~\eqref{eq:Shampoo_alg}).
A slightly improved approach is to quantize the four PD matrices excluding their diagonal elements, which are typically much larger than their non-diagonal counterparts due to the non-negativity of the elements in ${\rm diag}(\bm{G}_t\bm{G}_t^{\mathsf{T}})$ and ${\rm diag}(\bm{G}_t^{\mathsf{T}}\bm{G}_t)$. 

However, the naive approach can cause large quantization errors at 4-bit precision. This is because the quantization errors (or called perturbations) of quantizing $\bm{L}_t$ and $\bm{R}_t$ will transfer to $\bm{L}_t^{-1/4}$ and $\bm{R}_t^{-1/4}$. To verify this, we first introduce two criteria to evaluate the quantization errors of matrices. We do not use the elementwise criterion in~\cite{Dettmers_ICLR_2022}. Let $\bm{A}$ denote a 32-bit matrix, $g$ represent a transformation (can formed by quantization), and $f$ stand for a mapping, e.g., $f(\bm{A})\!=\!\bm{A}^{-1/4}$. Then we define the normwise relative error (NRE) and angle error (AE) in $f$ of $g$ at $\bm{A}$ as
\begin{align*}
\text{NRE}\!=\!\frac{\| f(\bm{A})-f(g(\bm{A}))\|_F}{\| f(\bm{A})\|_F}, \quad
\text{AE}\!=\!\arccos\left(\frac{\langle f(\bm{A}),f(g(\bm{A})) \rangle}{ (\|f(\bm{A})\|_F \|f(g(\bm{A}))\|_F}\right).
\end{align*}

We choose two PD matrices of order 1200. The first one $\bm{A}_1$ is derived from the real world. It is a preconditioner in 32-bit Shampoo combined with AdamW for training a Swin-Tiny model. The second one $\bm{A}_2\!=\!\bm{U}\bm{\Lambda}\bm{U}^{\mathsf{T}}$ is synthetic, constructed from a random orthogonal matrix $\bm{U}$ and a diagonal matrix $\bm{\Lambda}$ with only two distinct diagonal values. Table~\ref{tab:quantization-errors-inverse-root} shows the quantization errors in $f(\bm{A})\!=\!\bm{A}^{-1/4}$ of the naive approach at these two matrices, which are remarkably high. More analyses are given in Appendix~\ref{sec:app-quan}. The key point is that the singular values of $\bm{A}_i (i\!=\!1,2)$ follow a specific distribution (see Figure~\ref{fig:singular-values}). In this scenario, a slight perturbation of $\bm{A}_i$ will significantly alter its small  singular values, resulting in a drastic change to $\bm{A}_i^{-1/4}$.

To address this issue, we propose quantizing the eigenvector matrix of a preconditioner in Shampoo, rather than the preconditioner itself. Namely, a preconditioner $\bm{A}$ is a PD matrix, and its SVD is $\bm{U}\bm{\Lambda}\bm{U}^{\mathsf{T}}$, where $\bm{U}$ represents the eigenvector matrix and $\bm{\Lambda}$ denotes the singular value matrix.
Given that $\bm{\Lambda}$ is a diagonal matrix, we can focus on quantizing $\bm{U}$ using $\mathcal{Q}$ while leaving $\bm{\Lambda}$ unchanged. From Table~\ref{tab:quantization-errors-inverse-root}, one can observe that quantizing $\bm{U}$ can significantly reduce the quantization errors.
We will theoretically discuss the advantages of quantizing $\bm{U}$ compared to quantizing $\bm{A}$ in Section~\ref{sec:theroy}. In practice, the randomized SVD method~\cite{Halko_SIAM_2011} is adopted to compute the SVD of $\bm{A}$ efficiently, as shown in~\cite{JuiNan_NIPS_2023}. We want to highlight that quantizing the original $\bm{L}_t$ and $\bm{R}_t$ in Shampoo involves significant computational burdens to compute their inverse 4-th roots $\bm{L}_t^{-1/4}$ and  $\bm{R}_t^{-1/4}$, whereas quantizing the eigenvector matrices of $\bm{L}_t$ and $\bm{R}_t$ allows for rapid inverse root calculation. So the computational time required for both approaches is comparable (see Figure~\ref{fig:swin-tiny-cifar100}).

\begin{table}[H]
	\centering
	\caption{
		Quantization errors in $\bm{A}^{-1/4}$ of different quantization schemes at a PD matrix $\bm{A}$. We employ block-wise normalization with a block size of 64. $\bm{U}$ is the eigenvector matrix of $\bm{A}$, QM = quantized matrix, and OR = orthogonal rectification. 
	}
	\setlength{\tabcolsep}{3.5pt} 
	\renewcommand{\arraystretch}{0.90} 
	\begin{tabular}{c|ccccc||c|ccccc}
		\toprule
		\multicolumn{6}{c||}{Real-world $\bm{A}\!=\!\bm{A}_1$} & \multicolumn{6}{c}{Synthetic $\bm{A}\!=\!\bm{A}_2$} \\
		\midrule
		Mapping $\mathcal{R}$ & Bit & QM & OR & NRE $\downarrow$ & AE ($^\circ$) $\downarrow$ & Mapping $\mathcal{R}$ & Bit & QM & OR & NRE $\downarrow$ & AE ($^\circ$) $\downarrow$ \\
		\midrule
		\multirow{4}{*}{DT} & 8 & $\bm{A}$ & \ding{55} & 0.2192  & 8.3014 & \multirow{4}{*}{DT} & 8 & $\bm{A}$ & \ding{55} & 0.1896 & 10.877 \\
		& 4 & $\bm{A}$ & \ding{55} & 0.6241 & 17.319 & & 4 & $\bm{A}$ & \ding{55} & 0.4615 & 17.189 \\
		& 4 & $\bm{U}$ & \ding{55} & 0.0709 & 4.0426 & & 4 & $\bm{U}$ & \ding{55} & 0.1224 & 7.0144 \\
		& 4 & $\bm{U}$ & \ding{51} & 0.0455 & 2.5615 & & 4 & $\bm{U}$ & \ding{51} & 0.0878 & 4.9960 \\
		\midrule
		\multirow{4}{*}{Linear-2} & 8 & $\bm{A}$ & \ding{55} & 0.2164 & 7.9751 & \multirow{4}{*}{Linear-2} & 8 & $\bm{A}$ & \ding{55} & 0.1310 & 7.4717 \\
		& 4 & $\bm{A}$ & \ding{55} & 0.6243 & 17.293 & & 4 & $\bm{A}$ & \ding{55} & 0.4465 & 15.338 \\
		& 4 & $\bm{U}$ & \ding{55} & 0.0543 & 3.1066 & & 4 & $\bm{U}$ & \ding{55} & 0.0942 & 5.3998 \\
		& 4 & $\bm{U}$ & \ding{51} & 0.0343 & 1.9456 & & 4 & $\bm{U}$ & \ding{51} & 0.0669 & 3.8166 \\
		\bottomrule
	\end{tabular}
	\label{tab:quantization-errors-inverse-root}
\end{table}

\begin{figure}[t]
	\begin{minipage}{0.6\textwidth}
	\centering
	\subfigure[Real-world] {
		\includegraphics[scale=0.2]{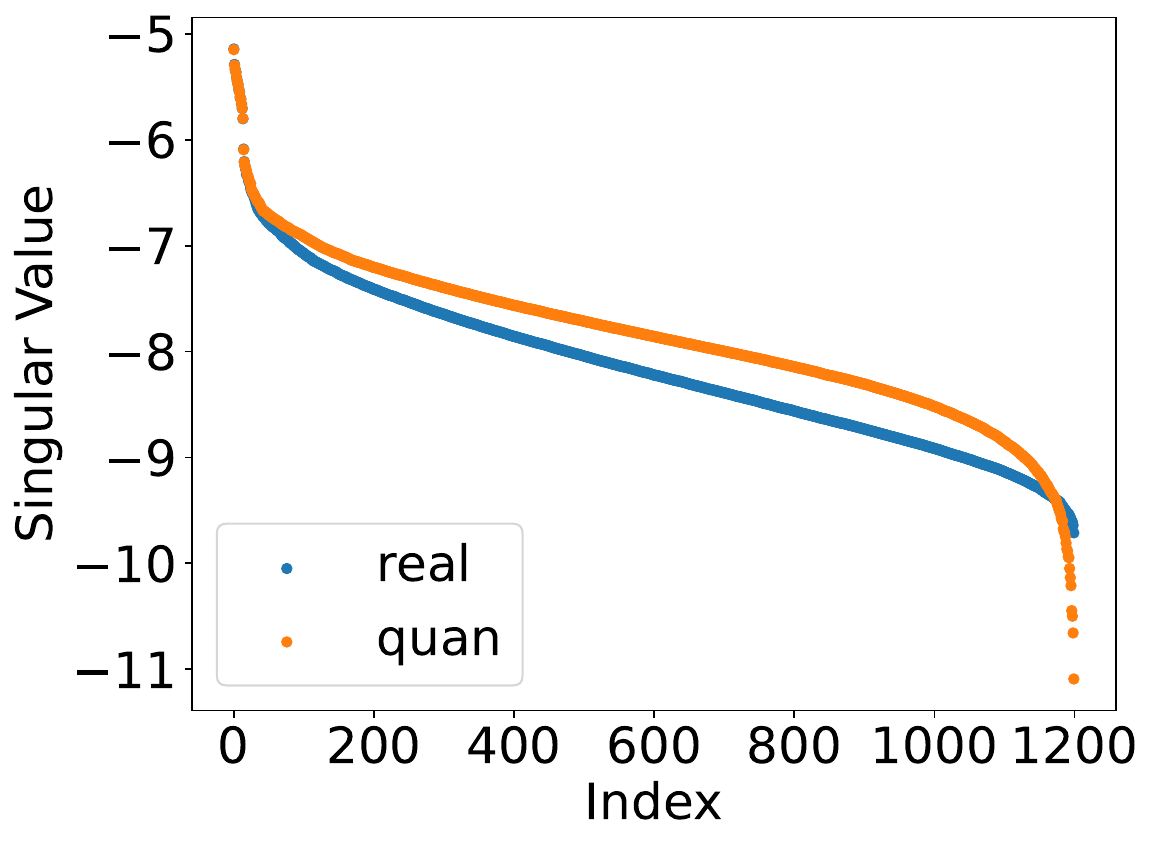}
	}
	\subfigure[Synthetic] {
		\includegraphics[scale=0.2]{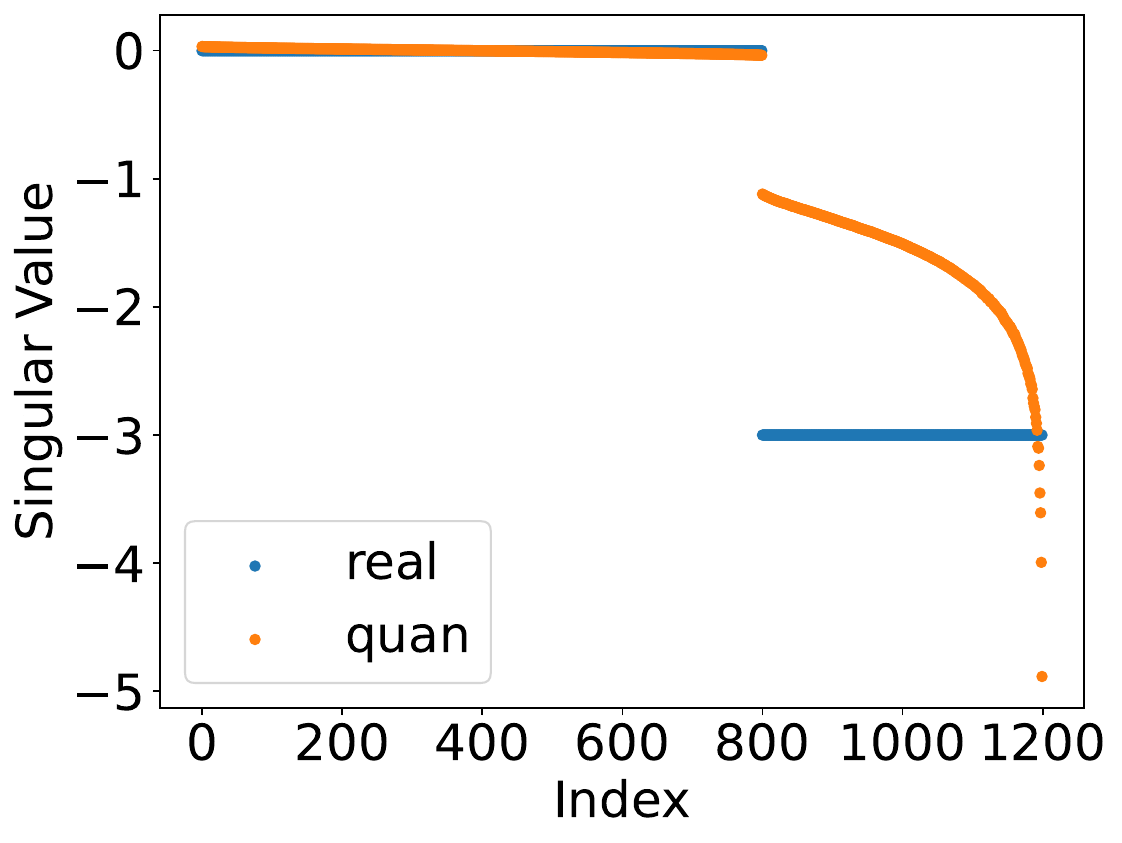}	
	}
	\caption{
		Singular value distributions of PD matrices (real) and their 4-bit compressions (quan) used in Table~\ref{tab:quantization-errors-inverse-root} with $\mathcal{R}$=DT, QM=$\bm{A}$. Singular values are shown on a $\log_{10}$ scale.
	}                                                                                                                                                                                                                      
	\label{fig:singular-values}
	\end{minipage}\hfill
	\begin{minipage}{0.38\textwidth}
	\centering
	\includegraphics[scale=0.2]{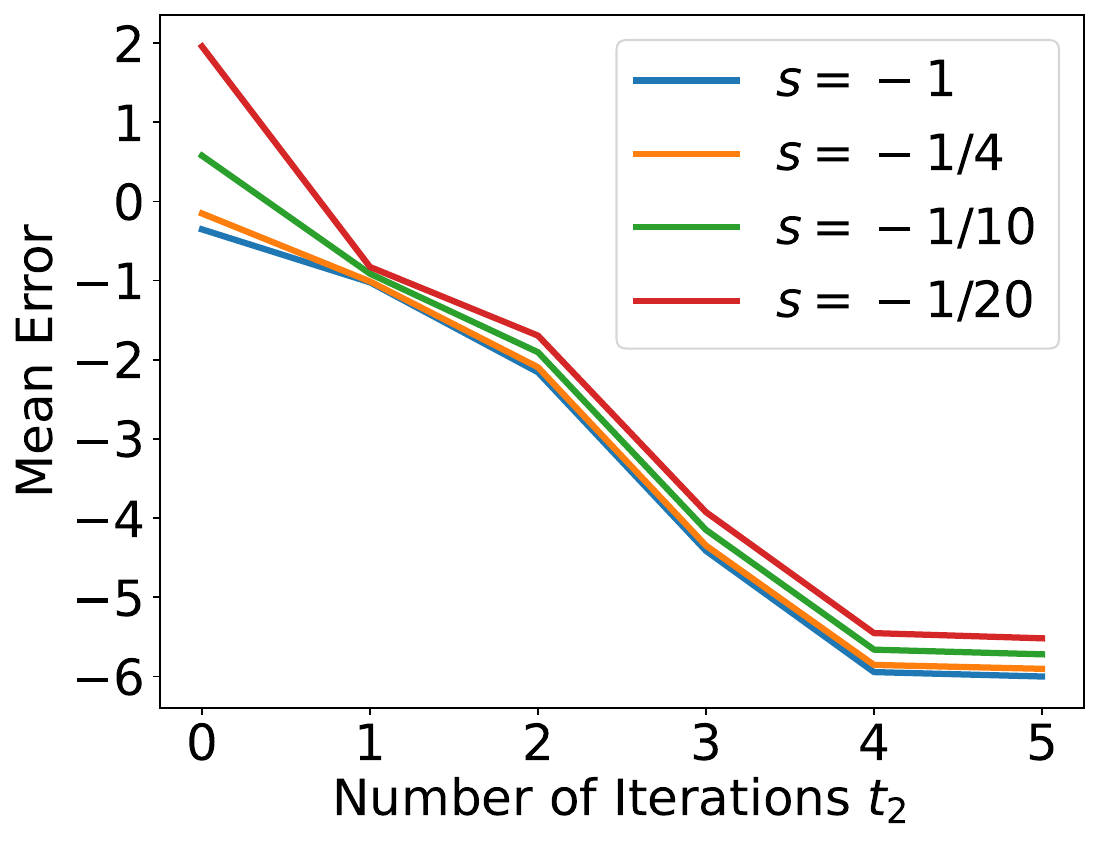}	
	\caption{
		Elementwise mean errors between $(\bm{V}_{t_2}\bm{\Lambda}^s\bm{V}_{t_2}^{\mathsf{T}})^{-1\!/\!s}(\bm{V}_{t_2}\bm{\Lambda}\bm{V}_{t_2}^{\mathsf{T}})$ and identity matrix $\bm{I}$. Mean errors are shown on a $\log_{10}$ scale.
	}
	\label{fig:inv-compute-err}
	\end{minipage}\hfill
\end{figure}

\subsection{Rectifying the Orthogonality of Eigenvector Matrices}
Let $\bm{A}$ be a PD matrix with SVD $\bm{U}\bm{\Lambda}\bm{U}^{\mathsf{T}}$.
Note that the eigenvector matrix $\bm{U}$ is orthogonal,  whereas $\bm{V}\!=\!\mathcal{D}(\mathcal{Q}(\bm{U}))$ may not be.
To further mitigate the quantization errors mentioned in Subsection~\ref{sec:metho-quan-U}, we propose employing Bj\"orck orthonormalization~\cite{Bjorck_SIAM_1971} to orthogonalize $\bm{V}$.
Particularly, given $\bm{V}_0\!=\!\bm{V}$, we iterate
\begin{align}
	\bm{V}_t \!=\! 1.5\bm{V}_{t-1}\!-\!0.5\bm{V}_{t-1}\bm{V}_{t-1}^{\mathsf{T}}\bm{V}_{t-1}, \label{eq:Bjorck-orthonormalization}
\end{align}
for $t_1\!\ge\!1$ times and take $\bm{V}_{t_1}$ as the rectified result. Equation~\eqref{eq:Bjorck-orthonormalization} can also be interpreted as the gradient descent of problem $\min_{\bm{V}}\| \bm{V}^{\mathsf{T}}\bm{V} \!-\! \bm{I} \|_F^2$ using a step size of 0.5, where $\bm{I}$ denotes the identity matrix.
We empirically find that only one iteration (i.e., $t_1\!=\!1$) is enough.
Table~\ref{tab:quantization-errors-inverse-root} illustrates the benefit of rectifying $\bm{V}$ into $\bm{V}_1$.

The update frequencies for the preconditioners and their inverse 4-th roots differ (see Algorithm~\ref{alg:low_bit_shampoo}). Given $\bm{V}$ and $\bm{\Lambda}$, we also require orthogonal rectification to compute $\bm{A}^s$ rapidly for any $s\in\mathbb{R}$. The reason is as follows. It is easy to compute $\bm{A}^s\!=\!\bm{U}\bm{\Lambda}^s\bm{U}^{\mathsf{T}}$ by definition.
However, $\bm{U}\bm{\Lambda}^s\bm{U}^{\mathsf{T}}$ can be very sensitive to the orthogonality of $\bm{U}$ for $s\!<\!0$, making $\bm{V}\bm{\Lambda}^s\bm{V}^{\mathsf{T}}$ largely deviate from $(\bm{V}\bm{\Lambda}\bm{V}^{\mathsf{T}})^s \approx \bm{A}^s$.
Similarly, we can approximate $\bm{A}^s$ by $\bm{V}_{t_2}\bm{\Lambda}^s\bm{V}_{t_2}^{\mathsf{T}}$ , where $\bm{V}_{t_2}$ is generated by~\eqref{eq:Bjorck-orthonormalization}.
Figure~\ref{fig:inv-compute-err} illustrates the elementwise mean errors between $(\bm{V}_{t_2}\bm{\Lambda}^s\bm{V}_{t_2}^{\mathsf{T}})^{-1\!/\!s}(\bm{V}_{t_2}\bm{\Lambda}\bm{V}_{t_2}^{\mathsf{T}})$ and $\bm{I}$ for various $s$ and $t_2$, where $\bm{A}$ is the real-world matrix used in Table~\ref{tab:quantization-errors-inverse-root}. Based on the observation from Figure~\ref{fig:inv-compute-err}, we set $t_2\!=\!4$ in our experiments.

\subsection{Selecting the Quantizer}
The quantizer $\mathcal{Q}$ is defined by the normalization operator $\mathcal{N}$ and mapping $\mathcal{R}$, and $\mathcal{N}$ is determined by $\mathbb{X}_i$. Since an eigenvector has a unit length, the elements in $\mathbb{X}_i$ should belong to the same column of an eigenvector matrix, i.e., they are from the same eigenvector. Instead of employing dynamic tree (DT) quantization as mapping $\mathcal{R}$, we recommend utilizing linear square (Linear-2) quantization as $\mathcal{R}$, particularly when $b\!=\!4$. Linear-2 quantization is defined as  
\begin{align}
	\mathcal{R}(j) = 
	\begin{cases}
		-\left(-1 + 2j/(2^b\!-\!1)\right)^2, \quad & j\!<\!2^{b\!-\!1}\!-\!1; \\
		\qquad\qquad 0,  & j\!=\!2^{b\!-\!1}\!-\!1; \\
		\left(-1\!+\! 2j/(2^b\!-\!1)\right)^2, \quad & j\!>\!2^{b\!-\!1}\!-\!1,
	\end{cases}
\end{align}
where $j\!\in\!\mathbb{T}_b\!=\!\{0, 1, \dots, 2^b\!-\!1\}$. As shown in Table~\ref{tab:quantization-errors-inverse-root}, Linear-2 quantization has lower quantization errors compared to DT quantization at 4-bit precision.

\subsection{Overall Algorithm}
We first describe the update processes of the preconditioners and their inverse 4-th roots in our 4-bit Shampoo. 
A preconditioner $\bm{A}$ is a PD matrix and its SVD is $\bm{U}\bm{\Lambda}\bm{U}^{\mathsf{T}}$.
We can compress $\bm{A}$ into a pair $(\bm{\lambda}, \overline{\bm{U}})=({\rm diag}(\bm{\Lambda}), \mathcal{Q}(\bm{U}))$ and decompress it into $(\bm{\Lambda}, \bm{V}) = ({\rm Diag}(\bm{\lambda}), \mathcal{D}(\overline{\bm{U}}))$.
Algorithm~\ref{alg:shampoo-preconditioner-update} (Preconditioner Update, PU) shows the update rule of $\bm{A}$.
Similarly, we compress $\widehat{\bm{A}} \approx \bm{A}^{-1/4}$ into a pair $(\bm{a}, \overline{\bm{A}})=({\rm diag}(\widehat{\bm{A}}), \mathcal{Q}(\widehat{\bm{A}}\!-\!{\rm Diag}(\bm{a})))$ and decompress it into ${\rm Diag}(\bm{a})+\mathcal{D}(\overline{\bm{A}})$.
Algorithm~\ref{alg:shampoo-preconditioner-root-update} (Preconditioner's Inverse 4-th Root Update, PIRU) gives the update rule of $\widehat{\bm{A}}$.
Based on the above update rules, we can summarize our 4-bit Shampoo in Algorithm~\ref{alg:low_bit_shampoo}.
Note that we omit some input parameters of ${\rm PU}$ and ${\rm PIRU}$ because they can be found in Algorithm~\ref{alg:low_bit_shampoo} in the same form.

\begin{minipage}[t]{.5\textwidth}
\begin{algorithm}[H]
	\caption{${\rm PU}(\bm{\lambda}, \overline{\bm{U}}, \bm{M})$}
	\label{alg:shampoo-preconditioner-update}
	\begin{algorithmic}[1]
	\REQUIRE{singular value vector $\bm{\lambda}$, quantized eigenvector matrix $\overline{\bm{U}}$}, $\bm{M}$, number of iterations $t_1$ for rectification, exponential decay rate $\beta\in(0,1 )$, $\mathcal{Q}$ and $\mathcal{D}$
	\STATE $\bm{\Lambda}={\rm Diag}(\bm{\lambda}), \bm{V} = \mathcal{D}(\overline{\bm{U}})$
	\STATE Rectify $\bm{V}$ by iterating~\eqref{eq:Bjorck-orthonormalization} $t_1$ times
	\STATE $\bm{A}=\beta\bm{V}\bm{\Lambda}\bm{V}^{\mathsf{T}}+(1\!-\!\beta)\bm{M}$
	\STATE Compute $\bm{A}=\bm{P}\bm{\Sigma}\bm{P}^{\mathsf{T}}$ by randomized SVD
	\RETURN ${\rm diag}(\bm{\Sigma}), \mathcal{Q}(\bm{P})$
	\end{algorithmic}
\end{algorithm}
\end{minipage}
\hfill
\begin{minipage}[t]{.48\textwidth}
\begin{algorithm}[H]
	\caption{${\rm PIRU}(\bm{\lambda}, \overline{\bm{U}})$}
	\label{alg:shampoo-preconditioner-root-update}
	\begin{algorithmic}[1]
		\REQUIRE{singular value vector $\bm{\lambda}$, quantized eigenvector matrix $\overline{\bm{U}}$}, number of iterations $t_2$ for rectification, dampening term $\epsilon\bm{I}$, $\mathcal{Q}$ and $\mathcal{D}$
		\vspace{0.2cm}
		\STATE $\bm{\Lambda}={\rm Diag}(\bm{\lambda}), \bm{V} = \mathcal{D}(\overline{\bm{U}})$
		\STATE Rectify $\bm{V}$ by iterating~\eqref{eq:Bjorck-orthonormalization} $t_2$ times
		\STATE $\widehat{\bm{A}}=\bm{V}(\bm{\Lambda}+\max\{\bm{\lambda}\}\epsilon\bm{I})^{-1/4}\bm{V}^{\mathsf{T}}$
		\STATE $\bm{a}={\rm diag}(\widehat{\bm{A}})$
		\RETURN $\bm{a}, \mathcal{Q}(\widehat{\bm{A}}-{\rm Diag}(\bm{a}))$
	\end{algorithmic}
\end{algorithm}
\end{minipage}

\begin{algorithm}[h]
	\caption{Practical 4-bit Shampoo}
	\label{alg:low_bit_shampoo}
	\begin{algorithmic}[1]
	\REQUIRE{$\bm{W}_0 \in \mathbb{R}^{m \times n}$, $\bm{L}_0=\epsilon\bm{I}_m$, $\bm{R}_0=\epsilon\bm{I}_n$, $\widehat{\bm{L}}_0=\bm{I}_m$, $\widehat{\bm{R}}_0=\bm{I}_n$, $\beta\in(0, 1)$, $t_1$, $t_2$, update interval $T_1$, update interval $T_2$, total number of steps $T$, first-order optimizer $\mathcal{F}$}, first-order optimizer state $\bm{s}_0=\bm{0}$, 4-bit quantizer $\mathcal{Q}$ and its corresponding dequantizer $\mathcal{D}$.																	\ENSURE{final parameter $\bm{W}_T$.}
		\STATE $\bm{\lambda}_{0,L}={\rm diag}(\bm{L}_0), \overline{\bm{U}}_{0,L}=\mathcal{Q}(\bm{I}_m); \quad \bm{\lambda}_{0,R}={\rm diag}(\bm{R}_0), \overline{\bm{U}}_{0,R}=\mathcal{Q}(\bm{I}_n)$
		\STATE ${\bm{l}}_0={\rm diag}(\widehat{\bm{L}}_0), \overline{\bm{L}}_0 = \mathcal{Q}(\bm{0}); \quad {\bm{r}}_0={\rm diag}(\widehat{\bm{R}}_0),\overline{\bm{R}}_0 = \mathcal{Q}(\bm{0})$
		\FOR{$t = 1, 2, \dots, T$}
		\STATE Receive loss function $\mathcal{L}_t : \mathbb{R}^{m \times n} \mapsto \mathbb{R}$ and compute gradient $\bm{G}_t=\nabla \mathcal{L}_t(\bm{W}_t)$
		\IF{$t\%T_1 \equiv 0$}
			\STATE $\bm{\lambda}_{t,L}, \overline{\bm{U}}_{t,L}\!=\!{\rm PU}(\bm{\lambda}_{t-1,L}, \overline{\bm{U}}_{t-1,L}, \bm{G}_t\bm{G}_t^{\mathsf{T}}); \: \bm{\lambda}_{t,R}, \overline{\bm{U}}_{t,R}\!=\!{\rm PU}(\bm{\lambda}_{t-1,R}, \overline{\bm{U}}_{t-1,R}, \bm{G}_t^{\mathsf{T}}\bm{G}_t)$
		\ELSE
			\STATE $\bm{\lambda}_{t,L}, \overline{\bm{U}}_{t,L} = \bm{\lambda}_{t-1,L}, \overline{\bm{U}}_{t-1,L}; \quad \bm{\lambda}_{t,R}, \overline{\bm{U}}_{t,R} = \bm{\lambda}_{t-1,R}, \overline{\bm{U}}_{t-1,R}$
		\ENDIF
		\IF{$t\%T_2 \equiv 0$}
			\STATE $\bm{l}_t, \overline{\bm{L}}_t = {\rm PIRU}(\bm{\lambda}_{t,L}, \overline{\bm{U}}_{t,L}); \quad \bm{r}_t, \overline{\bm{R}}_t = {\rm PIRU}(\bm{\lambda}_{t,R}, \overline{\bm{U}}_{t,R})$
		\ELSE
			\STATE $\bm{l}_t, \overline{\bm{L}}_t = \bm{l}_{t-1}, \overline{\bm{L}}_{t-1}; \quad \bm{r}_t, \overline{\bm{R}}_t = \bm{r}_{t-1}, \overline{\bm{R}}_{t-1}$
		\ENDIF
		\STATE $\widehat{\bm{L}}_t = {\rm Diag}(\bm{l}_t)+\mathcal{D}(\overline{\bm{L}}_t); \quad \widehat{\bm{R}}_t = {\rm Diag}(\bm{r}_t)+\mathcal{D}(\overline{\bm{R}}_t)$
		\STATE $\widehat{\bm{G}}_t=\widehat{\bm{L}}_t\bm{G}_t\widehat{\bm{R}}_t; \quad \widetilde{\bm{G}}_t=\widehat{\bm{G}}_t(\|\bm{G}_t\|_F / \|\widehat{\bm{G}}_t\|_F)$
		\STATE $\bm{W}_t, \bm{s}_t = \mathcal{F}(\bm{W}_{t-1}, \bm{s}_{t-1}, \widetilde{\bm{G}}_t)$
		\ENDFOR
	\end{algorithmic}
\end{algorithm}
\section{Theoretical Analysis}\label{sec:theroy}
In this section, we analyze why quantizing the eigenvector matrix of a preconditioner in Shampoo is better than quantizing the preconditioner itself under a certain singular value distribution.
Furthermore, we consider quantization as a perturbation and prove the convergence of the perturbed Shampoo (Algorithm~\ref{alg:app-perturbed_shampoo}) in Appendix~\ref{sec:app-converge}. The following lemma reveals some good properties of perturbing the eigenvector matrix of a PD matrix.

\begin{restatable}{lemma}{lemmaUperturb}
	\label{lem:lemma-U-perturb}
	Let $\bm{A}$ be a PD matrix whose SVD is $\bm{U}\bm{\Lambda}\bm{U}^{\mathsf{T}}$, where $\bm{U}\!=\![\bm{u}_i]$ is an orthogonal matrix and $\bm{\Lambda}\!=\!{\rm diag}([\lambda_i]^{\mathsf{T}})$ is a diagonal matrix. Given a perturbation $\Delta\bm{U}\!=\![\Delta\bm{u}_i]$ and $s\in\mathbb{R}$, we define $\bm{B}\!:=\!(\bm{U}\bm{\Lambda}\bm{U}^{\mathsf{T}})^s$ and $\Delta\bm{B}\!:=\!((\bm{U}\!+\!\Delta\bm{U})\bm{\Lambda}(\bm{U}\!+\!\Delta\bm{U})^{\mathsf{T}})^s \!-\!\bm{B}$.
	\begin{enumerate}[label=(\arabic*)]
		\item 
		If $\bm{U}\!+\!\Delta\bm{U}$ is orthogonal and there exists $\alpha\in\mathbb{R}$ such that $\| \Delta\bm{u}_i \|_2 \le \alpha$, then
		\begin{align*}
			\frac{\| \Delta\bm{B} \|_F}{\| \bm{B} \|_F} \le 2\alpha.
		\end{align*}
		\item 
		If $\bm{U}\!+\!\Delta\bm{U}$ is orthogonal and there exists $\beta\in\mathbb{R}$ such that $\langle \bm{u}_i, \bm{u}_i\!+\!\Delta\bm{u}_i \rangle \ge 1\!-\!\beta\ge0$, then
		\begin{align*}
			\frac{\langle \bm{B},\bm{B}\!+\!\Delta\bm{B} \rangle}{\| \bm{B} \|_F \| \bm{B}\!+\!\Delta\bm{B} \|_F} \ge (1\!-\!\beta)^2.
		\end{align*}
	\end{enumerate}
\end{restatable}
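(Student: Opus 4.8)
The plan is to exploit the single structural fact that makes the lemma work: since $\bm{U}+\Delta\bm{U}$ is orthogonal, $(\bm{U}+\Delta\bm{U})\bm{\Lambda}(\bm{U}+\Delta\bm{U})^{\mathsf{T}}$ is \emph{already} an SVD, with eigenvector matrix $\bm{U}+\Delta\bm{U}$ and singular value matrix $\bm{\Lambda}$. Hence, by the definition of matrix powers used in the paper, $\bm{B}=\bm{U}\bm{\Lambda}^s\bm{U}^{\mathsf{T}}$ and $\bm{B}+\Delta\bm{B}=(\bm{U}+\Delta\bm{U})\bm{\Lambda}^s(\bm{U}+\Delta\bm{U})^{\mathsf{T}}$. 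Using cyclicity of the trace together with $\bm{U}^{\mathsf{T}}\bm{U}=(\bm{U}+\Delta\bm{U})^{\mathsf{T}}(\bm{U}+\Delta\bm{U})=\bm{I}$, this immediately gives
\[
\|\bm{B}\|_F^2=\|\bm{B}+\Delta\bm{B}\|_F^2={\rm tr}(\bm{\Lambda}^{2s})=\sum_i\lambda_i^{2s},
\]
so the two denominators in part~(2) coincide and equal $\|\bm{B}\|_F^2$; this normalization is what cleans up both bounds.

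For part~(1), I would write $\Delta\bm{B}$ with an add-and-subtract step,
\[
\Delta\bm{B}=\Delta\bm{U}\,\bm{\Lambda}^s(\bm{U}+\Delta\bm{U})^{\mathsf{T}}+\bm{U}\,\bm{\Lambda}^s\Delta\bm{U}^{\mathsf{T}},
\]
apply the triangle inequality, and use orthogonal invariance of the Frobenius norm (right-multiplication by the orthogonal matrices $(\bm{U}+\Delta\bm{U})^{\mathsf{T}}$ and, after transposing, $\bm{U}$ leaves $\|\cdot\|_F$ unchanged) to bound each term by $\|\Delta\bm{U}\,\bm{\Lambda}^s\|_F$. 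Expanding columnwise, $\|\Delta\bm{U}\,\bm{\Lambda}^s\|_F^2=\sum_i\lambda_i^{2s}\|\Delta\bm{u}_i\|_2^2\le\alpha^2\sum_i\lambda_i^{2s}=\alpha^2\|\bm{B}\|_F^2$, so $\|\Delta\bm{B}\|_F\le2\alpha\|\bm{B}\|_F$, which is the claim.

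For part~(2), set $\bm{C}:=\bm{U}^{\mathsf{T}}(\bm{U}+\Delta\bm{U})=[c_{ij}]$, which is orthogonal. A short trace computation gives $\langle\bm{B},\bm{B}+\Delta\bm{B}\rangle={\rm tr}(\bm{\Lambda}^s\bm{C}\bm{\Lambda}^s\bm{C}^{\mathsf{T}})=\sum_{i,k}c_{ik}^2(\lambda_i\lambda_k)^s$. I would split this into the diagonal part $\sum_i c_{ii}^2\lambda_i^{2s}$ and the off-diagonal part $\sum_{i\ne k}c_{ik}^2(\lambda_i\lambda_k)^s$. Since $c_{ii}=\langle\bm{u}_i,\bm{u}_i+\Delta\bm{u}_i\rangle\ge1-\beta\ge0$, the diagonal part is at least $(1-\beta)^2\sum_i\lambda_i^{2s}=(1-\beta)^2\|\bm{B}\|_F^2$; the off-diagonal part is nonnegative because $c_{ik}^2\ge0$ and $(\lambda_i\lambda_k)^s>0$. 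Dividing by $\|\bm{B}\|_F\|\bm{B}+\Delta\bm{B}\|_F=\|\bm{B}\|_F^2$ finishes the proof. The one point I would flag as the place to be careful is the sign of $(\lambda_i\lambda_k)^s$ in the off-diagonal terms: for the case of interest $s=-1/4$ this factor is not obviously signed, and it is exactly the positive definiteness of $\bm{A}$ (so $\lambda_i,\lambda_k>0$) that forces $(\lambda_i\lambda_k)^s>0$ for every real $s$; everything else is a routine consequence of orthogonal invariance and the identity $\|\bm{B}\|_F=\|\bm{B}+\Delta\bm{B}\|_F$.
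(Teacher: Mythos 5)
Your proposal is correct and follows essentially the same route as the paper's proof: both rest on the observation that orthogonality of $\bm{U}+\Delta\bm{U}$ gives $\bm{B}+\Delta\bm{B}=(\bm{U}+\Delta\bm{U})\bm{\Lambda}^s(\bm{U}+\Delta\bm{U})^{\mathsf{T}}$, part (1) uses the identical two-term splitting with the triangle inequality and orthogonal invariance, and part (2) is the same trace expansion — your $\sum_{i,k}c_{ik}^2(\lambda_i\lambda_k)^s$ with $c_{ii}=1+\langle\bm{u}_i,\Delta\bm{u}_i\rangle$ is exactly the paper's expansion of $\langle\bm{B},\Delta\bm{B}\rangle$ after completing the square, and both drop the same nonnegative off-diagonal terms. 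Your packaging via the orthogonal transition matrix $\bm{C}=\bm{U}^{\mathsf{T}}(\bm{U}+\Delta\bm{U})$ is a bit tidier, and you are right to flag positivity of the $\lambda_i$ as the reason the discarded terms are nonnegative for every real $s$.
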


From Lemma~\ref{lem:lemma-U-perturb}, it is evident that the normwise relative error and angle error in $f(\bm{A})=\bm{A}^s$ of perturbing $\bm{U}$ at $\bm{A}=\bm{U}\bm{\Lambda}\bm{U}^{\mathsf{T}}$ are independent of $\bm{\Lambda}$ and $s$. Moreover, these errors are well-bounded under some mild conditions.
Empirically, for 4-bit quantization, $\alpha=0.1$ and $\beta=0.005$ roughly meet the conditions of Lemma~\ref{lem:lemma-U-perturb}, leading to $\frac{\| \Delta\bm{B} \|_F}{\| \bm{B} \|_F} \le 0.2$ and $	\frac{\langle \bm{B},\bm{B}+\Delta\bm{B} \rangle}{\| \bm{B} \|_F \| \bm{B}+\Delta\bm{B} \|_F} \ge 0.99$.

It is very complicated to generally analyze the perturbation in $f(\bm{A})=\bm{A}^s$ of perturbing $\bm{A}$.
Thus, we focus on perturbing the singular values of $\bm{A}$.
For simplicity, we assume that both $\bm{A}$ and $\bm{A}+\Delta\bm{A}$ have only two distinct singular values, where $\Delta\bm{A}$ is a perturbation of $\bm{A}$.
The following lemma gives the perturbation in $\bm{A}^s$ of perturbing the smaller singular value of $\bm{A}$.

\begin{restatable}{lemma}{lemmaSperturb}
	\label{lem:lemma-S-perturb}
	Let $\bm{A}$ be a PD matrix of order $m\!+\!n$ whose SVD is $\bm{U}\bm{\Lambda}\bm{U}^{\mathsf{T}}$, where $m,n\in\mathbb{N}_+$, $n=lm$, $\bm{U}=[\bm{u}_i]$ is an orthogonal matrix and $\bm{\Lambda}={\rm diag}([\lambda_i]^{\mathsf{T}})$ is a diagonal matrix. Assume that
	$\bm{\Lambda}={\rm diag}([c\lambda\bm{1}_{m\times1}^{\mathsf{T}}, \lambda\bm{1}_{n\times1}^{\mathsf{T}}]^{\mathsf{T}})$, $c\ge1$, and $\lambda>0$. Given a perturbation $\Delta\bm{\Lambda}={\rm diag}([\bm{0}_{m\times1}^{\mathsf{T}}, \Delta\bm{\lambda}_{n\times1}^{\mathsf{T}}]^{\mathsf{T}})$ and $s\in\mathbb{R}$, we define $\bm{B}\!:=\!(\bm{U}\bm{\Lambda}\bm{U}^{\mathsf{T}})^s$ and $\Delta\bm{B}\!:=\!(\bm{U}(\bm{\Lambda}\!+\!\Delta\bm{\Lambda})\bm{U}^{\mathsf{T}})^s\!-\!\bm{B}$.
	\begin{enumerate}[label=(\arabic*)]
		\item \label{it:lemma-S-perturb-1}
		If $\Delta\bm{\lambda}_{n\times1}=(k-1)\lambda\bm{1}_{n\times1}$ where $k>0$, then
		\begin{align*}
			\frac{\| \Delta\bm{B} \|_F}{\| \bm{B} \|_F} = \frac{\sqrt{l}|k^s-1|}{\sqrt{c^{2s}+l}} = h_1(s, l).
		\end{align*}
		Moreover, $h_1(s, l)$ decreases monotonically with $s$ over $(-\infty, 0)$ and increases monotonically with $l$ over $(0, +\infty)$.
		\item \label{it:lemma-S-perturb-2}
		If $\Delta\bm{\lambda}_{n\times1}=(tc-1)\lambda\bm{1}_{n\times1}$ where $t>0$, then
		\begin{align*}
			\frac{\langle \bm{B},\bm{B}+\Delta\bm{B} \rangle}{\| \bm{B} \|_F \| \bm{B}+\Delta\bm{B} \|_F} = \frac{lt^s+c^s}{\sqrt{(1+lt^{2s})(l+c^{2s})}} = h_2(l).
		\end{align*}
		Moreover, $h_2(l)$ decreases monotonically with $l$ over $(0, (c/t)^s]$ and increases monotonically with $l$ over $((c/t)^s, +\infty)$.
		\item \label{it:lemma-S-perturb-3}
		If $\Delta\bm{\lambda}_{n\times1}=(tc-1)\lambda\bm{1}_{n\times1}$ where $k=tc>0$ and $l=(c/t)^s$, then
		\begin{align*}
			\frac{\| \Delta\bm{B} \|_F}{\| \bm{B} \|_F} = \frac{|k^s-1|}{\sqrt{k^s+1}}, \quad \frac{\langle \bm{B},\bm{B}+\Delta\bm{B} \rangle}{\| \bm{B} \|_F \| \bm{B}+\Delta\bm{B} \|_F} = \frac{2}{\sqrt{2+k^s+1/k^s}}.
		\end{align*}
	\end{enumerate}
\end{restatable}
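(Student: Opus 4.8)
The plan is to collapse everything onto diagonal matrices by conjugating with the fixed orthogonal matrix $\bm{U}$. Since $\Delta\bm{\Lambda}$ is diagonal and $\bm{\Lambda}\!+\!\Delta\bm{\Lambda}$ stays positive definite (its diagonal entries $c\lambda$ and $k\lambda$, resp.\ $tc\lambda$, are all positive), $\bm{U}(\bm{\Lambda}\!+\!\Delta\bm{\Lambda})\bm{U}^{\mathsf{T}}$ is an orthogonal diagonalization, and since $\bm{U}\bm{\Lambda}'^s\bm{U}^{\mathsf{T}}$ is invariant under simultaneous reordering of the columns of $\bm{U}$ and the diagonal of any $\bm{\Lambda}'$, the convention $\bm{A}^s\!=\!\bm{U}\bm{\Lambda}^s\bm{U}^{\mathsf{T}}$ gives $\bm{B}\!=\!\bm{U}\bm{\Lambda}^s\bm{U}^{\mathsf{T}}$ and $\bm{B}\!+\!\Delta\bm{B}\!=\!\bm{U}(\bm{\Lambda}\!+\!\Delta\bm{\Lambda})^s\bm{U}^{\mathsf{T}}$. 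Orthogonality of $\bm{U}$ and cyclicity of the trace then yield
\begin{align*}
	\|\bm{B}\|_F^2={\rm tr}(\bm{\Lambda}^{2s}),\quad \|\bm{B}\!+\!\Delta\bm{B}\|_F^2={\rm tr}((\bm{\Lambda}\!+\!\Delta\bm{\Lambda})^{2s}),\quad \langle\bm{B},\bm{B}\!+\!\Delta\bm{B}\rangle={\rm tr}(\bm{\Lambda}^s(\bm{\Lambda}\!+\!\Delta\bm{\Lambda})^s),
\end{align*}
together with $\|\Delta\bm{B}\|_F^2\!=\!{\rm tr}(((\bm{\Lambda}\!+\!\Delta\bm{\Lambda})^s\!-\!\bm{\Lambda}^s)^2)$. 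Each such trace is a sum of $m$ equal terms coming from the block of value $c\lambda$ plus $n\!=\!lm$ equal terms from the block of value $\lambda$ (resp.\ $k\lambda$ or $tc\lambda$). Plugging in the entries, cancelling the common factors $\lambda^{2s}$ and $m$, and rewriting $c^{2s}\!=\!(c^s)^2$, $(tc)^s\!=\!t^sc^s$, $(tc)^{2s}\!=\!t^{2s}c^{2s}$ produces the stated closed forms $h_1(s,l)$ in~\ref{it:lemma-S-perturb-1} and $h_2(l)$ in~\ref{it:lemma-S-perturb-2} after routine algebra.

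For the monotonicity in~\ref{it:lemma-S-perturb-1} I would avoid calculus: on $(-\infty,0)$ the map $s\!\mapsto\!k^s$ is monotone and lies entirely on one side of $1$, tending to $1$ as $s\!\to\!0^-$, so $|k^s\!-\!1|$ is non-increasing in $s$; and since $c\!\ge\!1$, the denominator $\sqrt{c^{2s}\!+\!l}$ is non-decreasing in $s$, so $h_1(s,l)$ is non-increasing in $s$. Monotone increase in $l$ is immediate from $\frac{d}{dl}\frac{l}{c^{2s}+l}=\frac{c^{2s}}{(c^{2s}+l)^2}>0$.

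The crux is the monotonicity in~\ref{it:lemma-S-perturb-2}. Expanding $(lt^s\!+\!c^s)^2$ and subtracting from $(1\!+\!lt^{2s})(l\!+\!c^{2s})$ gives the identity
\begin{align*}
	h_2(l)^2 = 1 - \frac{l\,(t^sc^s-1)^2}{l^2t^{2s}+l(1+t^{2s}c^{2s})+c^{2s}}.
\end{align*}
Dividing the fraction's numerator and denominator by $l$, the whole $l$-dependence collapses into $g(l):=lt^{2s}+c^{2s}/l$, the remaining terms being constants in $l$. By AM--GM, $g$ is strictly decreasing on $(0,(c/t)^s]$ and strictly increasing on $[(c/t)^s,+\infty)$ with minimum at $l=(c/t)^s$ (where $lt^{2s}=c^{2s}/l$); hence the subtracted fraction is increasing then decreasing, so the positive quantity $h_2(l)$ is decreasing on $(0,(c/t)^s]$ and increasing on $((c/t)^s,+\infty)$. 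Finally,~\ref{it:lemma-S-perturb-3} is pure substitution: setting $k\!=\!tc$ and $l\!=\!(c/t)^s$ in $h_1$ and $h_2$ and simplifying with $t^sc^s\!=\!(tc)^s\!=\!k^s$ and $2\!+\!k^s\!+\!1/k^s\!=\!(k^s\!+\!1)^2/k^s$ yields $\frac{|k^s-1|}{\sqrt{k^s+1}}$ and $\frac{2}{\sqrt{2+k^s+1/k^s}}$. I expect the only non-mechanical step to be spotting the identity for $h_2(l)^2$; once $g(l)=lt^{2s}+c^{2s}/l$ is isolated, everything else reduces to elementary one-variable monotonicity.
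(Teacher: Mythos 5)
Your proof is correct, and the derivation of the closed forms is essentially the paper's: conjugate by the fixed $\bm{U}$, reduce every norm and inner product to a trace of a diagonal matrix, and sum over the two blocks of sizes $m$ and $n=lm$ (your extra remark that $\bm{\Lambda}+\Delta\bm{\Lambda}$ stays PD and that the matrix power is independent of the ordering of the diagonalization is a point the paper glosses over). Where you genuinely diverge is in the monotonicity arguments. For part (1) the paper differentiates $g_1(s)=(k^s-1)^2/(c^{2s}+l)$ and does a case split on $k\gtrless 1$; your observation that $|k^s-1|$ is non-increasing on $(-\infty,0)$ while $\sqrt{c^{2s}+l}$ is non-decreasing (using $c\ge 1$) gives the same weak monotonicity with no calculus. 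For part (2) the paper applies the quotient rule to $g_2(l)=h_2(l)^2$, obtains $g_2'(l)=\bigl((t^s-t^{2s}c^s)^2l^2-(c^s-t^sc^{2s})^2\bigr)/(\cdots)^2$, and locates the critical point $l=(c/t)^s$ after separately disposing of the degenerate cases $s=0$ or $tc=1$; your identity $h_2(l)^2=1-l(t^sc^s-1)^2/\bigl(l^2t^{2s}+l(1+t^{2s}c^{2s})+c^{2s}\bigr)$, followed by dividing through by $l$ to expose $g(l)=lt^{2s}+c^{2s}/l$ and invoking AM--GM, reaches the same conclusion more transparently and absorbs the degenerate cases automatically (the subtracted fraction is then identically zero, so $h_2\equiv 1$, consistent with the weak monotonicity claimed). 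Part (3) is pure substitution in both. Your route buys shorter, more robust arguments for the analytic steps; the paper's derivative computations are more mechanical but make the strictness of the monotonicity (outside the degenerate cases) explicit.
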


\vspace{-0.3cm}
Let us make some comments on the above lemma.
First, from Lemma~\ref{lem:lemma-S-perturb}\ref{it:lemma-S-perturb-1} we have $h_1(1, l)=\frac{\| \Delta\bm{A} \|_F}{\| \bm{A} \|_F}=\frac{\sqrt{l}|k-1|}{\sqrt{c^2+l}}$. If $k \ge 1$, $\frac{\| \Delta\bm{A} \|_F}{\| \bm{A} \|_F}=\frac{\|\Delta\bm{\Lambda}\|_F}{\|\bm{\Lambda}\|_F}$ is bounded by $\frac{k}{c}\sqrt{l}=t\sqrt{l}$.
Second, if $k=tc \ge 1$ and $s < 0$, one can deduce $h_2(l) \ge \sqrt{lt^{2s} / (1+lt^{2s})}$ from Lemma~\ref{lem:lemma-S-perturb}\ref{it:lemma-S-perturb-2}, which indicates that a small $lt^{2s}$ is needed to achieve small $h_2(l)$.
We can set $t=0.02$ to simulate 4-bit quantization.
Based on Lemma~\ref{lem:lemma-U-perturb} and Lemma~\ref{lem:lemma-S-perturb}\ref{it:lemma-S-perturb-3}, we have the following proposition.

\begin{restatable}{proposition}{mainproposition}
	\label{thm:main-proposition}
	Let $\bm{A}$ be a PD matrix of order $m\!+\!n$ whose SVD is $\bm{U}\bm{\Lambda}\bm{U}^{\mathsf{T}}$, where $m,n\in\mathbb{N}_+$, $n\!=\!lm$, $\bm{U}\!=\![\bm{u}_i]$ is an orthogonal matrix, $\bm{\Lambda}\!=\!{\rm diag}([c\lambda\bm{1}_{m\times1}^{\mathsf{T}}, \lambda\bm{1}_{n\times1}^{\mathsf{T}}]^{\mathsf{T}})$,
	$c\!\ge\!1000$, and $\lambda\!>\!0$. Given $\Delta\bm{U}\!=\![\Delta\bm{u}_i]$, $\Delta\bm{\Lambda}\!=\!{\rm diag}([\bm{0}_{m\times1}^{\mathsf{T}}, \Delta\bm{\lambda}_{n\times1}^{\mathsf{T}}]^{\mathsf{T}})$, and $s\!\le\! -0.25$, we define $\bm{B}\!:=\!(\bm{U}\bm{\Lambda}\bm{U}^{\mathsf{T}})^s$, $\bm{B}_1\!:=\!((\bm{U}+\Delta\bm{U})\bm{\Lambda}(\bm{U}\!+\!\Delta\bm{U})^{\mathsf{T}})^s$, and $ \bm{B}_2\!:=\!(\bm{U}(\bm{\Lambda}\!+\!\Delta\bm{\Lambda})\bm{U}^{\mathsf{T}})^s$. If $\bm{U}+\Delta\bm{U}$ is orthogonal, $\| \Delta\bm{u}_i \|_2  \le 0.1, \langle \bm{u}_i, \Delta\bm{u}_i \rangle \!\ge\! -0.005$, $\Delta\bm{\lambda}_{n\times1}\!=\!(0.02c\!-\!1)\lambda\bm{1}_{n\times1}$, and $l\!=\!(c/0.02)^s$, then
	\begin{align*}
		2\frac{\| \bm{B}_1-\bm{B} \|_F}{\| \bm{B} \|_F} \!\le\! 0.4 \!\le\! \frac{\| \bm{B}_2-\bm{B} \|_F}{\| \bm{B} \|_F}, \quad 6\left(\!1-\frac{\langle \bm{B},\bm{B}_1 \rangle}{\| \bm{B} \|_F \| \bm{B}_1 \|_F} \right) \!\le\! 0.06 \!\le\! \left(\!1-\frac{\langle \bm{B},\bm{B}_2 \rangle}{\| \bm{B} \|_F \| \bm{B}_2 \|_F} \right).
	\end{align*}
\end{restatable}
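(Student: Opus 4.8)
The plan is to assemble the proposition from the two lemmas already proven, together with the explicit monotonicity facts recorded in their statements and in the ``comments'' preceding the proposition. The left-hand bounds (on $\bm{B}_1$, the eigenvector-perturbation quantity) follow directly from Lemma~\ref{lem:lemma-U-perturb}: under the hypotheses $\|\Delta\bm{u}_i\|_2 \le 0.1$ we get $\frac{\|\bm{B}_1-\bm{B}\|_F}{\|\bm{B}\|_F} \le 2\cdot 0.1 = 0.2$, so $2\frac{\|\bm{B}_1-\bm{B}\|_F}{\|\bm{B}\|_F}\le 0.4$; and from $\langle\bm{u}_i,\Delta\bm{u}_i\rangle \ge -0.005$ we get $\langle\bm{u}_i,\bm{u}_i+\Delta\bm{u}_i\rangle \ge 1-0.005$, hence by part (2) of that lemma $\frac{\langle\bm{B},\bm{B}_1\rangle}{\|\bm{B}\|_F\|\bm{B}_1\|_F}\ge (1-0.005)^2 \ge 0.99$, so $6(1-\frac{\langle\bm{B},\bm{B}_1\rangle}{\|\bm{B}\|_F\|\bm{B}_1\|_F})\le 6\cdot 0.01 = 0.06$. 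These require nothing beyond plugging the numerical constants in.

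For the right-hand bounds (on $\bm{B}_2$, the singular-value-perturbation quantity) I would invoke Lemma~\ref{lem:lemma-S-perturb}\ref{it:lemma-S-perturb-3}, which applies exactly because the hypotheses here match: $\Delta\bm{\lambda}_{n\times1}=(0.02c-1)\lambda\bm{1}_{n\times1}$ is the case $t=0.02$, so $k=tc=0.02c$, and $l=(c/0.02)^s=(c/t)^s$. That lemma gives the two closed forms
\begin{align*}
\frac{\|\bm{B}_2-\bm{B}\|_F}{\|\bm{B}\|_F} = \frac{|k^s-1|}{\sqrt{k^s+1}}, \qquad 1-\frac{\langle\bm{B},\bm{B}_2\rangle}{\|\bm{B}\|_F\|\bm{B}_2\|_F} = 1-\frac{2}{\sqrt{2+k^s+1/k^s}},
\end{align*}
with $k = 0.02c$. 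So the remaining task is purely to show that for $c\ge 1000$ and $s\le -0.25$ one has $k = 0.02c \ge 20$, hence $k^s \le 20^{-0.25} < 1$, and therefore both quantities are bounded below by the claimed constants: $\frac{|k^s-1|}{\sqrt{k^s+1}} = \frac{1-k^s}{\sqrt{1+k^s}} \ge 0.4$ and $1-\frac{2}{\sqrt{2+k^s+1/k^s}} \ge 0.06$.

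The key observation making this clean is monotonicity in $k^s$: write $x = k^s \in (0,1]$. The function $g_1(x)=\frac{1-x}{\sqrt{1+x}}$ is decreasing on $(0,1]$, so $g_1(x)\ge g_1(20^{-0.25})$; similarly $g_2(x)=1-\frac{2}{\sqrt{2+x+1/x}}$ is decreasing in $x$ on $(0,1]$ (since $x+1/x$ is decreasing there), so $g_2(x)\ge g_2(20^{-0.25})$. Because $c\ge 1000 \Rightarrow 0.02c\ge 20$ and $s\le -0.25 \Rightarrow k^s = (0.02c)^s \le 20^{-0.25} =: x_0$, I reduce everything to checking the two numerical inequalities $g_1(x_0)\ge 0.4$ and $g_2(x_0)\ge 0.06$ at the single value $x_0 = 20^{-0.25} \approx 0.4729$. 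One computes $g_1(x_0) = \frac{1-0.4729}{\sqrt{1.4729}} \approx \frac{0.5271}{1.2136}\approx 0.434 \ge 0.4$ and $g_2(x_0) = 1 - \frac{2}{\sqrt{2 + 0.4729 + 2.1146}} = 1-\frac{2}{\sqrt{4.5875}} \approx 1 - \frac{2}{2.1419}\approx 0.0663 \ge 0.06$, both with a little room to spare.

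The only mild subtlety — the ``main obstacle,'' though it is minor — is being careful that the worst case over the allowed parameter ranges really is attained at $k^s = x_0$: one must confirm that as $(c,s)$ range over $c\ge 1000$, $s\le -0.25$, the quantity $k^s=(0.02c)^s$ attains its \emph{supremum} $x_0$ (not infimum), since $g_1,g_2$ are decreasing, so the infimum of $g_1,g_2$ over the parameter region corresponds to the supremum of $x$. Since $0.02c \ge 20 > 1$, raising to a more negative power only decreases $(0.02c)^s$, and increasing $c$ also decreases it; hence $\sup (0.02c)^s = 20^{-0.25} = x_0$, attained in the limit $c\to 1000^+$, $s\to -0.25^+$. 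This justifies evaluating at $x_0$ and gives the stated chain of inequalities; combining with the $\bm{B}_1$ bounds from the first paragraph completes the proof.
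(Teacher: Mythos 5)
Your proposal is correct and follows essentially the same route as the paper's own proof: the $\bm{B}_1$ bounds come from plugging $\alpha=0.1$ and $\beta=0.005$ into Lemma~\ref{lem:lemma-U-perturb}, and the $\bm{B}_2$ bounds come from Lemma~\ref{lem:lemma-S-perturb}\ref{it:lemma-S-perturb-3} with $x=k^s=(0.02c)^s\in(0,20^{-1/4}]$, followed by the same monotonicity argument and numerical evaluation at $x_0=20^{-1/4}$. The only cosmetic difference is that you phrase the second bound via $g_2=1-f_2$ being decreasing where the paper says $f_2$ is increasing, and you are slightly more explicit about why the supremum of $x$ over the parameter range is $20^{-1/4}$ (which is in fact attained at $c=1000$, $s=-0.25$, not only in the limit).
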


\vspace{-0.2cm}	
Proposition~\ref{thm:main-proposition} requires very strong assumptions. Nevertheless, it provides insight into why quantizing $\bm{A}$ can result in a greater normwise relative error and angle error in $\bm{A}^s$, compared to quantizing $\bm{U}$. Complete proofs of Lemma~\ref{lem:lemma-U-perturb}, Lemma~\ref{lem:lemma-S-perturb}, and Proposition~\ref{thm:main-proposition} can be found in Appendix~\ref{sec:app-proofs}.
\section{Experiments}\label{sec:experi-results}
In this section, we compare our 4-bit Shampoo combined with SGDM or AdamW to their 32-bit counterparts, as well as the first-order optimizers on various image classification tasks. See more experimental results on image classification and natural language modeling tasks in Appendix~\ref{sec:app-experi-results}.

\textbf{Models, datasets, and hyperparameters.} We train VGG19~\cite{Simonyan_ICLR_2015}, ResNet34~\cite{He_CVPR_2016}, ViT-Small~\cite{Dosovitskiy_ICLR_2021}, and Swin-Tiny~\cite{Liu_ICCV_2021} on the CIFAR-100~\cite{Alex_CIFAR_2009} and Tiny-ImageNet~\cite{Ya_TinyImageNet_2015} datasets with one RTX3060Ti GPU, and train ResNet50 and ViT-Base/32 on the ImageNet-1k dataset~\cite{Russakovsky_ImageNet_2015} with one A800 GPU.
For hyperparameter settings, we mainly follow~\cite{Yong_CVPR_2023} to train CNNs and~\cite{Lee_arxiv_2021, Zhou_ICLR_2023} to train vision transformers. 
For all the tasks, we keep the common hyperparameters of optimizers the same values. See Appendix~\ref{sec:app-experi-details} for experimental details.

\begin{table}[h]
	\centering	
	\setlength{\tabcolsep}{4pt} 
	\renewcommand{\arraystretch}{0.6} 
	\caption{
		Performance, wall-clock time and memory cost on various image classification tasks. TA = test accuracy, WCT = wall-clock time, and TMC = total GPU memory cost.
	}
	\begin{tabular}{c|c|cccc}
		\toprule
		Dataset & Model & Optimizer & TA (\%) & WCT (min) & TMC (MB) \\
		\midrule
		\multirow{18}{*}{CIFAR-100} & \multirow{3.5}{*}{VGG19} & SGDM & 74.14 & 97.70 & 512.17 \\
		& & SGDM + 32-bit Shampoo & 74.54 & 84.45 & 979.13 \\
		& & SGDM + 4-bit Shampoo & 74.74 & 92.51 & 577.14 \\
		\cmidrule{2-6}
		& \multirow{3.5}{*}{ResNet34} & SGDM & 78.98 & 170.1 & 822.03 \\
		& & SGDM + 32-bit Shampoo & 79.71 & 147.2 & 1441.8 \\
		& & SGDM + 4-bit Shampoo & 79.17 & 155.8 & 908.40 \\
		\cmidrule{2-6}
		& \multirow{3.5}{*}{ViT-Small} & AdamW & 74.34 & 668.1 & 2720.0 \\
		& & AdamW + 32-bit Shampoo & 77.50 & 498.7 & 3252.0 \\
		& & AdamW + 4-bit Shampoo & 77.22 & 510.8 & 2791.7 \\
		\cmidrule{2-6}
		& \multirow{3.5}{*}{Swin-Tiny} & AdamW & 76.69 & 318.6 & 1465.8 \\
		& & AdamW + 32-bit Shampoo & 79.34 & 260.8 & 2036.0 \\	
		& & AdamW + 4-bit Shampoo & 78.63 & 273.3 & 1543.9 \\
		\midrule
		\multirow{18}{*}{Tiny-ImageNet} & \multirow{3.5}{*}{VGG19} & SGDM & 61.53 & 172.0 & 1062.3 \\
		& & SGDM + 32-bit Shampoo & 63.39 & 136.5 & 1531.9 \\
		& & SGDM + 4-bit Shampoo & 62.84 & 143.8 & 1127.3 \\
		\cmidrule{2-6}
		& \multirow{3.5}{*}{ResNet34} & SGDM & 67.10 & 432.1 & 2304.0 \\
		& & SGDM + 32-bit Shampoo & 67.90 & 313.0 & 2924.3 \\
		& & SGDM + 4-bit Shampoo & 67.95 & 329.3 & 2390.4 \\
		\cmidrule{2-6}
		& \multirow{3.5}{*}{ViT-Small} & AdamW & 54.66 & 1274 & 2730.1 \\
		& & AdamW + 32-bit Shampoo & 57.11 & 953.9 & 3261.1 \\
		& & AdamW + 4-bit Shampoo & 57.15 & 970.3 & 2801.9 \\
		\cmidrule{2-6}
		& \multirow{3.5}{*}{Swin-Tiny} & AdamW & 58.77 & 701.9 & 1789.9 \\
		& & AdamW + 32-bit Shampoo & 61.74 & 565.3 & 2362.8 \\
		& & AdamW + 4-bit Shampoo & 62.24 & 582.7 & 1868.1 \\
		\midrule
		\multirow{8.5}{*}{ImageNet-1k} & \multirow{3.5}{*}{ResNet50} & SGDM & 76.70 & 2134 & 11307 \\
		& & SGDM + 32-bit Shampoo & 77.07 & 1910 & 11937 \\
		& & SGDM + 4-bit Shampoo & 76.92 & 1970 & 11396 \\
		\cmidrule{2-6}
		& \multirow{3.5}{*}{ViT-Base/32} & AdamW & 72.87 & 2190 & 10600 \\
		& & AdamW + 32-bit Shampoo & 75.03 & 1774 & 12134 \\
		& & AdamW + 4-bit Shampoo & 74.78 & 1770 & 10804 \\
		\bottomrule
	\end{tabular}
	\label{tab:main-experi}
\end{table}

\begin{figure}[h]
	\centering
	\includegraphics[width=0.99\textwidth]{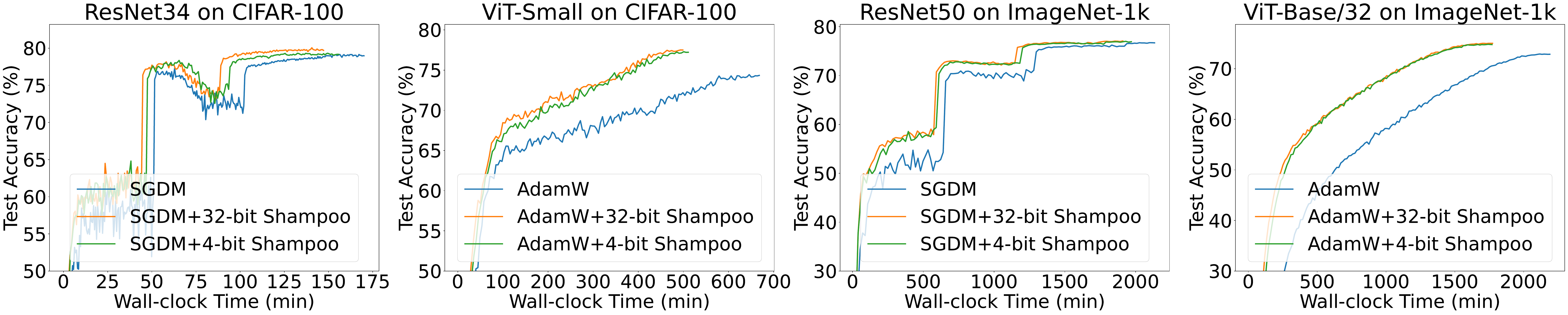}
	\caption{
		Visualization of test accuracies on the CIFAR-100 and ImageNet-1k datasets.
	}
	\label{fig:main-experi}
\end{figure}

\textbf{Main results.} We show the performance, wall-clock time, and memory cost in Table~\ref{tab:main-experi}. First-order optimizers run 1.2x to 1.5x epochs, resulting in longer wall-clock time, yet yielding lower test accuracies compared to second-order optimizers. In comparison to 32-bit Shampoo, our 4-bit Shampoo shows comparable test accuracies with differences ranging from -0.7\% to 0.5\%, increases in wall-clock time varying from -0.2\% to 9.5\%, and memory savings of 4.5\% to 41\%. Compared to the first-order optimizers, the memory costs of our 4-bit Shampoo only rise by 0.8\% to 12.7\%. This represents a significant advancement in the utilization of second-order optimizers. Following~\cite{Li_NIPS_2023}, we report the total peak GPU memory consumption rather than the optimizer's peak GPU memory consumption. Our main focus is on quantizing the states for constructing preconditioners and their inverse roots, which are approximately 7x smaller for 4-bit Shampoo compared to 32-bit Shampoo (see Appendix~\ref{sec:app-experi-details}). Figure~\ref{fig:main-experi} shows the test accuracy curves on the CIFAR-100 and ImageNet-1k datasets. The test accuracy curves of 4-bit Shampoo and 32-bit Shampoo are very close, both of which are above the test accuracy curves of the first-order optimizers.

\begin{table}[t]
	\centering
	\caption{
		Ablation study on the impact of different quantization techniques to Swin-Tiny training on the CIFAR-100 dataset. $\bm{U}$ is the eigenvector matrix of a preconditioner $\bm{A}$. QM = quantized matrix, OR = orthogonal rectification in Algorithm~\ref{alg:shampoo-preconditioner-update}, TL = training loss, and TA = test accuracy.
	}
	\renewcommand{\arraystretch}{0.55} 
	\begin{tabular}{ccccc|ccccc}
		\toprule
		\multicolumn{5}{c|}{4-bit} & \multicolumn{5}{c}{3-bit} \\
		\midrule
		Mapping $\mathcal{R}$ & QM & OR & TL & TA (\%) & Mapping $\mathcal{R}$ & QM & OR & TL & TA (\%) \\
		\midrule
		Linear-2 & $\bm{A}$ & \ding{55} & 1.631 & 76.95 & Linear-2 & $\bm{A}$ & \ding{55} & 1.648 & 76.70 \\
		DT & $\bm{U}$ & \ding{55} & 1.569 & 78.70 & DT & $\bm{U}$ & \ding{55} & NaN & - \\
		Linear-2 & $\bm{U}$ & \ding{55} & 1.566 & 78.22 & Linear-2 & $\bm{U}$ & \ding{55} & NaN & - \\
		Linear-2 & $\bm{U}$ & \ding{51} & 1.551 & 78.63 & Linear-2 & $\bm{U}$ & \ding{51} & 1.572 & 78.53 \\
		\bottomrule
	\end{tabular}
	\label{tab:ablation-experi-cifar100}
	\vspace{-0.3cm}
\end{table}

\textbf{Ablations.} We investigate the effectiveness of our proposed quantization techniques. Table~\ref{tab:ablation-experi-cifar100} indicates that quantizing the eigenvector matrix of a preconditioner is crucial for $b$-bit ($b=3,4$) Shampoo to maintain 32-bit performance, and orthogonal rectification is highly beneficial for 3-bit Shampoo.   As for quantization mapping, linear square (Linear-2) quantization is comparable to dynamic tree (DT) quantization. We further apply our 4-bit quantization techniques to K-FAC~\cite{Martens_ICML_2015}, AdaBK~\cite{Yong_CVPR_2023} and CASPR~\cite{Duvvuri_ICLR_2024} and the results are shown in Table~\ref{tab:ablation-experi-otherSO}. We can see that the 4-bit optimizers match the performance of their 32-bit counterparts, and reduce memory by over 20\%.

\begin{wraptable}{t}{0.46\textwidth}
	\centering
	\vspace{-1.8cm}
	\caption{
		Performance and memory cost of training Swin-Tiny on CIFAR-100. TA = test accuracy and TMC = total GPU memory cost.
	}
	\vspace{+0.1cm}
	\setlength{\tabcolsep}{1pt} 
	\renewcommand{\arraystretch}{0.8} 
	\begin{tabular}{c|cc}
		\toprule
		Optimizer & TA (\%) & TMC (MB) \\
		\midrule
		AdamW+32-bit K-FAC & 78.20 & 2388.0 \\
		AdamW+4-bit K-FAC & 78.56 & 1878.3 \\
		\midrule
		32-bit AdamW\_BK & 79.28 & 2388.0 \\
		4-bit AdamW\_BK & 79.34 & 1878.3 \\
		\midrule
		AdamW+32-bit CASPR & 78.82 & 2034.6 \\
		AdamW+4-bit CASPR & 78.80 & 1543.9 \\
		\bottomrule
	\end{tabular}
	\vspace{-0.2cm}
	\label{tab:ablation-experi-otherSO}
\end{wraptable}
\section{Related Work}
\textbf{Second-order optimizers.}
Different second-order optimizers apply different second-order information. Hessian-based optimizers~\cite{Yao_AAAI_2021,Liu_ICLR_2024} use the Hessian matrix or its approximation. Fisher-based optimizers~\cite{Martens_ICML_2015,Yong_CVPR_2023} utilize the covariance matrix of the accumulated gradients or its approximation based on Kronecker product. Shampoo~\cite{Vineet_ICML_2018} and CASPR~\cite{Duvvuri_ICLR_2024} approximate the full AdaGrad~\cite{Duchi_2011_JMLR} preconditioner by a set of small preconditioning matrices.

\textbf{Memory efficient optimizers based on factorization.}
Adafactor~\cite{Shazeer_ICML_2018}  employs the outer product of two vectors to approximate the second moment of Adam~\cite{kingma2015adam}. SM3~\cite{Anil_NIPS_2019} considers approximating the second moment of Adam by its covers' statistics. \cite{Feinberg_NIPS_2023} and \cite{JuiNan_NIPS_2023} reduce memory cost of the preconditioner in a second-order optimizer with its low-rank approximation through truncated SVD.

\textbf{Memory efficient optimizers based on quantization.}
Dettmers et al.~\cite{Dettmers_ICLR_2022} introduce block-wise dynamic quantization that enables the use of first-order optimizers with 8-bit states. Li et al.~\cite{Li_NIPS_2023} push the optimizer states of Adam/AdamW to 4-bit.

\section{Conclusions, Limitations, and Broader Impact}
We propose 4-bit Shampoo, the first low-bit second-order optimizer, designed for memory-efficient training of DNNs. We find that quantizing the eigenvector matrix of the preconditioner is essential to minimize quantization errors in its inverse 4-th root at 4-bit precision, given its sensitivity to alterations in small singular values. We further introduce orthogonal rectification and linear square quantization mapping to improve performance. 4-bit Shampoo achieves lossless performance to 32-bit counterpart in training different DNNs on various tasks.

\textbf{Limitations.} 
Preconditioners in Shampoo are symmetric matrices and can be stored as upper triangular matrices, saving almost half of the memory usage. However, the eigenvector matrix of a preconditioner is not symmetric, causing an 8-bit preconditioner to occupy the same memory as its 4-bit eigenvector matrix. Notably, a comparison of Table~\ref{tab:quantization-errors-inverse-root} and Table~\ref{tab:app-quantization-errors-inverse-root-8bit} in Appendix~\ref{sec:app-quan} shows that the 4-bit quantization of the eigenvector matrix has smaller quantization errors than the 8-bit quantization of the preconditioner. Our evaluation is currently limited to image classification and natural language modeling tasks. Due to limitations in computing resources, we do not test our 4-bit Shampoo on large-scale models with billions of parameters.

\textbf{Broader Impact.} Our work can facilitate training large models with second-order optimizers. This could open up new research possibilities that were previously unattainable due to GPU memory constraints, especially benefiting researchers with limited resources.

\begin{ack}
Jia~Li and Hua~Huang were supported by the NSF of China (grant no. 62131003). Jia~Li was also supported by the NSF of China (grant no. 62102034). Pan~Zhou was supported by the Singapore Ministry of Education (MOE) Academic Research Fund (AcRF) Tier 1 grants (project ID: 23-SIS-SMU-028 and 23-SIS-SMU-070).
\end{ack}

\bibliography{refs}
\bibliographystyle{plain}


\clearpage
\appendix

\section{Implementation Details of Shampoo, CASPR, K-FAC and AdaBK}

The implementation of 32-bit Shampoo used in our experiments is described in Algorithm~\ref{alg:shampoo_prac}.
Our Pytorch implementation of Shampoo is partially based on the code provided by~\cite{Anil_arxiv_2020}.
We implement CASPR by replacing $\widehat{\bm{G}}_t=\widehat{\bm{L}}_t\bm{G}_t\widehat{\bm{R}}_t$ with $\bm{J}_t=\widehat{\bm{L}}_t\bm{G}_t+\bm{G}_t\widehat{\bm{R}}_t; \widehat{\bm{G}}_t=\widehat{\bm{L}}_t\bm{J}_t+\bm{J}_t\widehat{\bm{R}}_t$ in line 12 of Algorithm~\ref{alg:shampoo_prac} and line 14 of Algorithm~\ref{alg:low_bit_shampoo}.
We summarize the implementation of 32-bit K-FAC/AdaBK in Algorithm~\ref{alg:kfac-adabk}, where $\bm{X}_t$ is the input feature and $\bm{Y}_t$ is the output feature gradient.
Both power iteration~\cite{Burden_book_2015} and Schur-Newton iteration~\cite{Guo_SIAM_2006} are run for 10 iterations. 
Our implementation of 4-bit K-FAC/AdaBK is similar to 4-bit Shampoo (i.e., compressing ${\bm{L}}_t, {\bm{R}}_t, \widehat{\bm{L}}_t, \text{and}~ \widehat{\bm{R}}_t$).

\begin{algorithm}[htbp]
	\caption{Practical 32-bit Shampoo}
	\label{alg:shampoo_prac}
	\begin{algorithmic}[1]
		\REQUIRE{initial parameter $\bm{W}_0 \in \mathbb{R}^{m \times n}$, left preconditioner $\bm{L}_0=\epsilon\bm{I}_m$, right preconditioner $\bm{R}_0=\epsilon\bm{I}_n$, inverse root of left preconditioner $\widehat{\bm{L}}_0=\bm{I}_m$, inverse root of right preconditioner $\widehat{\bm{R}}_0=\bm{I}_n$, total number of steps $T$, interval of updating preconditioners $T_1$, interval of updating inverse roots of preconditioners $T_2$, exponential decay rate for preconditioners $\beta\in(0, 1)$, first-order optimizer $\mathcal{F}$, first-order optimizer state $\bm{s}_0=\bm{0}$.}
		\ENSURE{final parameter $\bm{W}_T$.}
		\FOR{$t = 1, 2, \dots, T$}
		\STATE Receive loss function $\mathcal{L}_t : \mathbb{R}^{m \times n} \mapsto \mathbb{R}$ and compute gradient $\bm{G}_t=\nabla \mathcal{L}_t(\bm{W}_t)$
		\IF{$t\%T_1 \equiv 0$}
			\STATE $\bm{L}_t=\beta\bm{L}_{t-1}+(1-\beta)\bm{G}_t\bm{G}_t^{\mathsf{T}}; \quad \bm{R}_t=\beta\bm{R}_{t-1}+(1-\beta)\bm{G}_t^{\mathsf{T}}\bm{G}_t$
		\ELSE
			\STATE $\bm{L}_t=\bm{L}_{t-1}; \quad \bm{R}_t=\bm{R}_{t-1}$
		\ENDIF
		\IF{$t\%T_2 \equiv 0$}
			\STATE Compute maximum eigenvalues $\lambda_{\max}^L$ and $\lambda_{\max}^R$ of $\bm{L}_t$ and $\bm{R}_t$ by power iteration
			\STATE Compute $\widehat{\bm{L}}_t\!=\!(\bm{L}_t\!+\!\lambda_{\max}^L\epsilon\bm{I}_m)^{-1/4}$ and $\widehat{\bm{R}}_t\!=\!(\bm{R}_t\!+\!\lambda_{\max}^R\epsilon\bm{I}_n)^{-1/4}$ by Schur-Newton iteration
		\ELSE
			\STATE $\widehat{\bm{L}}_t=\widehat{\bm{L}}_{t-1}; \quad \widehat{\bm{R}}_t=\widehat{\bm{R}}_{t-1}$
		\ENDIF
		\STATE $\widehat{\bm{G}}_t=\widehat{\bm{L}}_t\bm{G}_t\widehat{\bm{R}}_t; \quad \widetilde{\bm{G}}_t=\widehat{\bm{G}}_t(\|\bm{G}_t\|_F / \|\widehat{\bm{G}}_t\|_F)$
		\STATE $\bm{W}_t, \bm{s}_t = \mathcal{F}(\bm{W}_{t-1}, \bm{s}_{t-1}, \widetilde{\bm{G}}_t)$
		\ENDFOR
	\end{algorithmic}
\end{algorithm}

\begin{algorithm}[htbp]
	\caption{Practical 32-bit K-FAC/AdaBK}
	\label{alg:kfac-adabk}
	\begin{algorithmic}[1]
		\REQUIRE{initial parameter $\bm{W}_0 \in \mathbb{R}^{m \times n}$, left preconditioner $\bm{L}_0=\bm{0}$, right preconditioner $\bm{R}_0=\bm{0}$, inverse root of left preconditioner $\widehat{\bm{L}}_0=\bm{I}_m$, inverse root of right preconditioner $\widehat{\bm{R}}_0=\bm{I}_n$, total number of steps $T$, interval of updating preconditioners $T_1$, interval of updating inverse roots of preconditioners $T_2$, $\epsilon$, exponential decay rate for preconditioners $\beta\in(0, 1)$, $\alpha=1$ for K-FAC / $\alpha=2$ for AdaBK, first-order optimizer $\mathcal{F}$, first-order optimizer state $\bm{s}_0=\bm{0}$.}
		\ENSURE{final parameter $\bm{W}_T$.}
		\FOR{$t = 1, 2, \dots, T$}
		\STATE Receive loss function $\mathcal{L}_t : \mathbb{R}^{m \times n} \mapsto \mathbb{R}$ and compute gradient $\bm{G}_t=\nabla \mathcal{L}_t(\bm{W}_t)$
		\STATE Receive $\bm{X}_t$ by forward propagation and $\bm{Y}_t$ by backward propagation
		\IF{$t\%T_1 \equiv 0$}
		\STATE $\bm{L}_t=\beta\bm{L}_{t-1}+(1-\beta)\bm{Y}_t\bm{Y}_t^{\mathsf{T}}; \quad \bm{R}_t=\beta\bm{R}_{t-1}+(1-\beta)\bm{X}_t\bm{X}_t^{\mathsf{T}}$
		\ELSE
		\STATE $\bm{L}_t=\bm{L}_{t-1}; \quad \bm{R}_t=\bm{R}_{t-1}$
		\ENDIF
		\IF{$t\%T_2 \equiv 0$}
		\STATE Compute maximum eigenvalues $\lambda_{\max}^L$ and $\lambda_{\max}^R$ of $\bm{L}_t$ and $\bm{R}_t$ by power iteration
		\STATE Compute $\widehat{\bm{L}}_t\!=\!(\bm{L}_t\!+\!\lambda_{\max}^L\epsilon\bm{I}_m)^{-1/\alpha}$ and $\widehat{\bm{R}}_t\!=\!(\bm{R}_t\!+\!\lambda_{\max}^R\epsilon\bm{I}_n)^{-1/\alpha}$ by Schur-Newton iteration
		\ELSE
		\STATE $\widehat{\bm{L}}_t=\widehat{\bm{L}}_{t-1}; \quad \widehat{\bm{R}}_t=\widehat{\bm{R}}_{t-1}$
		\ENDIF
		\STATE $\widehat{\bm{G}}_t=\widehat{\bm{L}}_t\bm{G}_t\widehat{\bm{R}}_t; \quad \widetilde{\bm{G}}_t=\widehat{\bm{G}}_t(\|\bm{G}_t\|_F / \|\widehat{\bm{G}}_t\|_F)$
		\STATE $\bm{W}_t, \bm{s}_t = \mathcal{F}(\bm{W}_{t-1}, \bm{s}_{t-1}, \widetilde{\bm{G}}_t)$
		\ENDFOR
	\end{algorithmic}
\end{algorithm}
\section{Randomized SVD Method}
Given an initial matrix $\bm{P}_0\in\mathbb{R}^{n \times n}$, randomized SVD method computes the eigenvector matrix of a PD matrix $\bm{A}\in\mathbb{R}^{n \times n}$ by iterating
\begin{align}
	\bm{P}_{t}= \text{QR}(\bm{A}\bm{P}_{t-1}),
	\label{eq:Randomized-SVD}
\end{align}
where $\text{QR}(\bm{X})$ denotes the QR decomposition of matrix $\bm{X}$, returning an orthogonal matrix. Since we can initialize $\bm{P}_0$ with the previous result (e.g., $\bm{V}$ in Algorithm~\ref{alg:shampoo-preconditioner-update}), only a few iterations are enough to obtain an accurate estimation in practice. In our experiments, we iterate~\eqref{eq:Randomized-SVD} once for Shampoo/CASPR, and iterate~\eqref{eq:Randomized-SVD} twice for K-FAC/AdaBK.
\section{Quantization Mappings}
We present the constructions of different quantization mappings in $b$-bit quantizers ($\mathcal{R}$ in $\mathcal{Q}$). See Figure~\ref{fig:a3-quantization-mappings} for the illustration of them. Note that $\mathbb{T}_b\!=\!\{0, 1, \dots, 2^b\!-\!1\}$.

Dynamic tree (DT) quantization for $b$-bit quantization maps $\mathbb{T}_b$ onto $\{0, 1\}\cup G$, where $G$ is a set of numbers with the following properties: the number in $G$ looks like $\pm q_k \times 10^{-E}$, where a) $b = 2 + E + F$, where $E, F$ are integers; b) $q_k = (p_k + p_{k+1}) / 2$, where $k \in \{0, \dots, 2^F-1\}$; c) $ p_j = 0.9j / 2^F + 0.1$, where $j \in \{0, \dots, 2^F\}$.
For 4-bit quantization, DT quantization maps $\mathbb{T}_4$ onto \{-0.8875, -0.6625, -0.4375, -0.2125, -0.0775, -0.0325, -0.0055,  0.0000, 0.0055,  0.0325,  0.0775,  0.2125,  0.4375,  0.6625,  0.8875,  1.0000\}. For 3-bit quantization, DT quantization maps $\mathbb{T}_3$ onto \{-0.7750, -0.3250, -0.0550,  0.0000,  0.0550,  0.3250,  0.7750,  1.0000\}.

For 4-bit quantization, linear square (Linear-2) quantization maps $\mathbb{T}_4$ onto \{-1.0000, -0.7511, -0.5378, -0.3600, -0.2178, -0.1111, -0.0400,  0.0000, 0.0044,  0.0400,  0.1111,  0.2178,  0.3600,  0.5378,  0.7511,  1.0000\}. For 3-bit quantization, Linear-2 quantization maps $\mathbb{T}_3$ onto \{-1.0000, -0.5102, -0.1837,  0.0000,  0.0204,  0.1837,  0.5102,  1.0000\}.

\begin{figure}[t]
	\centering
	\subfigure[3-bit quantization] {
	\includegraphics[width=0.48\textwidth]{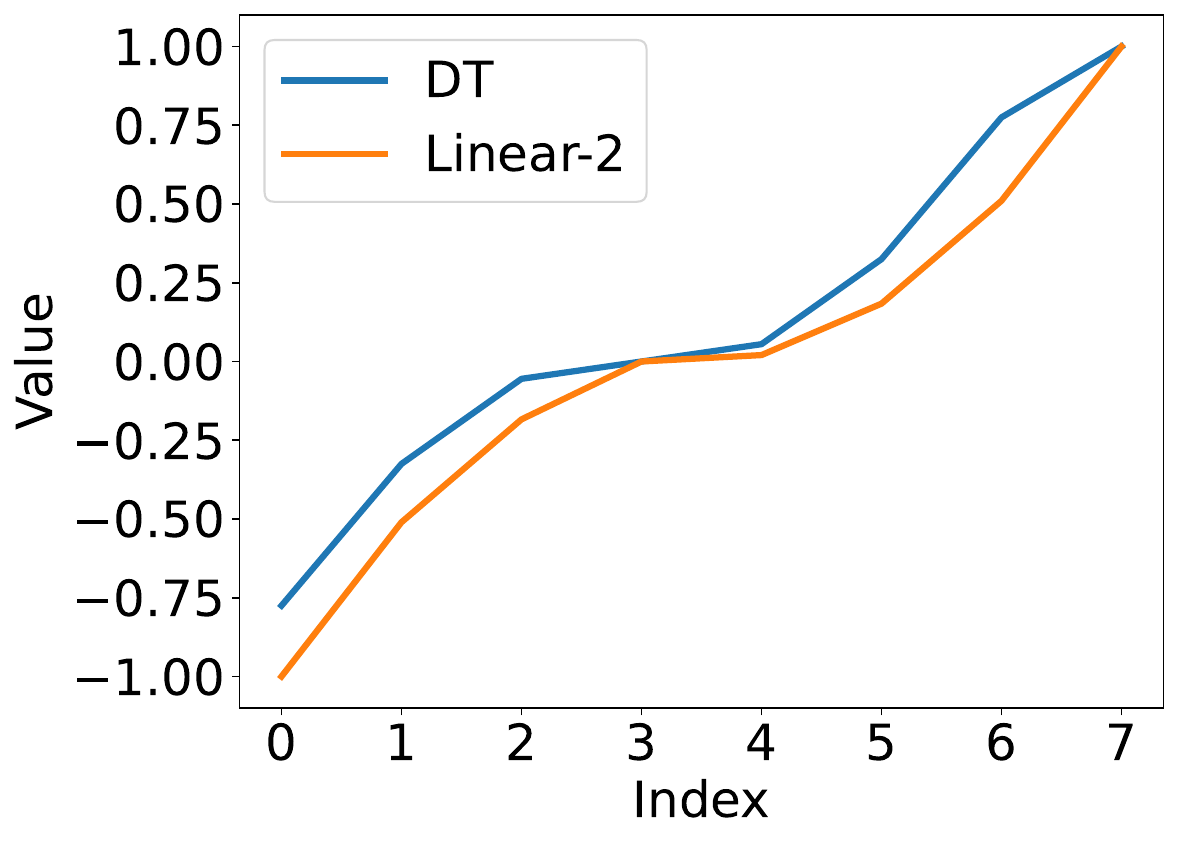}	
	}
	\subfigure[4-bit quantization] {
		\includegraphics[width=0.48\textwidth]{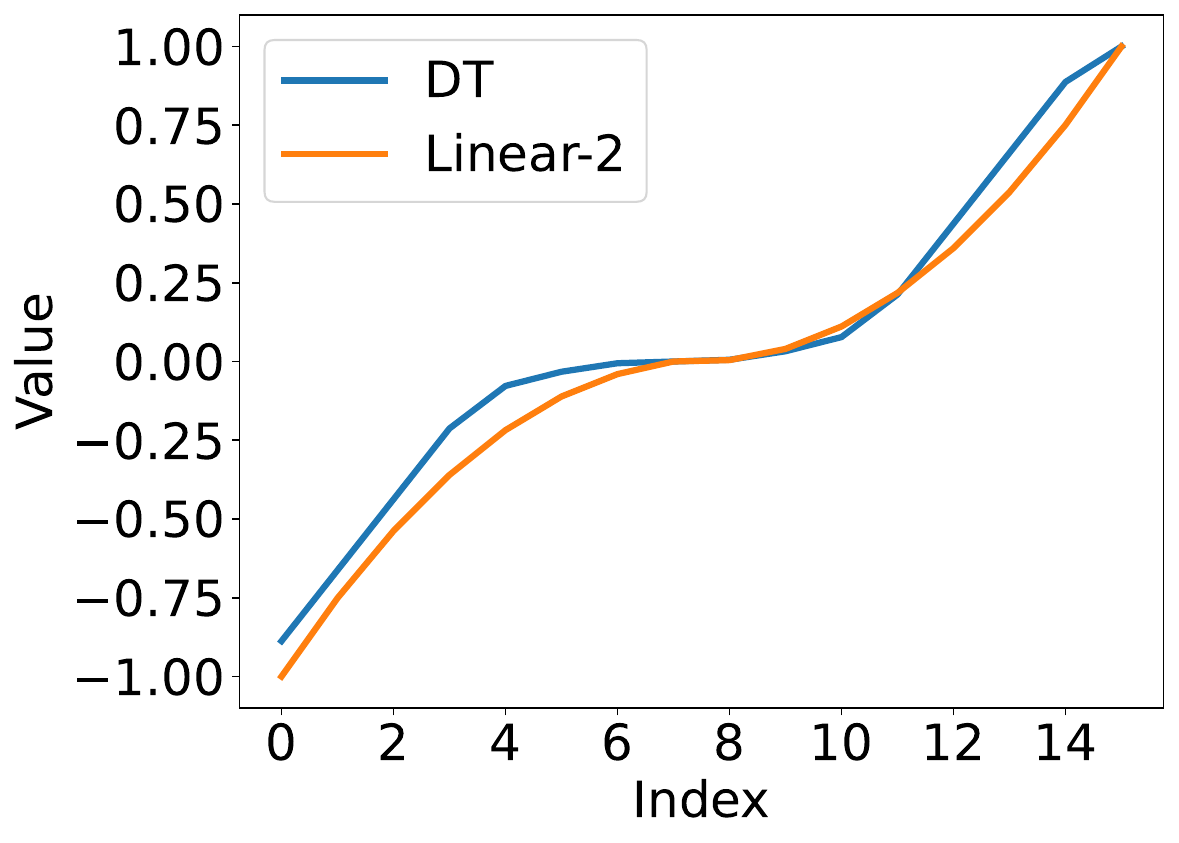}
	}
	\caption{
		Visualization of DT quantization and Linear-2 quantization at $b$-bit ($b=3, 4$) precision.
	}
\label{fig:a3-quantization-mappings}
\end{figure}
\section{Quantization Error Analyses}\label{sec:app-quan}
We present more quantization error analyses of the preconditioners.
Recall that we define two kinds of quantization errors in mapping $f$ of transformation  $g$ at $\bm{A}\in\mathbb{R}^{m \times n}$ (in short errors in $f(\bm{A})$ of $g$) in Subsection~\ref{sec:metho-quan-U}.
Here we extend them as follows:
define the normwise relative error (NRE) in $f$ of $(g_1, g_2)$ at $\bm{A}$ as
\begin{align*}
	\text{NRE} = \frac{\| f(\bm{A}) - g_2 \circ f \circ g_1(\bm{A}) \|_F}{\| f(\bm{A}) \|_F},
\end{align*}
and the angle error (AE) in $f$ of $(g_1, g_2)$ at $\bm{A}$ as
\begin{align*}
	\text{AE} = \arccos\left(\frac{\langle f(\bm{A}), g_2\! \circ \!f \!\circ \!g_1(\bm{A}) \rangle}{\|f(\bm{A})\|_F \|g_2\!\circ\!f\!\circ\!g_1(\bm{A})\|_F}\right).
\end{align*}

\subsection{Static Analysis}
Table~\ref{tab:app-quantization-errors-inverse-root} is an extension of Table~\ref{tab:quantization-errors-inverse-root} for Bit=4.
Since the diagonal elements of $\bm{A}^{-1/4}$ are usually much larger than its non-diagonal elements where $\bm{A}$ is a PD matrix, we further consider the quantization errors in $f(\bm{A})\!=\!\bm{A}^{-1/4}\!-\!{\rm Diag}({\rm diag}(\bm{A}^{-1/4}))$ at 4-bit precision as shown in Table~\ref{tab:app-quantization-errors-inverse-root-no-diag}. Table~\ref{tab:app-quantization-errors-inverse-root-8bit} shows the quantization errors at 8-bit precision.

A large condition number of a PD matrix $\bm{A}$ is indispensable for the superiority of quantizing $\bm{U}$ over quantizing $\bm{A}$, where $\bm{U}$ is the eigenvector matrix of $\bm{A}$.
We consider contracting the singular value distribution of $\bm{A}\!=\!\bm{A}_1$ with SVD $\bm{U}{\rm Diag}(\bm{\lambda})\bm{U}^{\mathsf{T}}$ used in Table~\ref{tab:app-quantization-errors-inverse-root} by mapping each singular value $\lambda$ of $\bm{A}$ to $h(\lambda)\!=\!\tau(\lambda-\lambda_{\min}^A)\! +\!\lambda_{\min}^A$, where $\lambda_{\min}^A$ is the minimum singular value of $\bm{A}$ and $\tau>0$ is the contraction coefficient.
Figure~\ref{fig:quan-err-tau} shows 4-bit quantization errors in $\bm{A}^{-1/4}$ or $\bm{A}^{-1/4}\!-\!{\rm Diag}({\rm diag}(\bm{A}^{-1/4}))$ of quantizing $\bm{U}$ or $\bm{A}$ at $\bm{A}\!=\!\bm{U}{\rm Diag}(h(\bm{\lambda}))\bm{U}^{\mathsf{T}}$.

\begin{table}[htbp]
	\centering
	\small
	\caption{
		Quantization errors in $f(\bm{A})=\bm{A}^{-1/4}$ of different 4-bit quantization schemes at a PD matrix $\bm{A}$.  We employ block-wise normalization with a block size of 64. $\bm{U}$ is the eigenvector matrix of $\bm{A}$ and $\bm{B}=(g_1(\bm{A}))^{-1/4}$. QM = quantized matrices and OR = orthogonal rectification.
	}
	\setlength{\tabcolsep}{5pt} 
	\renewcommand{\arraystretch}{0.90} 
	\begin{tabular}{c|cccc|c|cccc}
	\toprule
	\multicolumn{5}{c|}{Real-world $\bm{A}=\bm{A}_1$} & \multicolumn{5}{c}{Synthetic $\bm{A}=\bm{A}_2$} \\
	\midrule
	Mapping $\mathcal{R}$ & QM & OR & NRE $\downarrow$ & AE ($^\circ$) $\downarrow$ & Mapping $\mathcal{R}$ & QM & OR & NRE $\downarrow$ & AE ($^\circ$) $\downarrow$\\
	\midrule
	\multirow{7}{*}{DT} & $\bm{A}$ & \ding{55} & 0.6241 & 17.319 & \multirow{7}{*}{DT} & $\bm{A}$ & \ding{55} & 0.4615 & 17.189 \\
	& $\bm{U}$ & \ding{55} & 0.0709 & 4.0426 & & $\bm{U}$ & \ding{55} & 0.1224 & 7.0144 \\
	& $\bm{U}$ & \ding{51} & 0.0455 & 2.5615 & & $\bm{U}$ & \ding{51} & 0.0878 & 4.9960 \\
	& $\bm{B}$ & \ding{55} & 0.0398 & 2.2802 & & $\bm{B}$ & \ding{55} & 0.0853 & 4.8914 \\
	& $(\bm{A}, \bm{B})$ & \ding{55} & 0.6243 & 17.364 & & $(\bm{A}, \bm{B})$ & \ding{55} & 0.4649 & 17.650 \\
	& $(\bm{U}, \bm{B})$ & \ding{55} & 0.0811 & 4.6296 & & $(\bm{U}, \bm{B})$ & \ding{55} & 0.1485 & 8.5168 \\
	& $(\bm{U}, \bm{B})$ & \ding{51} & 0.0604 & 3.4230 & & $(\bm{U}, \bm{B})$ & \ding{51} & 0.1224 & 6.9817 \\
	\midrule
	\multirow{7}{*}{Linear-2} & $\bm{A}$ & \ding{55} & 0.6243 & 17.293 & \multirow{7}{*}{Linear-2} & $\bm{A}$ & \ding{55} & 0.4465 & 15.338 \\
	& $\bm{U}$ & \ding{55} & 0.0543 & 3.1066 & & $\bm{U}$ & \ding{55} & 0.0942 & 5.3998 \\
	& $\bm{U}$ & \ding{51} & 0.0343 & 1.9456 & & $\bm{U}$ & \ding{51} & 0.0669 & 3.8166 \\
	& $\bm{B}$ & \ding{55} & 0.0315 & 1.8050 & & $\bm{B}$ & \ding{55} & 0.0661 & 3.7887 \\
	& $(\bm{A}, \bm{B})$ & \ding{55} & 0.6243 & 17.301 	& & $(\bm{A}, \bm{B})$ & \ding{55} & 0.4483 & 15.654 \\
	& $(\bm{U}, \bm{B})$ & \ding{55} & 0.0626 & 3.5833 & & $(\bm{U}, \bm{B})$ & \ding{55} & 0.1150 & 6.5901 \\
	& $(\bm{U}, \bm{B})$ & \ding{51} & 0.0466 & 2.6494 & & $(\bm{U}, \bm{B})$ & \ding{51} & 0.0941 & 5.3716 \\
	\bottomrule
	\end{tabular}
	\label{tab:app-quantization-errors-inverse-root}
\end{table}

\begin{table}[htbp]
	\centering
	\small
	\caption{
		Quantization errors in $f(\bm{A})\!=\!\bm{A}^{-1/4}\!-\!{\rm Diag}({\rm diag}(\bm{A}^{-1/4}))$ of different 4-bit quantization schemes at a PD matrix $\bm{A}$.  We employ block-wise normalization with a block size of 64. $\bm{U}$ is the eigenvector matrix of $\bm{A}$ and $\bm{B}=(g_1(\bm{A}))^{-1/4}$. QM = quantized matrices and OR = orthogonal rectification.
	}
	\setlength{\tabcolsep}{5pt} 
	\renewcommand{\arraystretch}{0.90} 
	\begin{tabular}{c|cccc|c|cccc}
		\toprule
		\multicolumn{5}{c|}{Real-world $\bm{A}=\bm{A}_1$} & \multicolumn{5}{c}{Synthetic $\bm{A}=\bm{A}_2$} \\
		\midrule
		Mapping $\mathcal{R}$ & QM & OR & NRE $\downarrow$ & AE ($^\circ$) $\downarrow$ & Mapping $\mathcal{R}$ & QM & OR & NRE $\downarrow$ & AE ($^\circ$) $\downarrow$\\
		\midrule
		\multirow{7}{*}{DT} & $\bm{A}$ & \ding{55} & 0.9549 & 59.360 & \multirow{7}{*}{DT} & $\bm{A}$ & \ding{55} & 0.6247 & 25.913 \\
		& $\bm{U}$ & \ding{55} & 0.2328 & 13.287 & & $\bm{U}$ & \ding{55} & 0.1994 & 11.444 \\
		& $\bm{U}$ & \ding{51} & 0.1480 & 8.4365 & & $\bm{U}$ & \ding{51} & 0.1427 & 8.1415 \\
		& $\bm{B}$ & \ding{55} & 0.1314 & 7.5513 & & $\bm{B}$ & \ding{55} & 0.1391 & 7.9813 \\
		& $(\bm{A}, \bm{B})$ & \ding{55} & 0.9561 & 59.825 & & $(\bm{A}, \bm{B})$ & \ding{55} & 0.6314 & 26.948 \\
		& $(\bm{U}, \bm{B})$ & \ding{55} & 0.2666 & 15.281 & & $(\bm{U}, \bm{B})$ & \ding{55} & 0.2420 & 13.911 \\
		& $(\bm{U}, \bm{B})$ & \ding{51} & 0.1977 & 11.322 & & $(\bm{U}, \bm{B})$ & \ding{51} & 0.1992 & 11.393 \\
		\midrule
		\multirow{7}{*}{Linear-2} & $\bm{A}$ & \ding{55} & 0.9547 & 58.336 & \multirow{7}{*}{Linear-2} & $\bm{A}$ & \ding{55} & 0.6010 & 20.780 \\
		& $\bm{U}$ & \ding{55} & 0.1786 & 10.213 & & $\bm{U}$ & \ding{55} & 0.1534 & 8.8027 \\
		& $\bm{U}$ & \ding{51} & 0.1122 & 6.4096 & & $\bm{U}$ & \ding{51} & 0.1088 & 6.2176 \\
		& $\bm{B}$ & \ding{55} & 0.1041 & 5.9554 & & $\bm{B}$ & \ding{55} & 0.1078 & 6.1755 \\
		& $(\bm{A}, \bm{B})$ & \ding{55} & 0.9548 & 58.601 & & $(\bm{A}, \bm{B})$ & \ding{55} & 0.6047 & 21.666 \\
		& $(\bm{U}, \bm{B})$ & \ding{55} & 0.2063 & 11.778 & & $(\bm{U}, \bm{B})$ & \ding{55} & 0.1873 & 10.745 \\
		& $(\bm{U}, \bm{B})$ & \ding{51} & 0.1530 & 8.7337 & & $(\bm{U}, \bm{B})$ & \ding{51} & 0.1532 & 8.7534 \\
		\bottomrule
	\end{tabular}
	\label{tab:app-quantization-errors-inverse-root-no-diag}
\end{table}

\begin{figure}[htbp]
	\centering
	\subfigure[Errors in $f(\bm{A})\!=\!\bm{A}^{-1/4}$] {
		\includegraphics[width=0.48\textwidth]{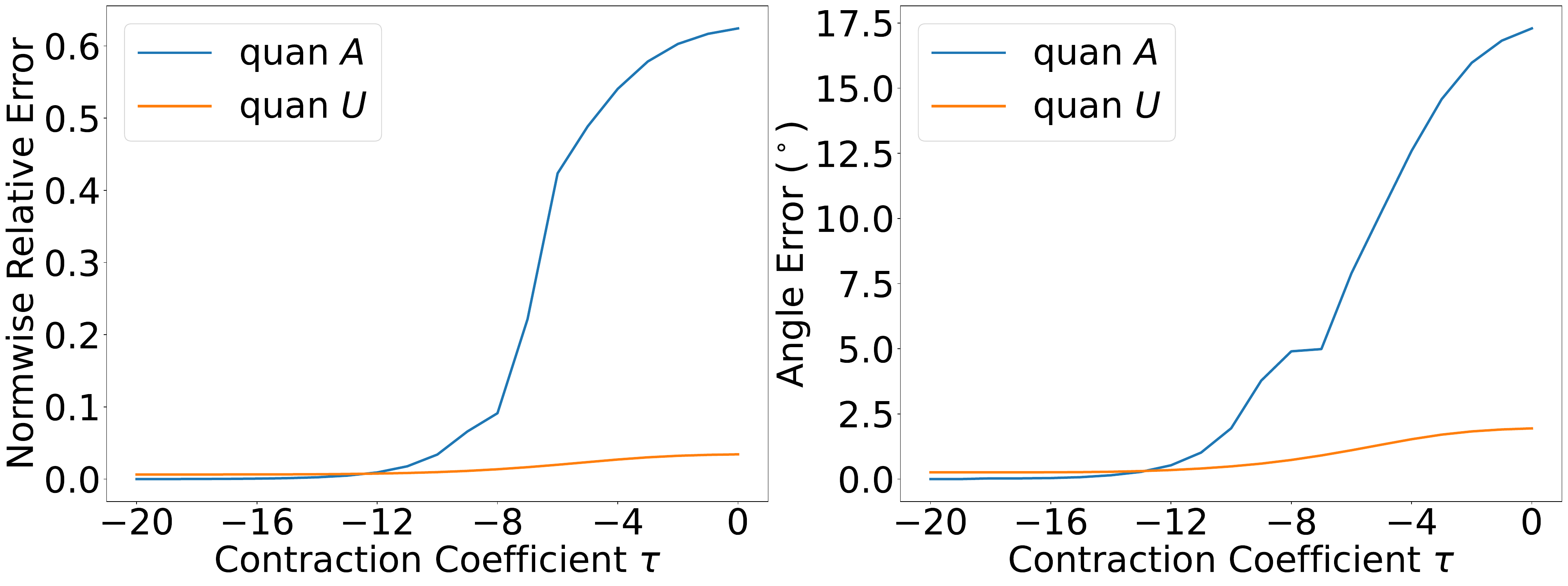}
	}
	\subfigure[Errors in $f(\bm{A})\!=\!\bm{A}^{-1/4}\!-\!{\rm Diag}({\rm diag}(\bm{A}^{-1/4}))$] {
		\includegraphics[width=0.48\textwidth]{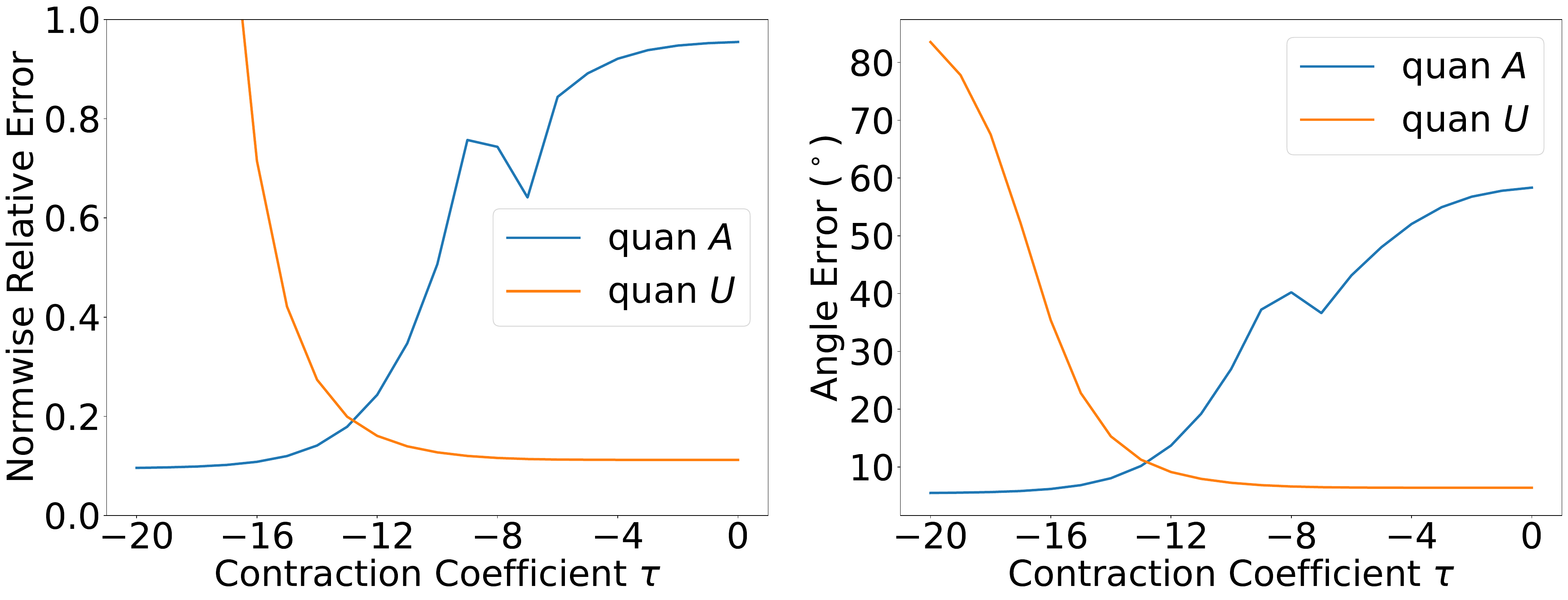}
	}
	\caption{
		4-bit quantization errors in $f(\bm{A})$ of quantizing $\bm{U}$ or $\bm{A}$ at $\bm{A}=\bm{U}{\rm Diag}(h(\bm{\lambda}))\bm{U}^{\mathsf{T}}$. We use linear square quantization and orthogonal rectification. The condition number ${\rm cond}(\bm{A})=\lambda_{\max}^A/\lambda_{\min}^A$ is around 37235, where $\lambda_{\max}^A$ and $\lambda_{\min}^A$ are the maximum and minimum singular values of $\bm{A}$ respectively. Contraction coefficients are shown on a $\log_{2}$ scale.
	}
	\label{fig:quan-err-tau}
\end{figure}

\begin{table}[htbp]
	\centering
	\small
	\caption{
		Quantization errors in $f(\bm{A})$ of different 8-bit quantization schemes at a PD matrix $\bm{A}$, where $\bm{A}=\bm{A}_1$ is derived from the real world as described in Subsection~\ref{sec:metho-quan-U}. We employ block-wise normalization with a block size of 256. $\bm{U}$ is the eigenvector matrix of $\bm{A}$ and $\bm{B}=(g_1(\bm{A}))^{-1/4}$. QM = quantized matrices and OR = orthogonal rectification.
	}
	\setlength{\tabcolsep}{5pt} 
	\renewcommand{\arraystretch}{0.9} 
	\begin{tabular}{c|cccc|c|cccc}
		\toprule
		\multicolumn{5}{c|}{$f(\bm{A})=\bm{A}^{-1/4}$} & \multicolumn{5}{c}{$f(\bm{A})\!=\!\bm{A}^{-1/4}\!-\!{\rm Diag}({\rm diag}(\bm{A}^{-1/4}))$} \\
		\midrule
		Mapping $\mathcal{R}$ & QM & OR & NRE $\downarrow$ & AE ($^\circ$) $\downarrow$ & Mapping $\mathcal{R}$ & QM & OR & NRE $\downarrow$ & AE ($^\circ$) $\downarrow$\\
		\midrule
		\multirow{7}{*}{DT} & $\bm{A}$ & \ding{55} & 0.2192 & 8.3014 & \multirow{7}{*}{DT} & $\bm{A}$ & \ding{55} & 0.5001 & 23.644 \\
		& $\bm{U}$ & \ding{55} & 0.0060 & 0.3421 & & $\bm{U}$ & \ding{55} & 0.0197 & 1.1273 \\
		& $\bm{U}$ & \ding{51} & 0.0037 & 0.2140 & & $\bm{U}$ & \ding{51} & 0.0123 & 0.7022 \\
		& $\bm{B}$ & \ding{55} & 0.0029 & 0.1655 & & $\bm{B}$ & \ding{55} & 0.0097 & 0.5553 \\
		& $(\bm{A}, \bm{B})$ & \ding{55} & 0.2193 & 8.3051 & & $(\bm{A}, \bm{B})$ & \ding{55} & 0.5003 & 23.649 \\
		& $(\bm{U}, \bm{B})$ & \ding{55} & 0.0067 & 0.3810 & & $(\bm{U}, \bm{B})$ & \ding{55} & 0.0219 & 1.2577 \\
		& $(\bm{U}, \bm{B})$ & \ding{51} & 0.0047 & 0.2712 & & $(\bm{U}, \bm{B})$ & \ding{51} & 0.0156 & 0.8955 \\
		\midrule
		\multirow{7}{*}{Linear-2} & $\bm{A}$ & \ding{55} & 0.2164 & 7.9751 & \multirow{7}{*}{Linear-2} & $\bm{A}$ & \ding{55} & 0.4875 & 21.447 \\
		& $\bm{U}$ & \ding{55} & 0.0037 & 0.2121 & & $\bm{U}$ & \ding{55} & 0.0122 & 0.6994 \\
		& $\bm{U}$ & \ding{51} & 0.0023 & 0.1312 & & $\bm{U}$ & \ding{51} & 0.0076 & 0.4343 \\
		& $\bm{B}$ & \ding{55} & 0.0021 & 0.1203 & & $\bm{B}$ & \ding{55} & 0.0070 & 0.4035 \\
		& $(\bm{A}, \bm{B})$ & \ding{55} & 0.2164 & 7.9755 & & $(\bm{A}, \bm{B})$ & \ding{55} & 0.4875 & 21.448 \\
		& $(\bm{U}, \bm{B})$ & \ding{55} & 0.0043 & 0.2439 & & $(\bm{U}, \bm{B})$ & \ding{55} & 0.0141 & 0.8079 \\
		& $(\bm{U}, \bm{B})$ & \ding{51} & 0.0031 & 0.1791 & & $(\bm{U}, \bm{B})$ & \ding{51} & 0.0104 & 0.5935 \\
		\bottomrule
	\end{tabular}
	\label{tab:app-quantization-errors-inverse-root-8bit}
\end{table}

\subsection{Dynamic Analysis}
We define the normwise relative error (NRE) and angle error (AE) of $\bm{B}$ deviating from $\bm{A}$ as
\begin{align*}
\text{NRE}\!=\!\frac{\| \bm{B}- \bm{A} \|_F}{\| \bm{A} \|_F}, \quad
\text{AE}\!=\!\arccos\left(\frac{\langle \bm{A},\bm{B} \rangle}{ \|\bm{A}\|_F \|\bm{B}\|_F}\right).
\end{align*}
Consider Shampoo using 4-bit preconditioners for parameter updates, but also recording 32-bit preconditioners at the same time.
We extract the left preconditioners $\bm{L}_4$ and $\bm{L}_{32} \!\in \!\mathbb{R}^{1200\times1200}$ of a specific model parameter block $\bm{W}\in\mathbb{R}^{1200\times768}$ every 8000 steps in the Swin-Tiny training on CIFAR-100 with AdamW+Shampoo.
Here $\bm{L}_4$ is a decompressed 4-bit preconditioner, and $\bm{L}_{32}$ is a 32-bit preconditioner.

Figure~\ref{fig:app-quantization-errors-inverse-root-dynamic-eps4} shows the quantization errors during training.
For naive 4-bit Shampoo, $\bm{L}_{32}^{-1/4}$ and $\bm{L}_{4}^{-1/4}$ are computed by Schur-Newton iteration used in Algorithm~\ref{alg:shampoo_prac} where $\epsilon\!=\!10^{-4}$.
For our 4-bit Shampoo, $\bm{L}_{32}^{-1/4}$ is computed by Schur-Newton iteration used in Algorithm~\ref{alg:shampoo_prac} where $\epsilon\!=\!10^{-4}$, and $\bm{L}_{4}^{-1/4}$ is computed by Algorithm~\ref{alg:shampoo-preconditioner-root-update} without quantization where $\epsilon\!=\!10^{-4}, t_2\!=\!4$.
We find that $\epsilon\!=\!10^{-6}$ for Algorithm~\ref{alg:shampoo-preconditioner-root-update} used in our main experiments though is effective, yet it can cause a large numerical instability in the later stage of training (see Figure~\ref{fig:app-quantization-errors-inverse-root-dynamic-eps6}).

\begin{figure}[htbp]
	\centering
	\subfigure[Errors in $\bm{L}_4^{-1/4}$ deviating from $\bm{L}_{32}^{-1/4}$] {
		\includegraphics[width=0.48\textwidth]{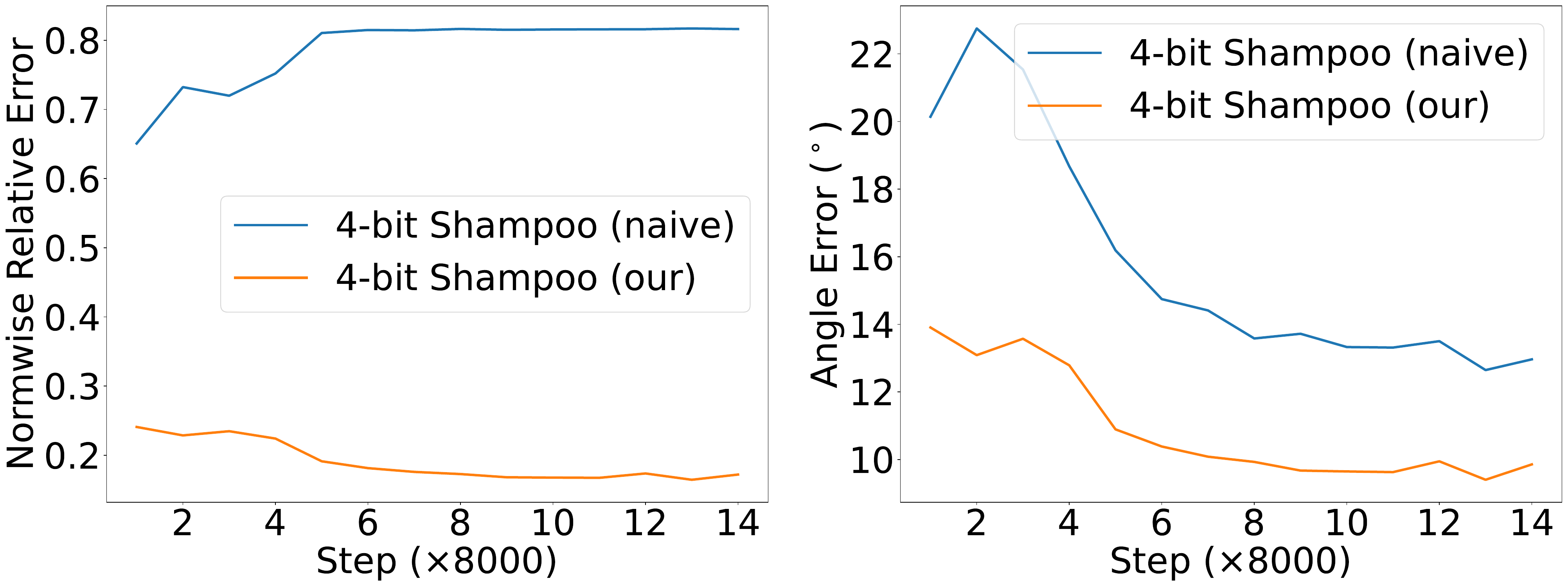}
	}
	\subfigure[Errors in $\bm{L}_4^{-1/4}\!-\!{\rm Diag}({\rm diag}(\bm{L}_4^{-1/4}))$ deviating from $\bm{L}_{32}^{-1/4}\!-\!{\rm Diag}({\rm diag}(\bm{L}_{32}^{-1/4}))$] {
		\includegraphics[width=0.48\textwidth]{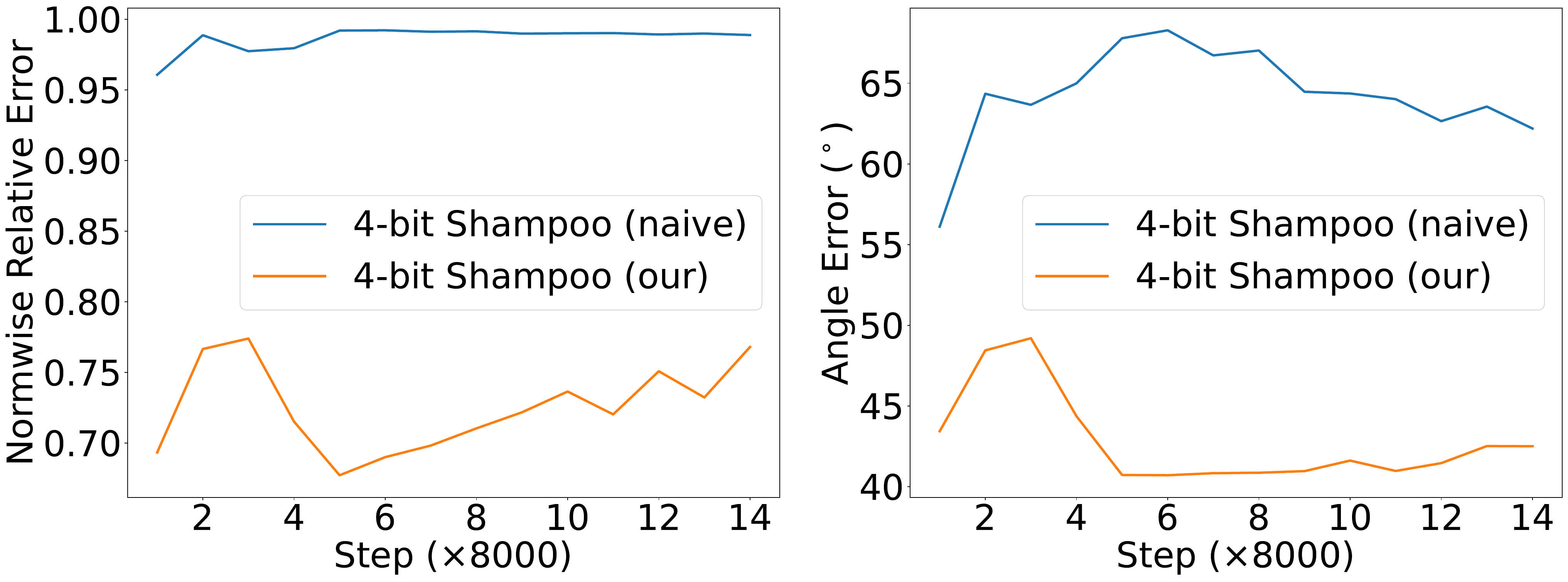}	
	}
	\caption{
		Quantization errors during Swin-Tiny training on the CIFAR-100 dataset. We use dampening term $\epsilon=10^{-4}$ to compute $\bm{L}_4^{-1/4}$ and $\bm{L}_{32}^{-1/4}$.
	}
	\label{fig:app-quantization-errors-inverse-root-dynamic-eps4}
\end{figure}

\begin{figure}[htbp]
	\centering
	\subfigure[Errors in $\bm{L}_4^{-1/4}$ deviating from $\bm{L}_{32}^{-1/4}$] {
		\includegraphics[width=0.48\textwidth]{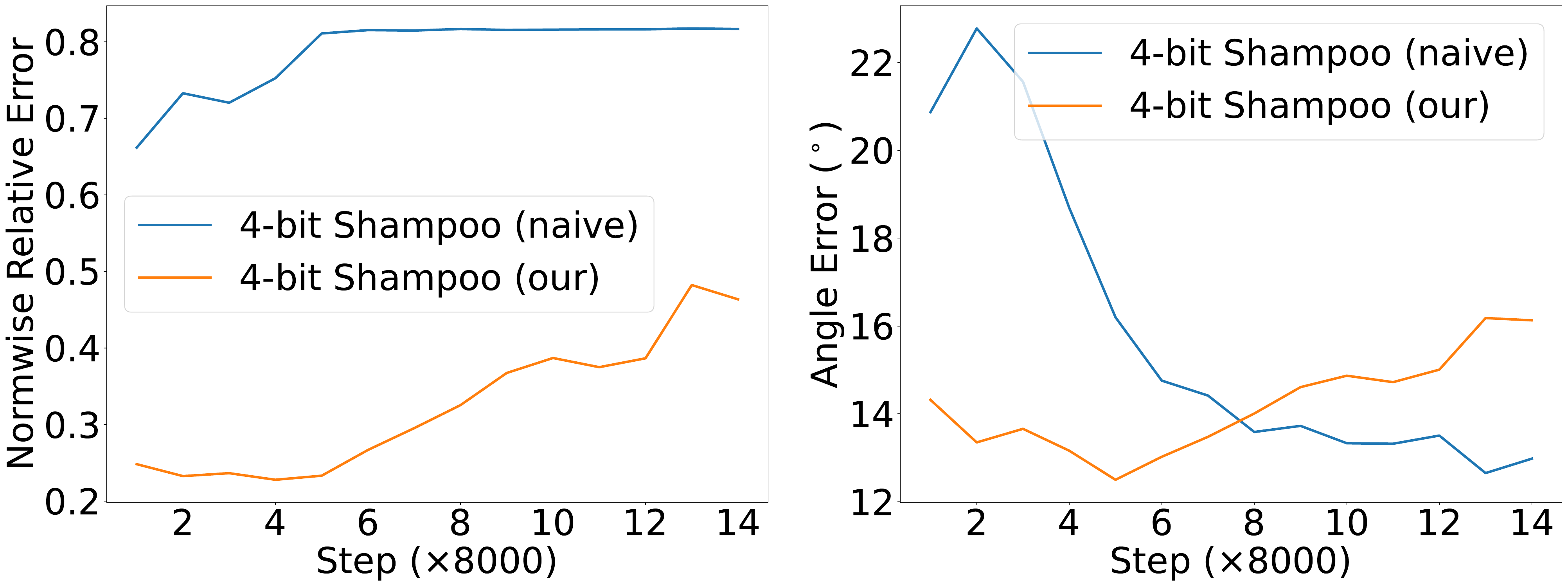}
	}
	\subfigure[Errors in $\bm{L}_4^{-1/4}\!-\!{\rm Diag}({\rm diag}(\bm{L}_4^{-1/4}))$ deviating from $\bm{L}_{32}^{-1/4}\!-\!{\rm Diag}({\rm diag}(\bm{L}_{32}^{-1/4}))$] {
		\includegraphics[width=0.48\textwidth]{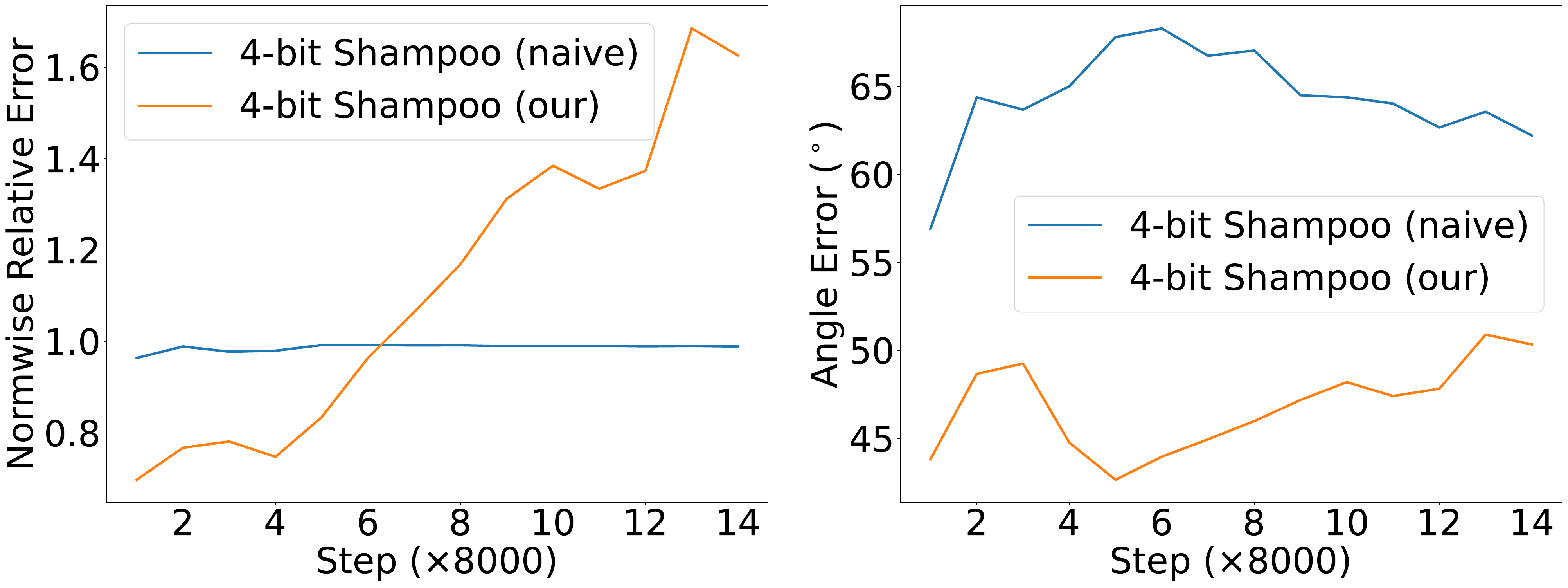}	
	}
	\caption{
		Quantization errors during Swin-Tiny training on the CIFAR-100 dataset. We use dampening term $\epsilon=10^{-6}$ to compute $\bm{L}_4^{-1/4}$ and $\bm{L}_{32}^{-1/4}$.
	}
	\label{fig:app-quantization-errors-inverse-root-dynamic-eps6}
\end{figure} 
 
\section{Convergence Analysis}\label{sec:app-converge}
\textbf{More notations.} Given a symmetric real matrix $\bm{A}$, $\bm{A}\succeq0$ means that $\bm{A}$ is positive semidefinite (PSD), and $\bm{A}\succ0$ means that $\bm{A}$ is positive definite (PD). Assume that symmetric matrices $\bm{A}$ and $\bm{B}$ are symmetric, the notations $\bm{A}\succeq\bm{B}$ and $\bm{A}\succ\bm{B}$ mean that $\bm{A}-\bm{B}\succeq0$ and $\bm{A}-\bm{B}\succ0$ respectively. Let $\bm{A}$ be a PSD matrix and $s\in\mathbb{R}$, we define $\bm{A}^s\!=\!\bm{U}\bm{\Lambda}^s\bm{U}^{\mathsf{T}}$, where $\bm{U}\bm{\Lambda}\bm{U}^{\mathsf{T}}$ is the Singular Value Decomposition (SVD) of $\bm{A}$. The Mahalanobis
norm of a vector $\bm{x}$ induced by a PD matrix $\bm{A}$ is $\|\bm{x}\|_{\bm{A}}=\sqrt{\bm{x}^{\mathsf{T}}\bm{A}\bm{x}}$. The dual norm of $\|\cdot\|_{\bm{A}}$ is denoted by $\|\cdot\|_{\bm{A}}^*$, where $\|\bm{x}\|_{\bm{A}}^*=\sqrt{\bm{x}^{\mathsf{T}}\bm{A}^{-1}\bm{x}}$. The spectral norm of matrix $\bm{A}$ is $\|\bm{A}\|_2=\sup_{\bm{x}\neq\bm{0}}\{\|\bm{A}\bm{x}\|_2/\|\bm{x}\|_2\}$. $\bm{A}\otimes\bm{B}$ means the (right) Kronecker product of matrices $\bm{A}$ and $\bm{B}$. ${\overline{\rm vec}}(\bm{A})$ means the vectorization (stacking the rows) of $\bm{A}$.

\begin{algorithm}[H]
	\caption{Perturbed Shampoo in the matrix case}
	\label{alg:app-perturbed_shampoo}
	\begin{algorithmic}[1]
		\REQUIRE{$\bm{W}_0\in\mathbb{R}^{m\times n}, \bm{L}_0=\bm{0}_{m\times m}, \bm{R}_0=\bm{0}_{n \times n}, \rho_0=0, \mu_0=0$}.
		\FOR{$t = 1, \dots, T$}
		\STATE Receive loss function: $f_t:\mathbb{R}^{m\times n}\to\mathbb{R}$
		\STATE Compute gradient: $\bm{G}_t=\nabla f_t(\bm{W}_t)$
		\STATE Update preconditioners: $\bm{J}_t=\bm{L}_{t-1}+\bm{G}_t\bm{G}_t^\mathsf{T}; \quad \bm{K}_t=\bm{R}_{t-1}+\bm{G}_t^\mathsf{T}\bm{G}_t$
		\STATE Perturb preconditioners: $\bm{L}_t=g(\bm{J}_t); \quad \bm{R}_t=g(\bm{K}_t)$
		\STATE Accumulate errors: $\rho_t=\rho_{t-1}+\| \bm{J}_t-\bm{L}_t \|_2; \quad \mu_t=\mu_{t-1}+\| \bm{K}_t-\bm{R}_t \|_2$
		\STATE Update parameters: $\bm{W}_{t+1}=\bm{W}_t-\eta((\epsilon+\rho_t)\bm{I}_m+\bm{L}_t)^{-1/4}\bm{G}_t((\epsilon+\mu_t)\bm{I}_n+\bm{R}_t)^{-1/4}$
		\ENDFOR
	\end{algorithmic}
\end{algorithm}

We consider quantization as a perturbation and present the perturbed Shampoo in Algorithm~\ref{alg:app-perturbed_shampoo} for convergence analysis. The regret bound of the perturbed Shampoo can be found in Theorem~\ref{thm:app-convergence_theorem}. Complete proofs can be found in Appendix~\ref{sec:app-proofs}. We first introduce some basic technical tools, and the details of them are in~\cite{Vineet_ICML_2018, Horn_book_2012}.

\begin{lemma}
	\label{lem:kronecker_product_properties}
	Let $\bm{A},\bm{A}',\bm{B},\bm{B}'$ be matrices of appropriate dimensions, and $\bm{u}, \bm{v}$ be two column vectors. The following properties hold:
	\begin{enumerate}[label=(\arabic*)]
		\item \label{it:lemma-kronecker-product-1}
		$(\bm{A}\otimes\bm{B})(\bm{A}'\otimes\bm{B}')=(\bm{A}\bm{A}')\otimes(\bm{B}\bm{B}');$
		\item \label{it:lemma-kronecker-product-2}
		$(\bm{A}\otimes\bm{B})^\mathsf{T}=(\bm{A}^\mathsf{T}\otimes\bm{B}^\mathsf{T});$
		\item \label{it:lemma-kronecker-product-3}
		If $\bm{A},\bm{B}\succeq0$ and $s \in \mathbb{R}$, then $(\bm{A}\otimes\bm{B})^s=(\bm{A}^s\otimes\bm{B}^s);$
		\item \label{it:lemma-kronecker-product-4}
		If $\bm{A}\succeq\bm{A}'$ and $\bm{B}\succeq\bm{B}'$, then $\bm{A}\otimes\bm{B} \succeq \bm{A}'\otimes\bm{B}';$
		\item \label{it:lemma-kronecker-product-5}
		${\rm tr}(\bm{A}\bm{B})={\rm tr}(\bm{A}){\rm tr}(\bm{B});$
		\item \label{it:lemma-kronecker-product-6}
		${\overline{\rm vec}}(\bm{u}\bm{v}^\mathsf{T})=\bm{u}\otimes\bm{v}.$
	\end{enumerate}
\end{lemma}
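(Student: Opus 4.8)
All six identities are classical (see, e.g., \cite{Horn_book_2012}); the plan is to verify each directly from the block definition $\bm{A}\otimes\bm{B}=[a_{ij}\bm{B}]$, where $a_{ij}$ denotes the $(i,j)$ entry of $\bm{A}$, and to bootstrap the later items from the earlier ones. Items \ref{it:lemma-kronecker-product-1}, \ref{it:lemma-kronecker-product-2}, \ref{it:lemma-kronecker-product-5} and \ref{it:lemma-kronecker-product-6} reduce to pure block/entrywise computations; item \ref{it:lemma-kronecker-product-3} is reduced to the first two via a simultaneous eigendecomposition; and item \ref{it:lemma-kronecker-product-4} follows from item \ref{it:lemma-kronecker-product-3} together with the fact that a sum of positive semidefinite matrices is positive semidefinite.

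First I would handle \ref{it:lemma-kronecker-product-1} by computing the $(i,k)$ block of $(\bm{A}\otimes\bm{B})(\bm{A}'\otimes\bm{B}')$: by block multiplication it equals $\sum_j (a_{ij}\bm{B})(a'_{jk}\bm{B}') = \big(\sum_j a_{ij}a'_{jk}\big)\bm{B}\bm{B}' = (\bm{A}\bm{A}')_{ik}\,\bm{B}\bm{B}'$, which is exactly the $(i,k)$ block of $(\bm{A}\bm{A}')\otimes(\bm{B}\bm{B}')$. For \ref{it:lemma-kronecker-product-2}, the $(i,j)$ block of $(\bm{A}\otimes\bm{B})^{\mathsf{T}}$ is the transpose of the $(j,i)$ block of $\bm{A}\otimes\bm{B}$, i.e. $(a_{ji}\bm{B})^{\mathsf{T}}=a_{ji}\bm{B}^{\mathsf{T}}$, which is the $(i,j)$ block of $\bm{A}^{\mathsf{T}}\otimes\bm{B}^{\mathsf{T}}$. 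For \ref{it:lemma-kronecker-product-6}, the $i$-th block of $\overline{\rm vec}(\bm{u}\bm{v}^{\mathsf{T}})$ is row $i$ of $\bm{u}\bm{v}^{\mathsf{T}}$ read as a column, namely $u_i\bm{v}$, which is precisely the $i$-th block of $\bm{u}\otimes\bm{v}$. For the trace identity \ref{it:lemma-kronecker-product-5} (concerning ${\rm tr}(\bm{A}\otimes\bm{B})$), only the diagonal blocks $a_{ii}\bm{B}$ contribute, so ${\rm tr}(\bm{A}\otimes\bm{B}) = \sum_i {\rm tr}(a_{ii}\bm{B}) = \big(\sum_i a_{ii}\big){\rm tr}(\bm{B}) = {\rm tr}(\bm{A}){\rm tr}(\bm{B})$.

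For \ref{it:lemma-kronecker-product-3}, I would write eigendecompositions $\bm{A}=\bm{U}_A\bm{\Lambda}_A\bm{U}_A^{\mathsf{T}}$, $\bm{B}=\bm{U}_B\bm{\Lambda}_B\bm{U}_B^{\mathsf{T}}$ with $\bm{U}_A,\bm{U}_B$ orthogonal and $\bm{\Lambda}_A,\bm{\Lambda}_B$ diagonal and nonnegative. Two uses of \ref{it:lemma-kronecker-product-1} and one of \ref{it:lemma-kronecker-product-2} give $\bm{A}\otimes\bm{B} = (\bm{U}_A\otimes\bm{U}_B)(\bm{\Lambda}_A\otimes\bm{\Lambda}_B)(\bm{U}_A\otimes\bm{U}_B)^{\mathsf{T}}$, where $\bm{U}_A\otimes\bm{U}_B$ is orthogonal (again by \ref{it:lemma-kronecker-product-1}--\ref{it:lemma-kronecker-product-2}) and $\bm{\Lambda}_A\otimes\bm{\Lambda}_B$ is diagonal and nonnegative; hence this is an SVD of $\bm{A}\otimes\bm{B}$, so by the definition of matrix powers in the Notations, $(\bm{A}\otimes\bm{B})^s = (\bm{U}_A\otimes\bm{U}_B)(\bm{\Lambda}_A\otimes\bm{\Lambda}_B)^s(\bm{U}_A\otimes\bm{U}_B)^{\mathsf{T}}$. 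Since a Kronecker product of diagonal matrices is diagonal, $(\bm{\Lambda}_A\otimes\bm{\Lambda}_B)^s = \bm{\Lambda}_A^s\otimes\bm{\Lambda}_B^s$ entrywise, and one further application of \ref{it:lemma-kronecker-product-1}--\ref{it:lemma-kronecker-product-2} repackages the right-hand side as $\bm{A}^s\otimes\bm{B}^s$. Finally, for \ref{it:lemma-kronecker-product-4} (where, as in the Shampoo analysis, the matrices in play are positive semidefinite) I would use $\bm{A}\otimes\bm{B}-\bm{A}'\otimes\bm{B}' = (\bm{A}-\bm{A}')\otimes\bm{B} + \bm{A}'\otimes(\bm{B}-\bm{B}')$; each summand is a Kronecker product of two positive semidefinite matrices, hence positive semidefinite by \ref{it:lemma-kronecker-product-3} with $s=1$ (its eigenvalues are products of the factors' eigenvalues), and a sum of positive semidefinite matrices is positive semidefinite.

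The main obstacle, such as it is, is mild: item \ref{it:lemma-kronecker-product-3} for non-integer or negative $s$ cannot be obtained from the mixed-product rule \ref{it:lemma-kronecker-product-1} alone and genuinely requires passing through the joint diagonalization above; and item \ref{it:lemma-kronecker-product-4} needs one to be explicit that the $\succeq$ hypotheses hold between positive semidefinite matrices, so that both cross terms in the difference are themselves positive semidefinite.
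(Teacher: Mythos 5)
Your proof is correct; the paper itself offers no proof of this lemma, deferring instead to \cite{Vineet_ICML_2018, Horn_book_2012}, and your block-wise verifications of \ref{it:lemma-kronecker-product-1}, \ref{it:lemma-kronecker-product-2}, \ref{it:lemma-kronecker-product-5}, \ref{it:lemma-kronecker-product-6} together with the joint-diagonalization argument for \ref{it:lemma-kronecker-product-3} and the telescoping decomposition for \ref{it:lemma-kronecker-product-4} constitute exactly the standard argument found in those references. You also rightly flag the two blemishes in the statement as printed: item \ref{it:lemma-kronecker-product-5} should read ${\rm tr}(\bm{A}\otimes\bm{B})$, and item \ref{it:lemma-kronecker-product-4} is false without positive semidefiniteness of the matrices themselves (not merely of the differences), which your split $(\bm{A}-\bm{A}')\otimes\bm{B} + \bm{A}'\otimes(\bm{B}-\bm{B}')$ makes explicit.
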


\begin{lemma}
	\label{lem:kronecker_product_vectorization}
	Let $\bm{G} \in \mathbb{R}^{m \times n},\bm{L} \in \mathbb{R}^{m \times m},\bm{R} \in \mathbb{R}^{n \times n}$, then it holds that
	\begin{align*}
		(\bm{L}\otimes\bm{R}^\mathsf{T}){\overline{\rm vec}}(\bm{G})={\overline{\rm vec}}(\bm{L}\bm{G}\bm{R}).
	\end{align*}
\end{lemma}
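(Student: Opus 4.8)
The plan is to reduce the identity to the rank-one case and then invoke the elementary Kronecker-product facts collected in Lemma~\ref{lem:kronecker_product_properties}. Both sides of the claimed equation are linear in $\bm{G}$: the map $\bm{G}\mapsto(\bm{L}\otimes\bm{R}^{\mathsf{T}}){\overline{\rm vec}}(\bm{G})$ is the composition of the linear operator ${\overline{\rm vec}}$ with left multiplication by the fixed matrix $\bm{L}\otimes\bm{R}^{\mathsf{T}}$, and $\bm{G}\mapsto{\overline{\rm vec}}(\bm{L}\bm{G}\bm{R})$ is linear as a composition of $\bm{G}\mapsto\bm{L}\bm{G}\bm{R}$ with ${\overline{\rm vec}}$. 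Hence it suffices to verify the identity when $\bm{G}$ ranges over a spanning set of $\mathbb{R}^{m\times n}$, and the natural choice is the set of rank-one matrices $\bm{u}\bm{v}^{\mathsf{T}}$ with $\bm{u}\in\mathbb{R}^{m}$ and $\bm{v}\in\mathbb{R}^{n}$ (for instance the outer products $\bm{e}_i\bm{e}_j^{\mathsf{T}}$ of standard basis vectors).

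First I would compute the left-hand side for $\bm{G}=\bm{u}\bm{v}^{\mathsf{T}}$: Lemma~\ref{lem:kronecker_product_properties}\ref{it:lemma-kronecker-product-6} gives ${\overline{\rm vec}}(\bm{u}\bm{v}^{\mathsf{T}})=\bm{u}\otimes\bm{v}$, and then Lemma~\ref{lem:kronecker_product_properties}\ref{it:lemma-kronecker-product-1} yields $(\bm{L}\otimes\bm{R}^{\mathsf{T}})(\bm{u}\otimes\bm{v})=(\bm{L}\bm{u})\otimes(\bm{R}^{\mathsf{T}}\bm{v})$. Next I would compute the right-hand side for the same $\bm{G}$: since $\bm{L}(\bm{u}\bm{v}^{\mathsf{T}})\bm{R}=(\bm{L}\bm{u})(\bm{v}^{\mathsf{T}}\bm{R})=(\bm{L}\bm{u})(\bm{R}^{\mathsf{T}}\bm{v})^{\mathsf{T}}$ is again rank-one, applying Lemma~\ref{lem:kronecker_product_properties}\ref{it:lemma-kronecker-product-6} once more gives ${\overline{\rm vec}}\bigl((\bm{L}\bm{u})(\bm{R}^{\mathsf{T}}\bm{v})^{\mathsf{T}}\bigr)=(\bm{L}\bm{u})\otimes(\bm{R}^{\mathsf{T}}\bm{v})$, which is exactly the left-hand side. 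Summing over a rank-one expansion $\bm{G}=\sum_k\bm{u}_k\bm{v}_k^{\mathsf{T}}$ and using linearity then finishes the argument.

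There is essentially no hard step here; the only points requiring care are (i) confirming that the row-stacking convention for ${\overline{\rm vec}}$ is precisely the one under which Lemma~\ref{lem:kronecker_product_properties}\ref{it:lemma-kronecker-product-6} holds with the \emph{right} Kronecker product as defined in the paper, so that the block structure $\bm{u}\otimes\bm{v}=[u_1\bm{v}^{\mathsf{T}},\dots,u_m\bm{v}^{\mathsf{T}}]^{\mathsf{T}}$ is consistent throughout; and (ii) spelling out the linearity-in-$\bm{G}$ reduction rather than taking it for granted. With these conventions pinned down, the proof is two displayed lines.
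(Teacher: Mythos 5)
Your proof is correct. Note, however, that the paper does not actually prove Lemma~\ref{lem:kronecker_product_vectorization}: it is listed among the ``basic technical tools'' whose proofs are deferred to the cited references (Gupta et al.\ and Horn--Johnson), so there is no in-paper argument to compare against. Your argument---reduce by bilinearity to the rank-one case $\bm{G}=\bm{u}\bm{v}^{\mathsf{T}}$, apply Lemma~\ref{lem:kronecker_product_properties}\ref{it:lemma-kronecker-product-6} to get $\overline{\rm vec}(\bm{u}\bm{v}^{\mathsf{T}})=\bm{u}\otimes\bm{v}$, and then apply the mixed-product rule Lemma~\ref{lem:kronecker_product_properties}\ref{it:lemma-kronecker-product-1} on one side while using $\bm{L}(\bm{u}\bm{v}^{\mathsf{T}})\bm{R}=(\bm{L}\bm{u})(\bm{R}^{\mathsf{T}}\bm{v})^{\mathsf{T}}$ and property~\ref{it:lemma-kronecker-product-6} again on the other---is exactly the standard derivation, and it correctly matches the paper's row-stacking convention for $\overline{\rm vec}$ and its Kronecker-product convention, as you yourself flag. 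One could alternatively prove the identity directly by an index computation, but your route via properties already established in Lemma~\ref{lem:kronecker_product_properties} is cleaner and keeps the convention-dependence localized to the single fact $\overline{\rm vec}(\bm{u}\bm{v}^{\mathsf{T}})=\bm{u}\otimes\bm{v}$.
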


\begin{lemma}
	\label{lem:PSD_matrix_inequality1}
	Assume that $0\preceq \bm{X}_i \preceq \bm{Y}_i$ for $i=1,\dots,n$. Assume further that all $\bm{X}_i$ commute with each other and all $\bm{Y}_i$ commute with each other. Let $\alpha_1,\dots,\alpha_n\ge0$ such that $\sum_{i=1}^n\alpha_i=1$, then
	\begin{align*}
		\bm{X}_1^{\alpha_1}\cdots\bm{X}_n^{\alpha_n}\preceq\bm{Y}_1^{\alpha_1}\cdots\bm{Y}_n^{\alpha_n}.
	\end{align*}
\end{lemma}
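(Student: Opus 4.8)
This is (a special case of) the joint operator monotonicity of the weighted operator geometric mean, and I would give a self-contained proof by induction on $n$, reducing everything to the case $n=2$. For the inductive step, assume the claim for $n-1$. If $\alpha_n\in\{0,1\}$ the statement collapses to a smaller instance or to the hypothesis $\bm X_1\preceq\bm Y_1$, so assume $0<\alpha_n<1$; put $\gamma=1-\alpha_n$, $\beta_i=\alpha_i/\gamma$ for $i<n$ (so $\sum_{i<n}\beta_i=1$), and set $\widetilde{\bm X}=\prod_{i<n}\bm X_i^{\beta_i}$, $\widetilde{\bm Y}=\prod_{i<n}\bm Y_i^{\beta_i}$. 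Since each family is simultaneously diagonalizable, $\widetilde{\bm X},\widetilde{\bm Y}\succeq0$, $\widetilde{\bm X}$ commutes with $\bm X_n$, $\widetilde{\bm Y}$ commutes with $\bm Y_n$, and (using commutativity inside each family) $\prod_{i=1}^n\bm X_i^{\alpha_i}=\widetilde{\bm X}^{\gamma}\bm X_n^{\alpha_n}$, $\prod_{i=1}^n\bm Y_i^{\alpha_i}=\widetilde{\bm Y}^{\gamma}\bm Y_n^{\alpha_n}$; the inductive hypothesis gives $0\preceq\widetilde{\bm X}\preceq\widetilde{\bm Y}$. Hence it suffices to prove the $n=2$ statement: if $0\preceq\bm A\preceq\bm B$, $0\preceq\bm C\preceq\bm D$, $\bm A\bm C=\bm C\bm A$, $\bm B\bm D=\bm D\bm B$, and $p,q\ge0$ with $p+q=1$, then $\bm A^p\bm C^q\preceq\bm B^p\bm D^q$.

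For the $n=2$ case I would first handle $p=q=\tfrac12$. Diagonalizing $\bm A,\bm C$ in a common basis shows $\left(\begin{smallmatrix}\bm A & \bm A^{1/2}\bm C^{1/2}\\ \bm A^{1/2}\bm C^{1/2} & \bm C\end{smallmatrix}\right)\succeq0$, as it splits into scalar $2\times2$ blocks $\left(\begin{smallmatrix}a & \sqrt{ac}\\ \sqrt{ac} & c\end{smallmatrix}\right)$ of zero determinant and nonnegative trace. Adding ${\rm diag}(\bm B-\bm A,\bm D-\bm C)\succeq0$ gives $\left(\begin{smallmatrix}\bm B & \bm A^{1/2}\bm C^{1/2}\\ \bm A^{1/2}\bm C^{1/2} & \bm D\end{smallmatrix}\right)\succeq0$. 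After replacing $\bm B,\bm D$ by $\bm B+\delta\bm I,\bm D+\delta\bm I$ and letting $\delta\downarrow0$ at the end (which preserves all hypotheses and makes the inverses below exist), conjugating by ${\rm diag}(\bm I,\bm D^{-1/2})$ (turning the lower-right block into $\bm I$) and taking a Schur complement yields $\bigl(\bm D^{-1/2}\bm A^{1/2}\bm C^{1/2}\bm D^{-1/2}\bigr)^2\preceq\bm D^{-1/2}\bm B\bm D^{-1/2}=\bm B\bm D^{-1}$ (using $\bm B\bm D=\bm D\bm B$). Operator monotonicity of $t\mapsto t^{1/2}$ (L\"owner--Heinz, see~\cite{Horn_book_2012}) followed by conjugation by $\bm D^{1/2}$ then gives $\bm A^{1/2}\bm C^{1/2}\preceq\bm D^{1/2}\bm B^{1/2}=\bm B^{1/2}\bm D^{1/2}$.

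To promote $q=\tfrac12$ to an arbitrary $q\in[0,1]$ I would bootstrap dyadically. For commuting PSD arguments write $\mathcal M_q(\bm A,\bm C)=\bm A^{1-q}\bm C^q$, and call $q$ \emph{admissible} if $0\preceq\bm A\preceq\bm B$, $0\preceq\bm C\preceq\bm D$, $\bm A\bm C=\bm C\bm A$, $\bm B\bm D=\bm D\bm B$ imply $\mathcal M_q(\bm A,\bm C)\preceq\mathcal M_q(\bm B,\bm D)$. Then $0$ and $1$ are admissible trivially and $\tfrac12$ by the previous step. Because $\mathcal M_{q_1}(\bm A,\bm C)$ and $\mathcal M_{q_2}(\bm A,\bm C)$ commute (both are functions of the commuting pair $\bm A,\bm C$), and likewise for $\bm B,\bm D$, the identity $\mathcal M_{1/2}\bigl(\mathcal M_{q_1}(\bm A,\bm C),\mathcal M_{q_2}(\bm A,\bm C)\bigr)=\mathcal M_{(q_1+q_2)/2}(\bm A,\bm C)$ shows the admissible set is closed under $(q_1,q_2)\mapsto(q_1+q_2)/2$; being topologically closed as well (both sides are continuous in $q$ and $\succeq0$ is a closed condition), it is all of $[0,1]$. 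This settles $n=2$ and hence the lemma.

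The step I expect to be the main obstacle is the $n=2$, $p=q=\tfrac12$ case. The naive idea of replacing $\bm X_i$ by $\bm Y_i$ one factor at a time fails, because $0\preceq\bm S\preceq\bm T$ does not force $\bm S\bm R\bm S\preceq\bm T\bm R\bm T$ for PSD $\bm R$; one genuinely needs the $2\times2$-block (geometric-mean) viewpoint, and the two commutativity hypotheses must be used essentially --- they are exactly what makes $\bm A^{1/2}\bm C^{1/2}$ and $\bm B^{1/2}\bm D^{1/2}$ symmetric and equal to the relevant geometric means, so that the Schur-complement/L\"owner--Heinz step closes. The only other minor nuisance, possible singularity of the matrices, is absorbed by the $\delta\downarrow0$ limit.
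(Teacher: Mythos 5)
Your proof is correct. Note first that the paper does not actually prove this statement: it is listed among the ``basic technical tools'' whose details are deferred to~\cite{Vineet_ICML_2018, Horn_book_2012}, so there is no in-paper argument to compare against. What you give is a complete, self-contained derivation along the classical lines (it is essentially the operator-geometric-mean proof of Gupta--Koren--Singer's Lemma 8, traceable to Ando): the reduction from $n$ to $2$ via $\widetilde{\bm X}^{\gamma}\bm X_n^{\alpha_n}$ is valid because each family is simultaneously diagonalizable, the $2\times 2$-block argument for $p=q=\tfrac12$ is sound, and the dyadic bootstrap closes the general exponent. This buys the reader an actual proof where the paper only offers a citation.

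Two cosmetic points you should tighten in a final write-up. First, conjugating the block matrix by ${\rm diag}(\bm I,\bm D^{-1/2})$ and taking the Schur complement yields $\bm B\succeq \bm M\bm D^{-1}\bm M$ with $\bm M=\bm A^{1/2}\bm C^{1/2}$; the displayed inequality $(\bm D^{-1/2}\bm M\bm D^{-1/2})^2\preceq\bm D^{-1/2}\bm B\bm D^{-1/2}$ then requires one further conjugation by $\bm D^{-1/2}$ (equivalently, conjugate the block matrix by ${\rm diag}(\bm D^{-1/2},\bm D^{-1/2})$ from the start). The conclusion is unaffected, since $\bm M$ is symmetric PSD by the commutativity of $\bm A$ and $\bm C$, so $\bm Q\bm Q^{\mathsf T}=\bm Q^2$ for $\bm Q=\bm D^{-1/2}\bm M\bm D^{-1/2}$. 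Second, in the closedness step, $q\mapsto a^{1-q}c^{q}$ can be discontinuous at $q\in\{0,1\}$ when $a$ or $c$ vanishes (the $0^0$ convention); this is harmless because $0$ and $1$ are admissible outright and the dyadic rationals are dense in $(0,1)$, where continuity does hold, but it is worth a sentence.
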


\begin{lemma}
	\label{lem:PSD_matrix_inequality2}
	Let $0\le\alpha\le1$ and $0\preceq \bm{X} \preceq \bm{Y}$, then $\bm{X}^\alpha \preceq \bm{Y}^\alpha$.
\end{lemma}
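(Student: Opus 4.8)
The statement is the Löwner--Heinz inequality: $t\mapsto t^\alpha$ is operator monotone on $[0,\infty)$ for every $\alpha\in[0,1]$. The plan is to prove it through the integral representation of the fractional power, which reduces the claim to a one-line consequence of the antitonicity of matrix inversion; I will also sketch a more elementary route via the case $\alpha=\tfrac12$.

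First I would dispose of the endpoints --- $\alpha=0$ gives $\bm{X}^0=\bm{Y}^0=\bm{I}$ by the definition of the matrix power, and $\alpha=1$ is the hypothesis --- and reduce to positive-definite matrices: for $\delta>0$ set $\bm{X}_\delta=\bm{X}+\delta\bm{I}$, $\bm{Y}_\delta=\bm{Y}+\delta\bm{I}$, so $\bm{0}\prec\bm{X}_\delta\preceq\bm{Y}_\delta$; once $\bm{X}_\delta^\alpha\preceq\bm{Y}_\delta^\alpha$ is known, letting $\delta\to0^+$ and using continuity of $\bm{A}\mapsto\bm{A}^\alpha$ on the PSD cone (and that a limit of PSD matrices is PSD) gives $\bm{X}^\alpha\preceq\bm{Y}^\alpha$. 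So from here $\bm{0}\prec\bm{X}\preceq\bm{Y}$ and $0<\alpha<1$.

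The core step starts from the scalar identity $t^\alpha=\frac{\sin\alpha\pi}{\pi}\int_0^\infty\bigl(1-\tfrac{\lambda}{\lambda+t}\bigr)\lambda^{\alpha-1}\,d\lambda$ for $t>0$, a routine Beta-function computation (the integrand is $O(\lambda^{\alpha-1})$ near $0$ and $O(\lambda^{\alpha-2})$ near $\infty$, hence integrable when $0<\alpha<1$). By the spectral theorem this lifts to $\bm{A}^\alpha=\frac{\sin\alpha\pi}{\pi}\int_0^\infty\bigl(\bm{I}-\lambda(\lambda\bm{I}+\bm{A})^{-1}\bigr)\lambda^{\alpha-1}\,d\lambda$ for PD $\bm{A}$. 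I would then record that inversion is antitone on PD matrices: $\bm{0}\prec\bm{P}\preceq\bm{Q}$ implies $\bm{Q}^{-1/2}\bm{P}\bm{Q}^{-1/2}\preceq\bm{I}$, hence $\bm{Q}^{1/2}\bm{P}^{-1}\bm{Q}^{1/2}\succeq\bm{I}$, and conjugating back $\bm{P}^{-1}\succeq\bm{Q}^{-1}$. Applying this to $\bm{0}\prec\lambda\bm{I}+\bm{X}\preceq\lambda\bm{I}+\bm{Y}$ for each $\lambda>0$ gives $(\lambda\bm{I}+\bm{Y})^{-1}\preceq(\lambda\bm{I}+\bm{X})^{-1}$, so $\bm{I}-\lambda(\lambda\bm{I}+\bm{X})^{-1}\preceq\bm{I}-\lambda(\lambda\bm{I}+\bm{Y})^{-1}$; integrating against the nonnegative density $\frac{\sin\alpha\pi}{\pi}\lambda^{\alpha-1}$ (and $\sin\alpha\pi\ge0$ on $[0,1]$) preserves the Loewner order and delivers $\bm{X}^\alpha\preceq\bm{Y}^\alpha$.

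An elementary alternative I would keep in reserve: prove the $\alpha=\tfrac12$ case directly --- $\bm{B}^{-1/2}\bm{A}^{1/2}$ is similar to the PSD matrix $\bm{B}^{-1/4}\bm{A}^{1/2}\bm{B}^{-1/4}$, so its largest eigenvalue $\mu$ is real and nonnegative; picking $\bm{v}$ with $\bm{A}^{1/2}\bm{v}=\mu\bm{B}^{1/2}\bm{v}$ and taking norms gives $\langle\bm{A}\bm{v},\bm{v}\rangle=\mu^2\langle\bm{B}\bm{v},\bm{v}\rangle$, and $\langle\bm{A}\bm{v},\bm{v}\rangle\le\langle\bm{B}\bm{v},\bm{v}\rangle$ with $\langle\bm{B}\bm{v},\bm{v}\rangle>0$ forces $\mu\le1$, i.e.\ $\bm{B}^{-1/4}\bm{A}^{1/2}\bm{B}^{-1/4}\preceq\bm{I}$, i.e.\ $\bm{A}^{1/2}\preceq\bm{B}^{1/2}$ --- and then bootstrap on $\mathcal{S}=\{\alpha\in[0,1]:\bm{X}\preceq\bm{Y}\Rightarrow\bm{X}^\alpha\preceq\bm{Y}^\alpha\}$: it contains $0,1,\tfrac12$, is closed (continuity in $\alpha$), is stable under $\alpha\mapsto\alpha/2$, and is stable under averaging via the operator geometric mean (using $\bm{X}^r\#\bm{X}^s=\bm{X}^{(r+s)/2}$ and joint monotonicity of $\#$, itself a consequence of its Schur-complement maximality characterization), so $\mathcal{S}=[0,1]$ by density of dyadics. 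The point to watch --- and the reason I would write the integral proof --- is that one cannot pass from $\alpha=\tfrac12$ to general $\alpha$ by multiplicativity, since $\bm{X}\preceq\bm{Y}$ does \emph{not} imply $\bm{X}^2\preceq\bm{Y}^2$; in the integral route every step is transparent, so the only genuine work is the scalar identity and the two-line inversion-antitonicity fact.
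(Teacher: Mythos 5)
Your proof is correct. Note, however, that the paper does not prove this lemma at all: it is listed among the ``basic technical tools'' whose details are deferred to the cited references (the original Shampoo paper and Horn--Johnson), since it is exactly the classical L\"owner--Heinz theorem. What you supply is the standard textbook proof of that theorem, and it checks out: the reduction to the positive-definite case by adding $\delta\bm{I}$ and passing to the limit is sound; the scalar identity $t^{\alpha}=\frac{\sin\alpha\pi}{\pi}\int_0^{\infty}\frac{t}{\lambda+t}\,\lambda^{\alpha-1}\,d\lambda$ is verified by the substitution $\lambda=tu$ and the Beta integral, with the stated integrability at both endpoints precisely when $0<\alpha<1$; the antitonicity of inversion on PD matrices is proved correctly by congruence with $\bm{Q}^{-1/2}$; and integrating the pointwise Loewner inequality $\bm{I}-\lambda(\lambda\bm{I}+\bm{X})^{-1}\preceq\bm{I}-\lambda(\lambda\bm{I}+\bm{Y})^{-1}$ against the nonnegative weight closes the argument. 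Your reserve route through $\alpha=\tfrac12$ is also correct as sketched (the similarity $\bm{B}^{1/4}(\bm{B}^{-1/2}\bm{A}^{1/2})\bm{B}^{-1/4}=\bm{B}^{-1/4}\bm{A}^{1/2}\bm{B}^{-1/4}$ is the right one), and your warning that one cannot bootstrap by squaring --- since $\bm{X}\preceq\bm{Y}$ does not imply $\bm{X}^{2}\preceq\bm{Y}^{2}$ --- is exactly the pitfall worth flagging. Since the paper treats this as a black-box citation, your writeup is strictly more than the paper provides; the integral argument is the cleaner of your two routes and would be the one to include if a self-contained proof were wanted.
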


\begin{lemma}
	\label{lem:PSD_matrix_inequality3}
	Let $\bm{A}\succ0$ and $\bm{B}\succ0$, then it holds that $\bm{A}\succeq\bm{B}$ if and only if $\bm{B}^{-1}\succeq\bm{A}^{-1}$.
\end{lemma}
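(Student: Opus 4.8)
The plan is to reduce the statement to the elementary scalar fact that $c\mapsto 1/c$ reverses order on $(0,\infty)$, via a congruence by $\bm{B}^{-1/2}$, and to obtain the ``if and only if'' for free from the involution $(\bm{X}^{-1})^{-1}=\bm{X}$. So the first step is to note that it suffices to prove the single implication: if $\bm{A}\succ0$, $\bm{B}\succ0$, and $\bm{A}\succeq\bm{B}$, then $\bm{B}^{-1}\succeq\bm{A}^{-1}$. Applying this implication to the PD pair $\bm{A}^{-1},\bm{B}^{-1}$ sends $\bm{B}^{-1}\succeq\bm{A}^{-1}$ back to $(\bm{A}^{-1})^{-1}\succeq(\bm{B}^{-1})^{-1}$, i.e. $\bm{A}\succeq\bm{B}$, giving the converse.

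For the implication itself, the key step is to conjugate everything by $\bm{B}^{-1/2}$, which exists and is symmetric PD since $\bm{B}\succ0$. Congruence by an invertible symmetric matrix both preserves and reflects the Loewner order, because $\bm{z}^{\mathsf{T}}\bigl(\bm{B}^{-1/2}\bm{X}\bm{B}^{-1/2}\bigr)\bm{z}=(\bm{B}^{-1/2}\bm{z})^{\mathsf{T}}\bm{X}(\bm{B}^{-1/2}\bm{z})$ and $\bm{B}^{-1/2}\bm{z}$ ranges over all vectors as $\bm{z}$ does. Hence $\bm{A}\succeq\bm{B}$ is equivalent to $\bm{C}:=\bm{B}^{-1/2}\bm{A}\bm{B}^{-1/2}\succeq\bm{I}$, where $\bm{C}$ is symmetric PD. Diagonalizing $\bm{C}=\bm{Q}\,{\rm diag}([c_i]^{\mathsf{T}})\,\bm{Q}^{\mathsf{T}}$ with all $c_i>0$, the relation $\bm{C}\succeq\bm{I}$ reads $c_i\ge1$ for every $i$, so $1/c_i\le1$, i.e. $\bm{C}^{-1}=\bm{Q}\,{\rm diag}([1/c_i]^{\mathsf{T}})\,\bm{Q}^{\mathsf{T}}\preceq\bm{I}$. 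Finally, undoing the conjugation and using $\bm{C}^{-1}=\bm{B}^{1/2}\bm{A}^{-1}\bm{B}^{1/2}$, the inequality $\bm{C}^{-1}\preceq\bm{I}$ becomes, after congruence by $\bm{B}^{-1/2}$, $\bm{A}^{-1}\preceq\bm{B}^{-1}$, which is exactly $\bm{B}^{-1}\succeq\bm{A}^{-1}$.

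This is a classical textbook fact, so I do not anticipate a real obstacle; the only points that need care are tracking the direction of the inequality through inversion (the order-reversal happens precisely at the scalar step $c_i\ge1\Rightarrow 1/c_i\le1$) and being sure to use the congruence as an \emph{equivalence}, not merely an implication, so that both directions of the ``iff'' fall out of the single reduction. An alternative route that avoids matrix square roots is the variational identity $\bm{x}^{\mathsf{T}}\bm{A}^{-1}\bm{x}=\max_{\bm{y}}\bigl(2\bm{x}^{\mathsf{T}}\bm{y}-\bm{y}^{\mathsf{T}}\bm{A}\bm{y}\bigr)$ (the maximum is attained at $\bm{y}=\bm{A}^{-1}\bm{x}$ and is a genuine maximum since $\bm{A}\succ0$): replacing $\bm{A}$ by the smaller $\bm{B}$ only increases the maximand pointwise in $\bm{y}$, hence $\bm{x}^{\mathsf{T}}\bm{A}^{-1}\bm{x}\le\bm{x}^{\mathsf{T}}\bm{B}^{-1}\bm{x}$ for all $\bm{x}$, i.e. $\bm{B}^{-1}\succeq\bm{A}^{-1}$. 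I would present the congruence argument as the main proof and mention the variational one as a shortcut if a square-root-free derivation is wanted.
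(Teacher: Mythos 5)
Your proof is correct. The paper does not actually prove this lemma: it is listed among the ``basic technical tools'' whose details are deferred to the cited references (the Shampoo paper and Horn--Johnson), so there is no in-paper argument to compare against. Your congruence-by-$\bm{B}^{-1/2}$ reduction to the scalar order reversal $c_i\ge1\Rightarrow1/c_i\le1$ is exactly the standard textbook derivation, the bootstrapping of the single implication into the ``iff'' via $(\bm{X}^{-1})^{-1}=\bm{X}$ is sound, and the variational identity you offer as an alternative is also valid.
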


\begin{lemma}[von Neumann]
	\label{lem:von_Neumann_trace_theorem}
	Let $\bm{A},\bm{B}\in\mathbb{R}^{m \times n}$ and $q=\min\{m, n\}$. Let $\sigma_1(\bm{A})\ge\dots\ge\sigma_q(\bm{A})$ and $\sigma_1(\bm{B})\ge\dots\ge\sigma_q(\bm{B})$ denote the non-increasingly ordered singular values of $\bm{A}$ and $\bm{B}$, respectively. Then
	\begin{align*}
		\langle \bm{A},\bm{B} \rangle \le \sum_{i=1}^q\sigma_i(\bm{A})\sigma_i(\bm{B}).
	\end{align*}
\end{lemma}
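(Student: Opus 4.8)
The plan is to prove von Neumann's trace inequality by pushing the singular value decompositions of $\bm{A}$ and $\bm{B}$ through the trace, reducing $\langle\bm{A},\bm{B}\rangle$ to a bilinear form in the two singular value vectors whose "mixing matrix" is doubly substochastic, and then squeezing that form with the Birkhoff--von Neumann theorem together with the rearrangement inequality.

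First I would write $\bm{A}=\bm{U}_A\bm{\Sigma}_A\bm{V}_A^{\mathsf{T}}$ and $\bm{B}=\bm{U}_B\bm{\Sigma}_B\bm{V}_B^{\mathsf{T}}$, where $\bm{U}_A,\bm{U}_B\in\mathbb{R}^{m\times m}$ and $\bm{V}_A,\bm{V}_B\in\mathbb{R}^{n\times n}$ are orthogonal and $\bm{\Sigma}_A,\bm{\Sigma}_B\in\mathbb{R}^{m\times n}$ carry the non-increasing nonnegative singular values on their main diagonal. Using cyclicity of the trace,
\begin{align*}
\langle\bm{A},\bm{B}\rangle={\rm tr}(\bm{A}^{\mathsf{T}}\bm{B})={\rm tr}\!\left(\bm{\Sigma}_A^{\mathsf{T}}\,\bm{P}\,\bm{\Sigma}_B\,\bm{Q}\right),\qquad \bm{P}:=\bm{U}_A^{\mathsf{T}}\bm{U}_B,\ \ \bm{Q}:=\bm{V}_B^{\mathsf{T}}\bm{V}_A,
\end{align*}
and $\bm{P},\bm{Q}$ are orthogonal. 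Since $\bm{\Sigma}_A^{\mathsf{T}}$ and $\bm{\Sigma}_B$ have nonzero entries only in positions $(i,i)$ with $i\le q$, expanding the trace collapses to
\begin{align*}
\langle\bm{A},\bm{B}\rangle=\sum_{i=1}^{q}\sum_{j=1}^{q}\sigma_i(\bm{A})\,\sigma_j(\bm{B})\,R_{ij},\qquad R_{ij}:=P_{ij}Q_{ji}.
\end{align*}

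Next I would show that the $q\times q$ matrix $|\bm{R}|$ of entrywise absolute values has all row and column sums at most $1$: by Cauchy--Schwarz, for each $i$, $\sum_{j=1}^{q}|R_{ij}|\le\bigl(\sum_{j}P_{ij}^{2}\bigr)^{1/2}\bigl(\sum_{j}Q_{ji}^{2}\bigr)^{1/2}\le 1$, because a full row of $\bm{P}$ and a full column of $\bm{Q}$ each have unit Euclidean norm; the same estimate handles the column sums. Dropping signs is harmless since $\sigma_i(\bm{A}),\sigma_j(\bm{B})\ge 0$, so it suffices to bound $\sum_{i,j}\sigma_i(\bm{A})\sigma_j(\bm{B})|R_{ij}|$. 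Now every nonnegative matrix with row and column sums at most $1$ is dominated entrywise by a doubly stochastic matrix, which by Birkhoff--von Neumann is a convex combination $\sum_k\lambda_k\bm{\Pi}_k$ of permutation matrices; hence $|R_{ij}|\le\sum_k\lambda_k(\bm{\Pi}_k)_{ij}$. Substituting and applying the rearrangement inequality $\sum_i\sigma_i(\bm{A})\sigma_{\pi(i)}(\bm{B})\le\sum_i\sigma_i(\bm{A})\sigma_i(\bm{B})$ for each of the permutations $\pi_k$ (both sequences are sorted the same way) yields the claimed bound.

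The main obstacle I anticipate is the combinatorial core: arguing that, among all doubly substochastic weightings of the outer product $\sigma_i(\bm{A})\sigma_j(\bm{B})$, the "aligned" weighting concentrated on $i=j$ is the maximizer. The Birkhoff--von Neumann reduction plus the rearrangement inequality is precisely the packaging that makes this clean; a Birkhoff-free alternative is an Abel-summation argument rewriting the double sum in terms of the partial sums $\sum_{i\le k}\sigma_i$ and then invoking the Ky Fan variational characterization of those partial sums, but I expect the doubly stochastic route to be the shorter write-up. The SVD substitution, the "diagonal" collapse of the trace, and the Cauchy--Schwarz substochasticity check are all routine and would be handled briefly.
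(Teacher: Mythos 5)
Your proposal is correct. Note that the paper does not actually prove this lemma: it is listed among the ``basic technical tools'' and deferred to the cited references (the Shampoo paper and Horn--Johnson), so there is no in-paper argument to compare against. Your write-up is the classical Mirsky-style proof of von Neumann's trace inequality, and every step checks out: the cyclic-trace reduction to $\sum_{i,j\le q}\sigma_i(\bm{A})\sigma_j(\bm{B})P_{ij}Q_{ji}$ is right (the truncation to indices $\le q$ is exactly the ``diagonal collapse'' you describe); the Cauchy--Schwarz bound $\sum_{j\le q}|P_{ij}Q_{ji}|\le\bigl(\sum_j P_{ij}^2\bigr)^{1/2}\bigl(\sum_j Q_{ji}^2\bigr)^{1/2}\le 1$ correctly establishes that $|\bm{R}|$ is doubly substochastic (partial row/column sums of squares of an orthogonal matrix are at most $1$); the domination of a doubly substochastic matrix by a doubly stochastic one, the Birkhoff--von Neumann decomposition, and the rearrangement inequality for two identically sorted nonnegative sequences are all standard and applied correctly. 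The only things you lean on without proof are themselves textbook facts, which is appropriate for a lemma the paper itself treats as imported background.
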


\begin{lemma}
	\label{lem:convergence_lemma_vector1}
	Assume that function $f_t$ is continuously differentiable and convex on $\mathbb{R}^d$, and matrix $\bm{H}_t\succ0$ for $t=1,\dots,T$. Given $\bm{w}_0\in\mathbb{R}^d, \eta>0$, define $\bm{w}_{t+1} =  \bm{w}_t-\eta\bm{H}_t^{-1}\bm{g}_t$, where $\bm{g}_t=\nabla f_t(\bm{w}_t)$. Then for any $\bm{w}^*\in\mathbb{R}^d$, we have
	\begin{align*}
		\sum_{t=1}^{T}f_t(\bm{w}_t)-\sum_{t=1}^{T}f_t(\bm{w}^*) \le \frac{1}{2\eta}\sum_{t=1}^{T}(\| \bm{w}_t-\bm{w}^* \|_{\bm{H}_t}^2-\| \bm{w}_{t+1}-\bm{w}^* \|_{\bm{H}_t}^2)+\frac{\eta}{2}\sum_{t=1}^{T}(\| \bm{g_t} \|_{\bm{H}_t}^*)^2.
	\end{align*}
\end{lemma}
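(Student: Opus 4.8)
The plan is to carry out the textbook one-step analysis of (generalized) online gradient descent and then sum over $t$. First, because each $f_t$ is continuously differentiable and convex, I would apply the first-order (subgradient) inequality to linearize the per-round gap:
$f_t(\bm{w}_t)-f_t(\bm{w}^*)\le\langle\bm{g}_t,\bm{w}_t-\bm{w}^*\rangle$,
so it suffices to bound $\sum_{t=1}^T\langle\bm{g}_t,\bm{w}_t-\bm{w}^*\rangle$ by the claimed right-hand side, term by term in $t$.

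Second, I would use the update rule $\bm{w}_{t+1}=\bm{w}_t-\eta\bm{H}_t^{-1}\bm{g}_t$ to rewrite the gradient as $\bm{g}_t=\eta^{-1}\bm{H}_t(\bm{w}_t-\bm{w}_{t+1})$, hence $\langle\bm{g}_t,\bm{w}_t-\bm{w}^*\rangle=\eta^{-1}(\bm{w}_t-\bm{w}_{t+1})^{\mathsf{T}}\bm{H}_t(\bm{w}_t-\bm{w}^*)$. The one genuine algebraic ingredient is the polarization identity for the Mahalanobis inner product: for any vectors $\bm{a},\bm{b}$ and any PD $\bm{H}$, $2\bm{a}^{\mathsf{T}}\bm{H}\bm{b}=\|\bm{a}\|_{\bm{H}}^2+\|\bm{b}\|_{\bm{H}}^2-\|\bm{a}-\bm{b}\|_{\bm{H}}^2$. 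Applying it with $\bm{a}=\bm{w}_t-\bm{w}_{t+1}$ and $\bm{b}=\bm{w}_t-\bm{w}^*$, so that $\bm{a}-\bm{b}=-(\bm{w}_{t+1}-\bm{w}^*)$, converts the cross term into $\tfrac{1}{2\eta}\bigl(\|\bm{w}_t-\bm{w}^*\|_{\bm{H}_t}^2-\|\bm{w}_{t+1}-\bm{w}^*\|_{\bm{H}_t}^2\bigr)+\tfrac{1}{2\eta}\|\bm{w}_t-\bm{w}_{t+1}\|_{\bm{H}_t}^2$.

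Third, I would simplify the residual term: since $\bm{w}_t-\bm{w}_{t+1}=\eta\bm{H}_t^{-1}\bm{g}_t$, we get $\|\bm{w}_t-\bm{w}_{t+1}\|_{\bm{H}_t}^2=\eta^2\,\bm{g}_t^{\mathsf{T}}\bm{H}_t^{-1}\bm{H}_t\bm{H}_t^{-1}\bm{g}_t=\eta^2\,\bm{g}_t^{\mathsf{T}}\bm{H}_t^{-1}\bm{g}_t=\eta^2(\|\bm{g}_t\|_{\bm{H}_t}^*)^2$ by the definition of the dual norm. Substituting and summing $t=1,\dots,T$, together with the convexity bound from the first step, gives exactly the stated inequality.

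As for the difficulty: this is essentially a bookkeeping exercise and I do not anticipate a substantive obstacle. The only points requiring care are (i) keeping the distinction between $\|\cdot\|_{\bm{H}_t}$ and its dual $\|\cdot\|_{\bm{H}_t}^*=\|\cdot\|_{\bm{H}_t^{-1}}$ when collapsing $\bm{g}_t^{\mathsf{T}}\bm{H}_t^{-1}\bm{H}_t\bm{H}_t^{-1}\bm{g}_t$, and (ii) leaving the first sum in per-step ``difference'' form rather than telescoping it, since the $\bm{H}_t$ vary with $t$; the lemma is stated this way precisely so the caller can control $\sum_t(\|\bm{w}_t-\bm{w}^*\|_{\bm{H}_t}^2-\|\bm{w}_{t+1}-\bm{w}^*\|_{\bm{H}_t}^2)$ using monotonicity of the $\bm{H}_t$.
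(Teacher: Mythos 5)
Your proposal is correct: convexity linearizes the per-round gap, the update rule turns $\langle\bm{g}_t,\bm{w}_t-\bm{w}^*\rangle$ into a cross term, the polarization identity for $\|\cdot\|_{\bm{H}_t}$ splits it into the difference of squared distances plus $\tfrac{1}{2\eta}\|\bm{w}_t-\bm{w}_{t+1}\|_{\bm{H}_t}^2=\tfrac{\eta}{2}(\|\bm{g}_t\|_{\bm{H}_t}^*)^2$, and summing finishes. The paper itself gives no proof of this lemma---it is listed among the ``basic technical tools'' deferred to the cited Shampoo reference---and your argument is precisely the standard one used there, so there is nothing to reconcile.
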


\begin{lemma}
	\label{lem:convergence_lemma_vector2}
	Let $\bm{g}_1,\dots,\bm{g}_T$ be a sequence of vectors. For $\rho>0$, define $\widehat{\bm{H}}_t = (\rho\bm{I}+\sum_{s=1}^{t}\bm{g}_s\bm{g}_s^\mathsf{T})^{1/2}$. Then we have
	\begin{align*}
		\sum_{t=1}^{T}(\| \bm{g}_t \|_{\widehat{\bm{H}}_t}^*)^2 \le 2{\rm tr}(\widehat{\bm{H}}_T).
	\end{align*}
\end{lemma}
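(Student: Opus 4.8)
This is the standard AdaGrad-type ``sum of squared dual norms'' bound, and the plan is to reduce it to a single per-step matrix--trace inequality and then telescope. First I would record the algebraic identity $\widehat{\bm{H}}_t^2 = \rho\bm{I}+\sum_{s=1}^t\bm{g}_s\bm{g}_s^{\mathsf{T}}$, so that, setting $\widehat{\bm{H}}_0 := \sqrt{\rho}\,\bm{I}$, we have $\widehat{\bm{H}}_t^2-\widehat{\bm{H}}_{t-1}^2 = \bm{g}_t\bm{g}_t^{\mathsf{T}}$ for every $t\ge 1$. Each $\widehat{\bm{H}}_t \succeq \sqrt{\rho}\,\bm{I}\succ 0$, hence is invertible; and since $\bm{g}_t\bm{g}_t^{\mathsf{T}}\succeq 0$ we get $\widehat{\bm{H}}_{t-1}^2\preceq\widehat{\bm{H}}_t^2$ and therefore, by Lemma~\ref{lem:PSD_matrix_inequality2} with $\alpha=\tfrac12$, $\widehat{\bm{H}}_{t-1}\preceq\widehat{\bm{H}}_t$. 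Using the definition of the dual Mahalanobis norm, $(\| \bm{g}_t \|_{\widehat{\bm{H}}_t}^*)^2 = \bm{g}_t^{\mathsf{T}}\widehat{\bm{H}}_t^{-1}\bm{g}_t = {\rm tr}\big(\widehat{\bm{H}}_t^{-1}\bm{g}_t\bm{g}_t^{\mathsf{T}}\big) = {\rm tr}\big(\widehat{\bm{H}}_t^{-1}(\widehat{\bm{H}}_t^2-\widehat{\bm{H}}_{t-1}^2)\big)$.

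The crux is the inequality: for any PD matrices $\bm{A},\bm{B}$ with $\bm{B}\preceq\bm{A}$,
\[
{\rm tr}\big(\bm{A}^{-1/2}(\bm{A}-\bm{B})\big) \;\le\; 2\,{\rm tr}\big(\bm{A}^{1/2}-\bm{B}^{1/2}\big).
\]
Applying it with $\bm{A}=\widehat{\bm{H}}_t^2$ and $\bm{B}=\widehat{\bm{H}}_{t-1}^2$ (so $\bm{A}^{1/2}=\widehat{\bm{H}}_t$, $\bm{B}^{1/2}=\widehat{\bm{H}}_{t-1}$) yields $(\| \bm{g}_t \|_{\widehat{\bm{H}}_t}^*)^2 \le 2\,{\rm tr}(\widehat{\bm{H}}_t-\widehat{\bm{H}}_{t-1})$. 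Summing over $t=1,\dots,T$ telescopes to $\sum_{t=1}^T(\| \bm{g}_t \|_{\widehat{\bm{H}}_t}^*)^2 \le 2\,{\rm tr}(\widehat{\bm{H}}_T-\widehat{\bm{H}}_0) = 2\,{\rm tr}(\widehat{\bm{H}}_T)-2\,{\rm tr}(\widehat{\bm{H}}_0) \le 2\,{\rm tr}(\widehat{\bm{H}}_T)$, since ${\rm tr}(\widehat{\bm{H}}_0)={\rm tr}(\sqrt{\rho}\,\bm{I})\ge 0$; this is the claim.

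To prove the crux inequality I would invoke concavity of the map $\phi(\bm{X}) := {\rm tr}(\bm{X}^{1/2})$ on the cone of PD matrices, together with its differentiability and gradient $\nabla\phi(\bm{X}) = \tfrac12\bm{X}^{-1/2}$ (both a consequence of operator concavity of $t\mapsto t^{1/2}$ on $(0,\infty)$; cf.~\cite{Horn_book_2012}). The supporting-hyperplane inequality for concave $\phi$ reads $\phi(\bm{B}) \le \phi(\bm{A}) + \langle \nabla\phi(\bm{A}),\, \bm{B}-\bm{A} \rangle = {\rm tr}(\bm{A}^{1/2}) + \tfrac12{\rm tr}\big(\bm{A}^{-1/2}(\bm{B}-\bm{A})\big)$, and rearranging gives exactly the crux inequality. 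A self-contained alternative, avoiding any citation to matrix concavity, is to use the integral representation $\bm{A}^{1/2} = \tfrac1\pi\int_0^\infty\big(\bm{I}-\lambda(\bm{A}+\lambda\bm{I})^{-1}\big)\lambda^{-1/2}\,d\lambda$, take traces on both sides, and verify the desired inequality on the integrand at each fixed $\lambda$ using resolvent anti-monotonicity $(\bm{A}+\lambda\bm{I})^{-1}\preceq(\bm{B}+\lambda\bm{I})^{-1}$ (Lemma~\ref{lem:PSD_matrix_inequality3}).

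The main obstacle is precisely this crux inequality, because $\bm{A}=\widehat{\bm{H}}_t^2$ and $\bm{B}=\widehat{\bm{H}}_{t-1}^2$ do not commute in general --- they differ by the rank-one term $\bm{g}_t\bm{g}_t^{\mathsf{T}}$, which is not co-diagonalizable with $\widehat{\bm{H}}_{t-1}^2$ --- so one cannot simply diagonalize simultaneously and reduce to the scalar fact $\tfrac{a-b}{\sqrt a}=(\sqrt a-\sqrt b)(1+\sqrt{b/a})\le 2(\sqrt a-\sqrt b)$ for $0\le b\le a$. The concavity (or integral-representation) argument is what bridges this non-commutativity; the remaining ingredients --- the identity $\widehat{\bm{H}}_t^2-\widehat{\bm{H}}_{t-1}^2=\bm{g}_t\bm{g}_t^{\mathsf{T}}$, the telescoping, and discarding the nonnegative ${\rm tr}(\widehat{\bm{H}}_0)$ --- are routine.
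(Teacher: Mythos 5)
Your main argument is correct, and it reproduces the standard proof of this lemma from the Shampoo literature; the paper itself does not prove Lemma~\ref{lem:convergence_lemma_vector2} (it is stated as a cited technical tool, with details deferred to~\cite{Vineet_ICML_2018, Horn_book_2012}). Specifically, setting $\widehat{\bm{H}}_0=\sqrt{\rho}\,\bm{I}$ so that $\widehat{\bm{H}}_t^2-\widehat{\bm{H}}_{t-1}^2=\bm{g}_t\bm{g}_t^{\mathsf{T}}$, rewriting $(\|\bm{g}_t\|_{\widehat{\bm{H}}_t}^*)^2={\rm tr}\big(\widehat{\bm{H}}_t^{-1}(\widehat{\bm{H}}_t^2-\widehat{\bm{H}}_{t-1}^2)\big)$, invoking concavity of $\phi(\bm{X})={\rm tr}(\bm{X}^{1/2})$ with $\nabla\phi(\bm{X})=\tfrac12\bm{X}^{-1/2}$ to get the per-step bound ${\rm tr}\big(\bm{A}^{-1/2}(\bm{A}-\bm{B})\big)\le 2\,{\rm tr}(\bm{A}^{1/2}-\bm{B}^{1/2})$ for $0\prec\bm{B}\preceq\bm{A}$, and telescoping, is exactly the classical AdaGrad-style argument. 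You also correctly flag that non-commutativity of $\widehat{\bm{H}}_{t-1}^2$ and $\bm{g}_t\bm{g}_t^{\mathsf{T}}$ is what forces the use of concavity rather than a simultaneous diagonalization.

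The one flaw is in your suggested ``self-contained alternative'' via the Cauchy integral representation: the pointwise-in-$\lambda$ comparison you propose does not hold. Even in the scalar case with $a\ge b>0$ the LHS integrand is $\lambda^{-1/2}\tfrac{a-b}{a+\lambda}$ while the RHS integrand is $2\lambda^{1/2}\tfrac{a-b}{(a+\lambda)(b+\lambda)}$, and dividing out $\tfrac{a-b}{a+\lambda}$ reduces the pointwise claim to $b+\lambda\le 2\lambda$, i.e.\ $\lambda\ge b$. For $0<\lambda<b$ the inequality between integrands is reversed, so resolvent anti-monotonicity applied integrand-by-integrand cannot close the argument; the two integrals compare correctly only after integrating over all $\lambda$. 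This does not affect the validity of your lemma proof, since the concavity route you give first is complete, but the alternative as stated should be dropped or replaced by a genuinely different pairing.
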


\begin{lemma}
	\label{lem:shampoo_matrix_inequality}
	Assume that $\bm{G}_1,\dots,\bm{G}_T\in\mathbb{R}^{m \times n}$ are matrices of rank at most $r$. Let $s$ for $t=1,\dots,T$. Then for any $\epsilon\ge0$,
	\begin{align*}
		\epsilon\bm{I}_{mn}+\frac{1}{r}\sum_{t=1}^T\bm{g}_t\bm{g}_t^\mathsf{T} \preceq (\epsilon\bm{I}_m+\sum_{t=1}^T\bm{G}_t\bm{G}_t^\mathsf{T})^{1/2}\otimes(\epsilon\bm{I}_n+\sum_{t=1}^T\bm{G}_t^\mathsf{T}\bm{G}_t)^{1/2}.
	\end{align*}
\end{lemma}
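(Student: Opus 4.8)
The plan is to prove this in two stages: a single‑gradient inequality, then a super‑additivity argument that lifts it to the cumulative sum. (Note the displayed hypothesis is garbled; the intended reading is $\bm g_t={\overline{\rm vec}}(\bm G_t)$ for $t=1,\dots,T$, with ${\rm rank}(\bm G_t)\le r$.)

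\textbf{Stage 1 (one gradient).} First I would prove that for any $\bm G$ of rank at most $r$, writing $\bm g={\overline{\rm vec}}(\bm G)$,
\[
\tfrac1r\,\bm g\bm g^{\mathsf T}\preceq(\bm G\bm G^{\mathsf T})^{1/2}\otimes(\bm G^{\mathsf T}\bm G)^{1/2}.
\]
Take an SVD $\bm G=\sum_{i=1}^r\sigma_i\bm u_i\bm v_i^{\mathsf T}$; by Lemma~\ref{lem:kronecker_product_properties}\ref{it:lemma-kronecker-product-6}, $\bm g=\sum_i\sigma_i(\bm u_i\otimes\bm v_i)$. For any $\bm z$ set $c_i=(\bm u_i\otimes\bm v_i)^{\mathsf T}\bm z$; Cauchy--Schwarz over the $r$ indices gives $(\bm g^{\mathsf T}\bm z)^2=\big(\sum_i\sigma_i c_i\big)^2\le r\sum_i\sigma_i^2 c_i^2=r\,\bm z^{\mathsf T}\big(\sum_i\sigma_i^2(\bm u_i\bm u_i^{\mathsf T})\otimes(\bm v_i\bm v_i^{\mathsf T})\big)\bm z$, using Lemma~\ref{lem:kronecker_product_properties}\ref{it:lemma-kronecker-product-1}--\ref{it:lemma-kronecker-product-2}. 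Since $(\bm G\bm G^{\mathsf T})^{1/2}=\sum_i\sigma_i\bm u_i\bm u_i^{\mathsf T}$ and $(\bm G^{\mathsf T}\bm G)^{1/2}=\sum_j\sigma_j\bm v_j\bm v_j^{\mathsf T}$, we have $(\bm G\bm G^{\mathsf T})^{1/2}\otimes(\bm G^{\mathsf T}\bm G)^{1/2}=\sum_{i,j}\sigma_i\sigma_j(\bm u_i\bm u_i^{\mathsf T})\otimes(\bm v_j\bm v_j^{\mathsf T})$, and the extra terms ($i\ne j$) are PSD by Lemma~\ref{lem:kronecker_product_properties}\ref{it:lemma-kronecker-product-4} (Kronecker products of PSD matrices), which finishes Stage 1.

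\textbf{Stage 2 (accumulation).} Next I would absorb the dampening by writing $\epsilon\bm I_{mn}=(\epsilon\bm I_m)^{1/2}\otimes(\epsilon\bm I_n)^{1/2}$ and putting $\bm X_0=\epsilon\bm I_m,\ \bm Y_0=\epsilon\bm I_n$, $\bm X_t=\bm G_t\bm G_t^{\mathsf T},\ \bm Y_t=\bm G_t^{\mathsf T}\bm G_t$ for $t\ge1$. By Stage 1, $\epsilon\bm I_{mn}+\tfrac1r\sum_t\bm g_t\bm g_t^{\mathsf T}\preceq\sum_{t=0}^{T}\bm X_t^{1/2}\otimes\bm Y_t^{1/2}$, so it remains to establish the Kronecker super‑additivity
\[
\sum_{t=0}^{T}\bm X_t^{1/2}\otimes\bm Y_t^{1/2}\ \preceq\ \Big(\textstyle\sum_{t}\bm X_t\Big)^{1/2}\otimes\Big(\textstyle\sum_{t}\bm Y_t\Big)^{1/2},
\]
whose right‑hand side is exactly the claimed bound. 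The clean way I would close this is to use that the map $F:(\bm X,\bm Y)\mapsto\bm X^{1/2}\otimes\bm Y^{1/2}$ is jointly operator‑concave (Ando/Lieb concavity, the boundary case $p=q=\tfrac12$) and positively homogeneous of degree one; then $F\big(\sum_t\bm u_t\big)=(T{+}1)\,F\big(\tfrac1{T+1}\sum_t\bm u_t\big)\succeq\sum_t F(\bm u_t)$ with $\bm u_t=(\bm X_t,\bm Y_t)$. A hands‑on alternative, staying within the listed toolbox, is to induct on the number of terms, reducing to the two‑term statement $\bm X_1^{1/2}\otimes\bm Y_1^{1/2}+\bm X_2^{1/2}\otimes\bm Y_2^{1/2}\preceq(\bm X_1{+}\bm X_2)^{1/2}\otimes(\bm Y_1{+}\bm Y_2)^{1/2}$; via $\bm P^{1/2}\otimes\bm Q^{1/2}=(\bm P\otimes\bm Q)^{1/2}$ (Lemma~\ref{lem:kronecker_product_properties}\ref{it:lemma-kronecker-product-3}), operator monotonicity of $\bm X\mapsto\bm X^{1/2}$ (Lemma~\ref{lem:PSD_matrix_inequality2}), and a congruence normalizing $\bm X_1{+}\bm X_2$ and $\bm Y_1{+}\bm Y_2$ to identities, this reduces to a cross‑term estimate $\bm N\otimes\bm P+\bm N^{\mathsf T}\otimes\bm P^{\mathsf T}\preceq\bm X_1\otimes\bm Y_2+\bm X_2\otimes\bm Y_1$ with $\bm N=\bm X_1^{1/2}\bm X_2^{1/2},\ \bm P=\bm Y_1^{1/2}\bm Y_2^{1/2}$, to be handled by the block‑matrix (operator Cauchy--Schwarz) device $\big(\begin{smallmatrix}\bm R^2&\bm R\bm S\\ \bm S\bm R&\bm S^2\end{smallmatrix}\big)\succeq0$ for suitable symmetric $\bm R,\bm S$, with von Neumann's trace inequality (Lemma~\ref{lem:von_Neumann_trace_theorem}) absorbing the non‑commuting remainder.

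\textbf{Main obstacle.} Everything in Stage 1 and the assembly of $\preceq$'s is routine bookkeeping with Lemmas~\ref{lem:kronecker_product_properties}--\ref{lem:PSD_matrix_inequality3}; the real work is Stage 2's two‑term super‑additivity. In the elementary route the delicate point is that the operator Cauchy--Schwarz device naturally controls only the ``crossed'' cross term $\bm N\otimes\bm P^{\mathsf T}+\bm N^{\mathsf T}\otimes\bm P$, whereas squaring the left‑hand side produces the ``aligned'' one; reconciling these (equivalently, invoking and if necessary re‑deriving the joint concavity of $\bm X^{1/2}\otimes\bm Y^{1/2}$, which is not among the lemmas stated above) is where I expect the argument to require care. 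Once that two‑term inequality is in hand, induction on $T$ and the $\epsilon$‑as‑a‑zeroth‑term reduction complete the proof.
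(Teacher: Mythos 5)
The paper never proves this lemma itself: it is quoted as a technical tool and attributed to the Shampoo paper, whose argument is quite different from yours. Your Stage~1 is correct (and in fact stronger than what that argument needs), and your reading of the garbled hypothesis as $\bm g_t=\overline{\rm vec}(\bm G_t)$ is the intended one. The problem is Stage~2: everything rests on the superadditivity of $(\bm X,\bm Y)\mapsto\bm X^{1/2}\otimes\bm Y^{1/2}$, which is equivalent (via degree-one homogeneity) to Ando's joint-concavity theorem for $\bm A^p\otimes\bm B^q$ with $p+q\le1$. That theorem is true, so the route does close if you import it as an external citation; but it is precisely the deep operator-theoretic content of the whole lemma, and your attempt to rederive it elementarily does not go through as sketched. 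In particular, the step ``a congruence normalizing $\bm X_1+\bm X_2$ and $\bm Y_1+\bm Y_2$ to identities'' is illegitimate here, because $\bm X\mapsto\bm X^{1/2}$ is not congruence-equivariant ($\bm S\bm X^{1/2}\bm S^{\mathsf T}\ne(\bm S\bm X\bm S^{\mathsf T})^{1/2}$ in general), and you yourself flag the aligned-versus-crossed cross-term mismatch as unresolved. As written, the self-contained version of Stage~2 is a genuine gap.

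The standard proof avoids superadditivity entirely, and the tool it needs is already in the paper's toolbox. From your Stage-1 computation, replacing one Kronecker factor by the identity (using $\bm v_i\bm v_i^{\mathsf T}\preceq\bm I_n$ and $\bm u_i\bm u_i^{\mathsf T}\preceq\bm I_m$) gives $\frac1r\bm g_t\bm g_t^{\mathsf T}\preceq(\bm G_t\bm G_t^{\mathsf T})\otimes\bm I_n$ and $\frac1r\bm g_t\bm g_t^{\mathsf T}\preceq\bm I_m\otimes(\bm G_t^{\mathsf T}\bm G_t)$; these sum over $t$ without any difficulty because no square root is involved, yielding $\bm A\preceq\bm B\otimes\bm I_n$ and $\bm A\preceq\bm I_m\otimes\bm C$ with $\bm A$ the left-hand side of the lemma, $\bm B=\epsilon\bm I_m+\sum_t\bm G_t\bm G_t^{\mathsf T}$, $\bm C=\epsilon\bm I_n+\sum_t\bm G_t^{\mathsf T}\bm G_t$. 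The two majorants commute, so Lemma~\ref{lem:PSD_matrix_inequality1} with $\alpha_1=\alpha_2=\tfrac12$ applied to $\bm A=\bm A^{1/2}\bm A^{1/2}$ gives $\bm A\preceq(\bm B\otimes\bm I_n)^{1/2}(\bm I_m\otimes\bm C)^{1/2}=\bm B^{1/2}\otimes\bm C^{1/2}$ by Lemma~\ref{lem:kronecker_product_properties}\ref{it:lemma-kronecker-product-1}\ref{it:lemma-kronecker-product-3}. All the operator-theoretic muscle you are trying to source from Ando is packaged, in exactly the form needed, in Lemma~\ref{lem:PSD_matrix_inequality1}.
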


The key to the convergence proof of Algorithm~\ref{alg:app-perturbed_shampoo} is forming a PD matrix sequence $\{ \bm{H}_i \}_{i=1}^{T}$, which satisfies $0\prec\bm{H}_1\preceq\cdots\preceq\bm{H}_T$. To achieve it, we gives the following lemma extended from Lemma 2 in the Appendix of~\cite{JuiNan_NIPS_2023}.

\begin{restatable}{lemma}{lemmaconverge}
	\label{lem:app-convergence_lemma}
	Let $\{\bm{X}_t\}_{t=1}^{t=T}$ be a sequence of symmetric matrices, and $\bm{A}_t=\sum_{s=1}^t\bm{X}_s$, where $t=1, \dots, T$. Suppose we have two sequences of symmetric matrices $\{\bm{Y}_t\}_{t=1}^{t=T}, \{\bm{Z}_t\}_{t=0}^{t=T}$, and a sequence real numbers $\{\rho_t\}_{t=0}^{t=T}$ satisfying
	\begin{align*}
		\bm{Y}_t = \bm{Z}_{t-1}+\bm{X}_t, \quad \rho_t=\rho_{t-1}+\| \bm{Y}_t - \bm{Z}_t\|_2, \quad \bm{Z}_0=\bm{0}, \rho_0=0.
	\end{align*}
	Define $\bm{B}_t=\rho_t\bm{I}+\bm{Z}_t$, where $\bm{I}$ denotes the identity matrix. Then for $t=1, \dots, T$, we have
	\begin{align*}
		\bm{B}_t \succeq \bm{B}_{t-1} + \bm{X}_t, \quad \bm{A}_t \preceq \bm{B}_t \preceq 2\rho_t\bm{I} + \bm{A}_t.
	\end{align*}
\end{restatable}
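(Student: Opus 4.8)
The plan is to derive both relations from one elementary fact: for any symmetric matrix $\bm{M}$, every eigenvalue lies in $[-\|\bm{M}\|_2,\|\bm{M}\|_2]$, hence $-\|\bm{M}\|_2\bm{I} \preceq \bm{M} \preceq \|\bm{M}\|_2\bm{I}$. Everything else is bookkeeping with the recursions $\bm{Y}_t=\bm{Z}_{t-1}+\bm{X}_t$ and $\rho_t-\rho_{t-1}=\|\bm{Y}_t-\bm{Z}_t\|_2\ge 0$, carried out by a short induction on $t$.

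\textbf{Step 1 (one-step monotonicity).} First I would prove $\bm{B}_t\succeq\bm{B}_{t-1}+\bm{X}_t$. Expanding the definitions of $\bm{B}_t,\bm{B}_{t-1}$ and substituting $\bm{Z}_{t-1}+\bm{X}_t=\bm{Y}_t$ together with $\rho_t-\rho_{t-1}=\|\bm{Y}_t-\bm{Z}_t\|_2$ gives $\bm{B}_t-\bm{B}_{t-1}-\bm{X}_t=\|\bm{Y}_t-\bm{Z}_t\|_2\,\bm{I}-(\bm{Y}_t-\bm{Z}_t)$, which is PSD by the spectral fact applied to $\bm{M}=\bm{Y}_t-\bm{Z}_t$. \textbf{Step 2 (lower bound).} Setting $\bm{A}_0:=\bm{0}$ so that $\bm{B}_0=\rho_0\bm{I}+\bm{Z}_0=\bm{0}=\bm{A}_0$, I induct on $t$: using Step 1 and the inductive hypothesis, $\bm{B}_t\succeq\bm{B}_{t-1}+\bm{X}_t\succeq\bm{A}_{t-1}+\bm{X}_t=\bm{A}_t$.

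\textbf{Step 3 (upper bound).} The inequality $\bm{B}_t\preceq 2\rho_t\bm{I}+\bm{A}_t$ is equivalent to $\bm{Z}_t\preceq\rho_t\bm{I}+\bm{A}_t$, which I would prove by a parallel induction, with base case $\bm{Z}_0=\bm{0}=\rho_0\bm{I}+\bm{A}_0$. For the inductive step, apply the spectral fact to $\bm{M}=\bm{Z}_t-\bm{Y}_t$, again with $\|\bm{M}\|_2=\rho_t-\rho_{t-1}$, to obtain $\bm{Z}_t\preceq\bm{Y}_t+(\rho_t-\rho_{t-1})\bm{I}=\bm{Z}_{t-1}+\bm{X}_t+(\rho_t-\rho_{t-1})\bm{I}$; substituting $\bm{Z}_{t-1}\preceq\rho_{t-1}\bm{I}+\bm{A}_{t-1}$ yields $\bm{Z}_t\preceq\rho_t\bm{I}+\bm{A}_t$. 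Adding $\rho_t\bm{I}$ to both sides closes the chain $\bm{A}_t\preceq\bm{B}_t\preceq 2\rho_t\bm{I}+\bm{A}_t$.

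There is no genuine obstacle: the mathematical content is the one-line spectral inequality $\|\bm{M}\|_2\bm{I}\succeq\bm{M}$, and the only thing requiring care is matching the correct shift, $\rho_{t-1}$ versus $\rho_t$, at each stage of the two inductions (and observing that $\rho_t$ is nondecreasing, which is what makes accumulating the errors $\|\bm{Y}_t-\bm{Z}_t\|_2$ the right device, though it is not otherwise used). This extends Lemma~2 in the appendix of~\cite{JuiNan_NIPS_2023} by phrasing it for an arbitrary symmetric increment sequence $\{\bm{X}_t\}$ rather than rank-one updates.
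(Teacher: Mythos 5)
Your proof is correct and uses the same key ingredient as the paper's, namely the spectral bound $\pm\bm{M}\preceq\|\bm{M}\|_2\bm{I}$ applied to $\bm{Y}_t-\bm{Z}_t$, followed by telescoping the one-step relations (which the paper writes as a telescoping sum and you phrase as induction — the same computation). The only cosmetic difference is that for the upper bound you track $\bm{Z}_t\preceq\rho_t\bm{I}+\bm{A}_t$ while the paper accumulates $2(\rho_s-\rho_{s-1})\bm{I}$ directly on $\bm{B}_t$; both yield the identical final chain.
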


\begin{restatable}{theorem}{theoremconverge}
	\label{thm:app-convergence_theorem}
	Assume that the gradients $\bm{G}_1,\dots,\bm{G}_T\in\mathbb{R}^{m \times n}$ are matrices of rank at most $r$. Then for any $\bm{W}^*\in\mathbb{R}^{m \times n}$ and $\epsilon>0$, if $\eta=D/\sqrt{2r}$, the regret of Algorithm~\ref{alg:app-perturbed_shampoo} is bounded as follows, 
	\begin{align*}
		\sum_{t=1}^{T}f_t(\bm{W}_t)-\sum_{t=1}^{T}f_t(\bm{W}^*) \le \sqrt{2r}D[2^{1/4}m\rho^{1/4}_T+{\rm tr}(\tilde{\bm{L}}_T^{1/4})][2^{1/4}n\mu^{1/4}_T+{\rm tr}(\tilde{\bm{R}}_T^{1/4})],
	\end{align*}
	where $D=\max_{t\in[T]}\| \bm{W}_t-\bm{W}^* \|_F$, $\tilde{\bm{L}}_t=\epsilon\bm{I}_m+\sum_{t=1}^T\bm{G}_t\bm{G}_t^\mathsf{T}$, and $\tilde{\bm{R}}_t=\epsilon\bm{I}_n+\sum_{t=1}^T\bm{G}_t^\mathsf{T}\bm{G}_t$.
\end{restatable}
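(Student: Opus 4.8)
\emph{Proof proposal.} The plan is to vectorize Algorithm~\ref{alg:app-perturbed_shampoo} into a full-matrix online mirror-descent step and then run the standard Shampoo regret argument, routing the effect of the perturbation $g$ through Lemma~\ref{lem:app-convergence_lemma}. Write $\bm{g}_t=\overline{\rm vec}(\bm{G}_t)$, $\bm{w}_t=\overline{\rm vec}(\bm{W}_t)$, $\bm{w}^*=\overline{\rm vec}(\bm{W}^*)$, and introduce the effective preconditioners $\bm{L}'_t=(\epsilon+\rho_t)\bm{I}_m+\bm{L}_t$, $\bm{R}'_t=(\epsilon+\mu_t)\bm{I}_n+\bm{R}_t$ and $\bm{H}_t=(\bm{L}'_t\otimes\bm{R}'_t)^{1/4}$. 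Since $\bm{L}'_t,\bm{R}'_t$ are symmetric, Lemma~\ref{lem:kronecker_product_vectorization} and Lemma~\ref{lem:kronecker_product_properties}\ref{it:lemma-kronecker-product-3} turn the parameter update into $\bm{w}_{t+1}=\bm{w}_t-\eta\,\bm{H}_t^{-1}\bm{g}_t$; since $\overline{\rm vec}$ is a linear isometry, $\bm{g}_t$ is the gradient of $f_t$ (regarded as a convex function on $\mathbb{R}^{mn}$) at $\bm{w}_t$, and $\|\bm{w}_t-\bm{w}^*\|_2=\|\bm{W}_t-\bm{W}^*\|_F\le D$. Assuming, as usual in this online setting, that each $f_t$ is convex and differentiable, Lemma~\ref{lem:convergence_lemma_vector1} with $d=mn$ splits the regret into a distance term $\frac{1}{2\eta}\sum_t(\|\bm{w}_t-\bm{w}^*\|_{\bm{H}_t}^2-\|\bm{w}_{t+1}-\bm{w}^*\|_{\bm{H}_t}^2)$ and a gradient term $\frac{\eta}{2}\sum_t(\|\bm{g}_t\|_{\bm{H}_t}^*)^2$.

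Next I would extract the two facts about $\bm{L}'_t,\bm{R}'_t$ that make the argument close, each obtained by invoking Lemma~\ref{lem:app-convergence_lemma} with $\bm{X}_t=\bm{G}_t\bm{G}_t^{\mathsf{T}}$ (resp.\ $\bm{G}_t^{\mathsf{T}}\bm{G}_t$), $\bm{Z}_t=\bm{L}_t$ (resp.\ $\bm{R}_t$), and $\rho_t$ (resp.\ $\mu_t$), using $\bm{L}_0=\bm{0}$ and $\rho_0=0$. Its conclusion $\bm{A}_t\preceq\bm{B}_t\preceq 2\rho_t\bm{I}+\bm{A}_t$, with $\bm{A}_t=\sum_{s\le t}\bm{G}_s\bm{G}_s^{\mathsf{T}}$, $\bm{B}_t=\rho_t\bm{I}+\bm{L}_t$ and $\bm{L}'_t=\epsilon\bm{I}_m+\bm{B}_t$, yields the sandwich $\tilde{\bm{L}}_t\preceq\bm{L}'_t\preceq 2\rho_t\bm{I}_m+\tilde{\bm{L}}_t$ where $\tilde{\bm{L}}_t=\epsilon\bm{I}_m+\sum_{s\le t}\bm{G}_s\bm{G}_s^{\mathsf{T}}$ (and similarly for $\bm{R}$); in particular $\bm{L}'_t\succeq\epsilon\bm{I}_m\succ\bm{0}$, so each $\bm{H}_t$ is PD regardless of what $g$ does. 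Its other conclusion $\bm{B}_t\succeq\bm{B}_{t-1}+\bm{X}_t$ gives monotonicity $\bm{L}'_t\succeq\bm{L}'_{t-1}$ and $\bm{R}'_t\succeq\bm{R}'_{t-1}$, hence $\bm{L}'_t\otimes\bm{R}'_t\succeq\bm{L}'_{t-1}\otimes\bm{R}'_{t-1}$ (Lemma~\ref{lem:kronecker_product_properties}\ref{it:lemma-kronecker-product-4}) and $\bm{H}_t\succeq\bm{H}_{t-1}$ (Lemma~\ref{lem:PSD_matrix_inequality2}). Monotonicity lets me telescope the distance term: peeling off the first and last summands, $\sum_t(\|\bm{w}_t-\bm{w}^*\|_{\bm{H}_t}^2-\|\bm{w}_{t+1}-\bm{w}^*\|_{\bm{H}_t}^2)\le\|\bm{w}_1-\bm{w}^*\|_{\bm{H}_1}^2+\sum_{t\ge2}\|\bm{w}_t-\bm{w}^*\|_{\bm{H}_t-\bm{H}_{t-1}}^2$, and bounding each quadratic form by $D^2\,{\rm tr}(\cdot)$ then telescoping traces gives $\le D^2\,{\rm tr}(\bm{H}_T)$; so the distance term is at most $\frac{D^2}{2\eta}\,{\rm tr}(\bm{H}_T)$.

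For the gradient term I would combine Lemma~\ref{lem:shampoo_matrix_inequality} (applied to $\bm{G}_1,\dots,\bm{G}_t$) with the left half of the sandwich: $\epsilon\bm{I}_{mn}+\frac{1}{r}\sum_{s\le t}\bm{g}_s\bm{g}_s^{\mathsf{T}}\preceq\tilde{\bm{L}}_t^{1/2}\otimes\tilde{\bm{R}}_t^{1/2}=(\tilde{\bm{L}}_t\otimes\tilde{\bm{R}}_t)^{1/2}\preceq(\bm{L}'_t\otimes\bm{R}'_t)^{1/2}=\bm{H}_t^2$, using Lemma~\ref{lem:kronecker_product_properties}\ref{it:lemma-kronecker-product-3},\ref{it:lemma-kronecker-product-4} and Lemma~\ref{lem:PSD_matrix_inequality2} in the middle. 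Setting $\bm{\Phi}_t=(\epsilon\bm{I}_{mn}+\frac{1}{r}\sum_{s\le t}\bm{g}_s\bm{g}_s^{\mathsf{T}})^{1/2}$, this gives $\bm{\Phi}_t\preceq\bm{H}_t$, hence $\bm{H}_t^{-1}\preceq\bm{\Phi}_t^{-1}$ by Lemma~\ref{lem:PSD_matrix_inequality3}, so $\sum_t(\|\bm{g}_t\|_{\bm{H}_t}^*)^2\le\sum_t\bm{g}_t^{\mathsf{T}}\bm{\Phi}_t^{-1}\bm{g}_t$; a rescaled application of Lemma~\ref{lem:convergence_lemma_vector2} (replace $\bm{g}_s$ by $\bm{g}_s/\sqrt{r}$ and take $\rho=\epsilon$) bounds this by $2r\,{\rm tr}(\bm{\Phi}_T)\le 2r\,{\rm tr}(\bm{H}_T)$. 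Substituting both estimates into Lemma~\ref{lem:convergence_lemma_vector1} gives regret $\le(\frac{D^2}{2\eta}+\eta r)\,{\rm tr}(\bm{H}_T)$, and $\eta=D/\sqrt{2r}$ minimizes the prefactor to $\sqrt{2r}\,D$.

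Finally I would estimate ${\rm tr}(\bm{H}_T)$. By Lemma~\ref{lem:kronecker_product_properties}\ref{it:lemma-kronecker-product-3},\ref{it:lemma-kronecker-product-5}, ${\rm tr}(\bm{H}_T)={\rm tr}((\bm{L}'_T)^{1/4})\,{\rm tr}((\bm{R}'_T)^{1/4})$; the right half of the sandwich gives $\bm{L}'_T\preceq 2\rho_T\bm{I}_m+\tilde{\bm{L}}_T$, and applying $x\mapsto x^{1/4}$ (Lemma~\ref{lem:PSD_matrix_inequality2}) then subadditivity of $x\mapsto x^{1/4}$ on the commuting pair $(2\rho_T\bm{I}_m,\tilde{\bm{L}}_T)$ yields $(\bm{L}'_T)^{1/4}\preceq 2^{1/4}\rho_T^{1/4}\bm{I}_m+\tilde{\bm{L}}_T^{1/4}$, hence ${\rm tr}((\bm{L}'_T)^{1/4})\le 2^{1/4}m\rho_T^{1/4}+{\rm tr}(\tilde{\bm{L}}_T^{1/4})$, and symmetrically ${\rm tr}((\bm{R}'_T)^{1/4})\le 2^{1/4}n\mu_T^{1/4}+{\rm tr}(\tilde{\bm{R}}_T^{1/4})$; multiplying the two factors gives the stated bound. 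I expect the only real obstacle to be the perturbation bookkeeping---identifying the sequences so that Lemma~\ref{lem:app-convergence_lemma} applies, and checking that its two conclusions deliver exactly the monotonicity needed for the telescope and the controlled inflation needed for the trace estimate---after which everything reduces to the familiar Shampoo/AdaGrad manipulations.
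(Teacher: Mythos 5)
Your proposal is correct and follows essentially the same route as the paper's proof: vectorize the update via the Kronecker-product lemmas, extract monotonicity and the two-sided sandwich $\tilde{\bm{L}}_t\preceq\hat{\bm{L}}_t\preceq 2\rho_t\bm{I}+\tilde{\bm{L}}_t$ from Lemma~\ref{lem:app-convergence_lemma}, apply Lemma~\ref{lem:convergence_lemma_vector1} to split the regret, control the gradient term through Lemma~\ref{lem:shampoo_matrix_inequality} and a rescaled Lemma~\ref{lem:convergence_lemma_vector2}, and finish with the trace estimate ${\rm tr}(\hat{\bm{L}}_T^{1/4})\le 2^{1/4}m\rho_T^{1/4}+{\rm tr}(\tilde{\bm{L}}_T^{1/4})$. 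The only cosmetic differences are that you telescope the distance term by hand where the paper invokes von Neumann's trace inequality, and you normalize the auxiliary matrix as $\widehat{\bm{H}}_t/\sqrt{r}$ rather than carrying the $\sqrt{r}$ factor through.
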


Though we get a convergence guarantee of Algorithm~\ref{alg:app-perturbed_shampoo}, the upper bound given by Theorem~\ref{thm:app-convergence_theorem} is very slack, since $2^{1/4}m\rho^{1/4}_T$ is about the same as ${\rm tr}(\tilde{\bm{L}}_T^{1/4})$ for 4-bit quantization schemes in practice. 
\section{Proofs}\label{sec:app-proofs}
\lemmaUperturb*

\begin{proof}
	(1)
	Since $\bm{U}$ and $\bm{U}+\Delta\bm{U}$ are orthogonal, we have
	\begin{align*}
		\bm{B} = \bm{U}\bm{\Lambda}^s\bm{U}^{\mathsf{T}}, \quad 
		\bm{B}+\Delta\bm{B} =(\bm{U}+\Delta\bm{U})\bm{\Lambda}^s(\bm{U}+\Delta\bm{U})^{\mathsf{T}},
	\end{align*}
	by definition. This leads to
	\begin{align*}
		\Delta\bm{B} = \bm{U}\bm{\Lambda}^s\Delta\bm{U}^{\mathsf{T}}+\Delta\bm{U}\bm{\Lambda}^s(\bm{U}+\Delta\bm{U})^{\mathsf{T}}.
	\end{align*}
	The Frobenius norm satisfies the triangle  inequality and is orthogonality invariant. Hence,
	\begin{align*}
		\| \Delta\bm{B} \|_F
		& = \| \bm{U}\bm{\Lambda}^s\Delta\bm{U}^{\mathsf{T}}+\Delta\bm{U}\bm{\Lambda}^s(\bm{U}+\Delta\bm{U})^{\mathsf{T}} \|_F \\
		& \le \| \bm{U}\bm{\Lambda}^s\Delta\bm{U}^{\mathsf{T}} \|_F + \| \Delta\bm{U}\bm{\Lambda}^s(\bm{U}+\Delta\bm{U})^{\mathsf{T}} \|_F \\
		& = \| \bm{\Lambda}^s\Delta\bm{U}^{\mathsf{T}} \|_F + \| \Delta\bm{U}\bm{\Lambda}^s \|_F=2\| \Delta\bm{U}\bm{\Lambda}^s \|_F \\
		& = 2\sqrt{{\sum}_i\| \lambda_i^s\Delta\bm{u}_i \|_2^2}=2\sqrt{{\sum}_i\lambda_i^{2s}\| \Delta\bm{u}_i \|_2^2} \\
		& \le 2\sqrt{{\sum}_i\lambda_i^{2s}\alpha^2}=2\alpha\sqrt{{\sum}_i\lambda_i^{2s}}=2\alpha\| \bm{\Lambda}^s \|_F \\
		& = 2\alpha\| \bm{B} \|_F.
	\end{align*}
	(2) Similar to (1), we have
	\begin{align*}
		\Delta\bm{B} = \bm{U}\bm{\Lambda}^s\Delta\bm{U}^{\mathsf{T}}+\Delta\bm{U}\bm{\Lambda}^s\bm{U}^{\mathsf{T}}+\Delta\bm{U}\bm{\Lambda}^s\Delta\bm{U}^{\mathsf{T}}.
	\end{align*}
	From $\langle \bm{u}_i, \bm{u}_i+\Delta\bm{u}_i \rangle \ge 1-\beta\ge0$, we get $0 \ge \langle \bm{u}_i, \Delta\bm{u}_i \rangle \ge -\beta\ge-1$ because
	\begin{align*}
		1 = \| \bm{u}_i \|_2 \| \bm{u}_i+\Delta\bm{u}_i \|_2 \ge \langle \bm{u}_i, \bm{u}_i+\Delta\bm{u}_i \rangle = 1 + \langle \bm{u}_i, \Delta\bm{u}_i \rangle \ge 1-\beta \ge 0,
	\end{align*}
	holds due to the orthogonality of $\bm{U}$ and $\bm{U}+\Delta\bm{U}$. Hence,
	\begin{align*}
		\langle \bm{B},\Delta\bm{B} \rangle
		& = {\rm tr}(2\bm{U}\bm{\Lambda}^{2s}\Delta\bm{U}^{\mathsf{T}} + \bm{U}\bm{\Lambda}^s\bm{U}^{\mathsf{T}}\Delta\bm{U}\bm{\Lambda}^s\Delta\bm{U}^{\mathsf{T}}) \\
		& = {\rm tr}\Big({\sum}_i2\lambda_i^{2s}\bm{u}_i\Delta\bm{u}_i^{\mathsf{T}}\Big) + {\rm tr}\Big[\Big({\sum}_i\lambda_i^s\bm{u}_i\bm{u}_i^{\mathsf{T}}\Big)\Big({\sum}_j\lambda_j^s\Delta\bm{u}_j\Delta\bm{u}_j^{\mathsf{T}}\Big)\Big] \\
		& = \Big({\sum}_i2\lambda_i^{2s}\langle \bm{u}_i, \Delta\bm{u}_i \rangle\Big)+\Big({\sum}_{ij}\lambda_i^s\lambda_j^s\langle \bm{u}_i, \Delta\bm{u}_j \rangle^2\Big) \\
		& \ge \Big({\sum}_i2\lambda_i^{2s}\langle \bm{u}_i, \Delta\bm{u}_i \rangle\Big) + \Big({\sum}_i\lambda_i^{2s}\langle \bm{u}_i, \Delta\bm{u}_i \rangle^2\Big) \\
		& = {\sum}_i\lambda_i^{2s}[(1+\langle \bm{u}_i, \Delta\bm{u}_i \rangle)^2-1] \\
		& \ge {\sum}_i\lambda_i^{2s}[(1-\beta)^2-1] = [(1-\beta)^2-1]\| \bm{\Lambda}^s \|_F^2 \\
		& = [(1-\beta)^2-1]\| \bm{B} \|_F^2 = [(1-\beta)^2-1]\langle \bm{B},\bm{B} \rangle.
	\end{align*}
	Therefore, we have
	\begin{align*}
		\frac{\langle \bm{B},\bm{B}+\Delta\bm{B} \rangle}{\| \bm{B} \|_F \| \bm{B}+\Delta\bm{B} \|_F} = \frac{\langle \bm{B},\bm{B}+\Delta\bm{B} \rangle}{\langle \bm{B},\bm{B} \rangle} = 1+\frac{\langle \bm{B},\Delta\bm{B} \rangle}{\langle \bm{B},\bm{B} \rangle}\ge (1-\beta)^2.
	\end{align*}
	The proof is completed.
\end{proof}

\lemmaSperturb*

\begin{proof}
	(1)
	Since $\bm{U}$ is orthogonal, we have
	\begin{align*}
		\| \Delta\bm{B} \|_F = \| (\bm{\Lambda}+\Delta\bm{\Lambda})^s-\bm{\Lambda}^s \|_F = \sqrt{n}|k^s-1|\lambda^s, \quad \| \bm{B} \|_F = \| \bm{\Lambda}^s \|_F = \sqrt{mc^{2s}+n}\lambda^s.
	\end{align*}
	Hence,
	\begin{align*}
		\frac{\| \Delta\bm{B} \|_F}{\| \bm{B} \|_F} = \frac{\sqrt{n}|k^s-1|}{\sqrt{mc^{2s}+n}} = \frac{\sqrt{l}|k^s-1|}{\sqrt{c^{2s}+l}} = h_1(s, l) \ge 0.
	\end{align*}
	It is easy to check that $h_1$ increases monotonically with $l$ over $(0, +\infty)$. To prove $h_1$ decreases monotonically with $s$ over $(-\infty, 0)$, define
	\begin{align*}
		g_1(s) = \frac{1}{l}(h_1(s, l))^2=\frac{(k^s-1)^2}{c^{2s}+l}.
	\end{align*}
	Consider the derivative of $g_1$
	\begin{align*}
		g_1^\prime(s)
		& = \frac{(c^{2s}+l)2(k^s-1)k^s\ln k - (k^s-1)^2c^{2s}2\ln c}{(c^{2s}+l)^2} \\
		& = \frac{2(k^s-1)\left((c^{2s}+l)k^s\ln k - (k^s-1)c^{2s}\ln c\right)}{(c^{2s}+l)^2}.
	\end{align*}
	If $s<0$ and $k>1$, then $k^s-1<0,k^s\ln k>0$ leading to $g_1^\prime(s)<0$ since $c\ge0$; Similarly, if $s<0$ and $0<k\le1$, then $k^s-1\ge0,k^s\ln k\le0$ leading to $g_1'(s)\le0$. Thus $g_1(s)$ is a monotonically decreasing function for $s<0$, which implies that $h_1$ decreases monotonically with $s$ over $(-\infty, 0)$.
	
	(2)
	Similar to (1), we have
	\begin{align*}
		\| \bm{B} \|_F = \sqrt{mc^{2s}+n}\lambda^s, \quad \| \bm{B}+\Delta\bm{B} \|_F = \sqrt{nt^{2s}+m}c^s\lambda^s.
	\end{align*}
	Besides,
	\begin{align*}
		\langle \bm{B},\bm{B}+\Delta\bm{B} \rangle = {\rm tr}(\bm{U}\bm{\Lambda}^s(\bm{\Lambda}+\Delta\bm{\Lambda})^s\bm{U}^{\mathsf{T}}) = {\rm tr}(\bm{\Lambda}^s(\bm{\Lambda}+\Delta\bm{\Lambda})^s) = (mc^{2s}+nc^st^s)\lambda^{2s}.
	\end{align*}
	Hence, we get
	\begin{align*}
		\frac{\langle \bm{B},\bm{B}+\Delta\bm{B} \rangle}{\| \bm{B} \|_F \| \bm{B}+\Delta\bm{B} \|_F} = \frac{nt^s+mc^s}{\sqrt{(m+nt^{2s})(n+mc^{2s})}} = \frac{lt^s+c^s}{\sqrt{(1+lt^{2s})(l+c^{2s})}} = h_2(l) \ge 0.
	\end{align*}
	To prove $h_2$ decreases monotonically with $l$ over $(0, (c/t)^s]$ and increases monotonically with $l$ over $((c/t)^s, +\infty)$, we define
	\begin{align*}
		g_2(l)=(h_2(l))^2=\frac{(lt^s+c^s)^2}{(1+lt^{2s})(l+c^{2s})},
	\end{align*}
	whose monotonicity is equivalent to that of $h_2$ for $l>0$. Consider the derivative of $g_2$
	\begin{align*}
		g_2^\prime(l) = \Big(\frac{t^{2s}l^2+2t^sc^sl+c^{2s}}{t^{2s}l^2+l+t^{2s}c^{2s}l+c^{2s}}\Big)^\prime = \frac{(t^s-t^{2s}c^s)^2l^2-(c^s-t^sc^{2s})^2}{(t^{2s}l^2+l+t^{2s}c^{2s}l+c^{2s})^2}.
	\end{align*}
	If $s=0$ or $tc=1$, then $g_2(l)\equiv1$. If $s\ne0$ and $tc\ne1$, then $(t^s-t^{2s}c^s)^2>0,(c^s-t^sc^{2s})^2>0$. In this case, let $g_2^\prime(l) =0$, we get
	\begin{align*}
		t^{2s}(1-t^sc^s)^2l^2=c^{2s}(1-t^sc^s)^2,
	\end{align*}
	which implies that $l=(c/t)^s$. It is easy to see that $g_2$ decreases monotonically with $l$ over $(0, (c/t)^s]$ and increases monotonically with $l$ over $((c/t)^s, +\infty)$.
	
	(3) According to (1)(2), we can easily get
	\begin{align*}
		\frac{\| \Delta\bm{B} \|_F}{\| \bm{B} \|_F} = \frac{|k^s-1|}{\sqrt{k^s+1}}, \quad \frac{\langle \bm{B},\bm{B}+\Delta\bm{B} \rangle}{\| \bm{B} \|_F \| \bm{B}+\Delta\bm{B} \|_F} = \frac{2}{\sqrt{2+k^s+1/k^s}}.
	\end{align*}
	The proof is completed.
\end{proof}

\mainproposition*

\begin{proof}
	According to Lemma~\ref{lem:lemma-U-perturb}, we have
	\begin{align*}
		\frac{\| \bm{B}_1-\bm{B} \|_F}{\| \bm{B} \|_F} \le 0.2, \quad \frac{\langle \bm{B},\bm{B}_1 \rangle}{\| \bm{B} \|_F \| \bm{B}_1 \|_F} \ge (1-0.005)^2 \ge 0.99.
	\end{align*}
	On the other hand, from Lemma~\ref{lem:lemma-S-perturb}\ref{it:lemma-S-perturb-3}, we get
	\begin{align*}
		\frac{\| \bm{B}_2-\bm{B} \|_F}{\| \bm{B} \|_F} = \frac{|x-1|}{\sqrt{x+1}} = f_1(x), \quad \frac{\langle \bm{B},\bm{B}_2 \rangle}{\| \bm{B} \|_F \| \bm{B}_2 \|_F} = \frac{2}{\sqrt{2+x+1/x}} = f_2(x),
	\end{align*}
	where $x=(0.02c)^s\in(0, 20^{-1/4}]$.
	It is easy to verify that $f_1$ decreases monotonically and $f_2$ increases monotonically for $0<x<1$. Hence
	\begin{align*}
		f_1(x) \ge f_1(20^{-1/4}) \ge 0.4, \quad f_2(x) \le f_2(20^{-1/4}) \le 0.94.
	\end{align*}
	The proof is completed.
\end{proof}

\lemmaconverge*

\begin{proof}
	Note that for any symmetric matrix $\bm{S}$, it holds that $\|\bm{S}\|_2\bm{I}\succeq \bm{S}$. Then we have
	\begin{align*}
		(\rho_t-\rho_{t-1})\bm{I} + \bm{Z}_t = \| \bm{Y}_t - \bm{Z}_t\|_2\bm{I} + \bm{Z}_t \succeq \bm{Y}_t.
	\end{align*}
	Adding $\rho_{t-1}\bm{I}$ on both sides, we get
	\begin{align*}
		\bm{B}_t = \rho_t\bm{I} + \bm{Z}_t \succeq \rho_{t-1}\bm{I} + \bm{Y}_t = \rho_{t-1}\bm{I} + \bm{Z}_{t-1}+\bm{X}_t = \bm{B}_{t-1} + \bm{X}_t.
	\end{align*}
	Hence
	\begin{align*}
		\bm{B}_t = \sum_{s=1}^{t}(\bm{B}_s-\bm{B}_{s-1}) \succeq \sum_{s=1}^{t}\bm{X}_s = \bm{A}_t.
	\end{align*}
	On the other hand, we have
	\begin{align*}
		\bm{Z}_t \preceq \| \bm{Z}_t - \bm{Y}_t\|_2\bm{I} + \bm{Y}_t = (\rho_t-\rho_{t-1})\bm{I} + \bm{Y}_t.
	\end{align*}
	Adding $\rho_t\bm{I}$ on both sides, we get
	\begin{align*}
		\bm{B}_t 
		& = \rho_t\bm{I} + \bm{Z}_t \preceq (2\rho_t-\rho_{t-1})\bm{I} + \bm{Y}_t \\
		& = 2(\rho_t-\rho_{t-1})\bm{I} + \rho_{t-1}\bm{I} + \bm{Z}_{t-1}+\bm{X}_t \\
		& = \bm{B}_{t-1} + 2(\rho_t-\rho_{t-1})\bm{I} + \bm{X}_t.
	\end{align*}
	Hence
	\begin{align*}
		\bm{B}_t = \sum_{s=1}^{t}(\bm{B}_s-\bm{B}_{s-1}) \preceq \sum_{s=1}^{t}2(\rho_s-\rho_{s-1})\bm{I} + \sum_{s=1}^{t}\bm{X}_s = 2\rho_t\bm{I}+ \bm{A}_t.
	\end{align*}
	The proof is completed.
\end{proof}

\theoremconverge*

\begin{proof}
	Define $\hat{\bm{L}}_t=(\epsilon+\rho_t)\bm{I}_m+\bm{L}_t,\hat{\bm{R}}_t=(\epsilon+\mu_t)\bm{I}_n+\bm{R}_t$. According to Lemma~\ref{lem:app-convergence_lemma}, $\hat{\bm{L}}_t$ and $\hat{\bm{R}}_t$ are positive definite. Recall the update performed in Algorithm~\ref{alg:app-perturbed_shampoo},
	\begin{align*}
		\bm{W}_{t+1}=\bm{W}_t-\eta\hat{\bm{L}}_t^{-1/4}\bm{G}_t\hat{\bm{R}}_t^{-1/4}.
	\end{align*}
	For $t>0$, let $\bm{H}_t=\hat{\bm{L}}_t^{1/4}\otimes\hat{\bm{R}}_t^{1/4}$, $\bm{g}_t={\overline{\rm vec}}(\bm{G}_t)$ and $ \bm{w}_t={\overline{\rm vec}}(\bm{W}_t)$. Due to Lemma~\ref{lem:kronecker_product_properties}\ref{it:lemma-kronecker-product-3} and Lemma~\ref{lem:kronecker_product_vectorization}, we have
	\begin{align*}
		\bm{w}_{t+1}=\bm{w}_t-\eta\bm{H}_t^{-1}\bm{g}_t.
	\end{align*}
	Lemma~\ref{lem:app-convergence_lemma} implies $0\prec\hat{\bm{L}}_1\preceq\cdots\preceq\hat{\bm{L}}_T,0\prec\hat{\bm{R}}_1\preceq\cdots\preceq\hat{\bm{R}}_T$. Thus, according to Lemma~\ref{lem:kronecker_product_properties}\ref{it:lemma-kronecker-product-3}\ref{it:lemma-kronecker-product-4} and Lemma~\ref{lem:PSD_matrix_inequality2}, we get
	\begin{align*}
		0\prec\bm{H}_1\preceq\cdots\preceq\bm{H}_T.
	\end{align*}
	Let $\bm{H}_0=\bm{0}$. By invoking Lemma~\ref{lem:convergence_lemma_vector1} and Lemma~\ref{lem:von_Neumann_trace_theorem}, we obtain the regret bound
	\begin{align*}
		\sum_{t=1}^{T}f_t(\bm{W}_t)-\sum_{t=1}^{T}f_t(\bm{W}^*) 
		& \le \frac{1}{2\eta}\sum_{t=1}^{T}(\bm{w}_t-\bm{w}^*)^\mathsf{T}(\bm{H}_t-\bm{H}_{t-1})(\bm{w}_t-\bm{w}^*) + \frac{\eta}{2}\sum_{t=1}^{T}(\| \bm{g}_t \|_{\bm{H}_t}^*)^2 \\
		& \le \frac{D^2}{2\eta}\sum_{t=1}^{T}{\rm tr}(\bm{H}_t-\bm{H}_{t-1}) + \frac{\eta}{2}\sum_{t=1}^{T}(\| \bm{g}_t \|_{\bm{H}_t}^*)^2 \\
		& = \frac{D^2}{2\eta}{\rm tr}(\bm{H}_T) + \frac{\eta}{2}\sum_{t=1}^{T}(\| \bm{g}_t \|_{\bm{H}_t}^*)^2,
	\end{align*}
	where $D=\max_{t\in[T]}\| \bm{w}_t-\bm{w}^* \|_2=\max_{t\in[T]}\| \bm{W}_t-\bm{W}^* \|_F$ and $\bm{w}^*={\overline{\rm vec}}(\bm{W}^*)$.
	
	Define $\widehat{\bm{H}}_t = (r\epsilon\bm{I}+\sum_{s=1}^{t}\bm{g}_s\bm{g}_s^\mathsf{T})^{1/2}$. Lemma~\ref{lem:shampoo_matrix_inequality} and Lemma~\ref{lem:app-convergence_lemma} imply that
	\begin{align*}
		\widehat{\bm{H}}_t \preceq \sqrt{r} \tilde{\bm{L}}_t^{1/4}\otimes\tilde{\bm{R}}_t^{1/4} \preceq \sqrt{r}\bm{H}_t.
	\end{align*}
	Using Lemma~\ref{lem:PSD_matrix_inequality3} and Lemma~\ref{lem:convergence_lemma_vector2} along with the above equation, we obtain
	\begin{align*}
		\sum_{t=1}^{T}(\| \bm{g}_t \|_{\bm{H}_t}^*)^2 \le \sqrt{r}\sum_{t=1}^{T}(\| \bm{g}_t \|_{\widehat{\bm{H}}_t}^*)^2 \le 2\sqrt{r}{\rm tr}(\widehat{\bm{H}}_T) \le 2r{\rm tr}(\bm{H}_T).
	\end{align*}
	Consequently, using Lemma~\ref{lem:kronecker_product_properties}\ref{it:lemma-kronecker-product-5} and Lemma~\ref{lem:app-convergence_lemma}, we get the desired regret bound
	\begin{align*}
		\sum_{t=1}^{T}f_t(\bm{W}_t)-\sum_{t=1}^{T}f_t(\bm{W}^*) 
		& \le \Big(\frac{D^2}{2\eta}+\eta r\Big){\rm tr}(\bm{H}_T) = \sqrt{2r}D{\rm tr}(\hat{\bm{L}}_T^{1/4}){\rm tr}(\hat{\bm{R}}_T^{1/4}) \\
		& \le \sqrt{2r}D[2^{1/4}m\rho^{1/4}_T+{\rm tr}(\tilde{\bm{L}}_T^{1/4})][2^{1/4}n\mu^{1/4}_T+{\rm tr}(\tilde{\bm{R}}_T^{1/4})],
	\end{align*}
	by choosing $\eta=D/\sqrt{2r}$. The proof is completed.
\end{proof} 
\section{Experimental Details}\label{sec:app-experi-details}
We use one RTX3060Ti GPU under the PyTorch 2.0.1+CUDA11.8 framework for DNN training on the CIFAR-100 and Tiny-ImageNet datasets, use one A800 GPU under the PyTorch 2.0.1+CUDA11.7 framework for DNN training on the ImageNet-1k and C4 datasets, and use two NVIDIA L40S GPUs under the PyTorch 2.0.1+CUDA11.8 framework for DNN training on the OWT dataset. To obtain the total peak memory consumption per GPU, we call "torch.cuda.max\_memory\_allocated". 

We set "torch.backends.cudnn.benchmark" to "False" for all the experiments, except when training ViT-Base/32 on the ImageNet-1k dataset. We report the total memory consumption instead of the memory consumption of the second-order optimizer. This total memory includes data, model parameters, activations, gradients, states forming the preconditioners and their inverse roots, states for the used first-order optimizer, and memory fragments. \textit{Our focus lies in quantizing the states for constructing preconditioners and their inverse roots, which are approximately 7x smaller for 4-bit Shampoo compared to 32-bit Shampoo. Because the block size is 64, its maximum value should be calculated every 64 elements and saved as a 32-bit value, resulting in an additional overhead of 0.5 bits ($32/64$). Consequently, the memory savings are approximately 7 times, calculated as $32/(4\!+\!0.5)$.} In the future, we may adopt double quantization~\cite{Dettmers_NIPS_2023} to further reduce memory consumption.

For SGDM, Adagrad or AdamW used in second-order optimizers, we use 32-bit optimizer states on image classification tasks and 16-bit optimizer states on natural language modeling tasks by default.
For SGDM, we set the momentum to 0.9 and use an initial learning rate of 0.1. For Adagrad, we set $\epsilon=10^{-10}$ and use an initial learning rate of 0.01. For AdamW, we set $\beta_1=0.9$, $\beta_2=0.999$, and $\epsilon=10^{-8}$ and use an initial learning rate of 0.001.
For quantization settings, we employ block-wise normalization with a block size of 64 and linear square quantization by default.
Matrices with a size smaller than 4096 will not be quantized.
For Shampoo and CASPR, we use $\epsilon=10^{-6}, \beta=0.95$ and $t_1=1, t_2=4$ by default.
Shampoo and CASPR precondition blocks from large matrices and the maximum order of a preconditioner is 10000 for 130M LLAMA-2 and is 1200 for other models.
For training loss, we use cross-entropy loss.
For image classification tasks, automatic mixed precision is enabled except for training transformers on the CIFAR-100 and Tiny-ImageNet datasets.

\textbf{Settings on training CNNs on CIFAR-100 or Tiny-ImageNet.}
Minibatch size is set to 128. Weight decay is 0.0005.
Data augmentation includes random crop and horizontal flip.
For Shampoo, we set $T_1=100$ and $T_2=500$.
In Section~\ref{sec:experi-results}, we run SGDM for 300 epochs and SGDM+Shampoo for 200 epochs on the CIFAR-100 dataset.
We run SGDM for 150 epochs and SGDM+Shampoo for 100 epochs on the  Tiny-ImageNet dataset.
We adopt the multi-step learning rate schedule (the learning rate is multiplied by 0.1 for every 30\% epochs with a linear warmup at the first 5 epochs).

\textbf{Settings on training transformers on CIFAR-100 or Tiny-ImageNet.}
We set a patch size of 4 for ViT-small on the CIFAR-100 dataset, and a patch size of 8 for ViT-small on the Tiny-ImageNet dataset.
For training Swin-Tiny on the CIFAR-100 dataset, we use a patch size of 2 and window size of 4. For training Swin-Tiny on the Tiny-ImageNet dataset, we use a patch size of 4 and window size of 7.
Minibatch size is set to 128.
We run Adagrad/AdamW/NadamW for 150 epochs and Adagrad/AdamW+Shampoo for 100 epochs.
Weight decay is 0.0005 for Adagrad, and is 0.05 for AdamW/NadamW.
We use the cosine learning rate schedule.
Data augmentation follows the source code in~\cite{Lee_arxiv_2021}.
For Shampoo, we set $T_1=100$ and $T_2=500$.
With the exception of certain optimizer settings, the configurations used for ablation studies are identical to those outlined above.

\textbf{Settings on training ResNet50 on ImageNet-1k.}
We run SGDM for 120 epochs and SGDM+Shampoo for 100 epochs.
Minibatch size is set to 256. Weight decay is 0.0001.
We adopt the multi-step learning rate schedule (the learning rate is multiplied by 0.1 for every 30\% epochs with a linear warmup at the first 5 epochs).
Data augmentation includes random resized crop, horizontal flip, and color jitter.
For Shampoo, we set $T_1=200$ and $T_2=1000$.

\textbf{Settings on training ViT-Base/32 on ImageNet-1k.}
We run AdamW for 150 epochs and AdamW+Shampoo for 120 epochs.
Minibatch size is set to 512. Weight decay is 0.05.
We use the cosine learning rate schedule.
Data augmentation follows the configuration for training ViT-Base/16 in~\cite{Zhou_ICLR_2023}, excluding repeated augmentation.
For Shampoo, we set $T_1=200$ and $T_2=1000$.

\textbf{Settings on training GPT-2 on OWT.}
We run AdamW with 10\% warmup steps.
Total batch size is set to 480. Batch size is set to 24 for training 124M GPT-2.
Dtype is bfloat16. Weight decay is 0.1.
For Shampoo, we set $T_1=200$ and $T_2=200$.
For our 4-bit Shampoo, we use Schur-Newton iteration used in Algorithm~\ref{alg:shampoo_prac} to compute the inverse root of a preconditioner for training stability.

\textbf{Settings on training LLAMA-2 on C4.}
We run AdamW with 10\% warmup steps.
Total batch size is set to 512. Batch size is set to 256 for training 130M LLAMA-2 and is set to 128 for training 350M LLAMA-2. 
Dtype is bfloat16. Weight decay is 0.
For Shampoo, we set $T_1=200$ and $T_2=200$.

\textbf{Settings on K-FAC and AdaBK.}
K-FAC/AdaBK preconditions layers without limiting the size of a preconditioner.
We set $\beta=0.9$, $T_1=200$, and $T_2=2000$.
We use $\epsilon=0.1$ for K-FAC and $\epsilon=0.001$ for AdaBK.
For 4-bit K-FAC/AdaBK, we set $t_1=0$ and $t_2=0$ (i.e., no orthogonal rectification).

\textbf{Settings on schedule free optimization.} We use the code from~\cite{Defazio_arxiv_2024} to train ResNet34 with SGDScheduleFree and Swin-Tiny with AdamWScheduleFree. For SGDScheduleFree, we set lr=1.0, weight\_decay=0.0005 and warmup\_steps=2000. For AdamWScheduleFree, we set lr=0.0025, weight\_decay=0.05 and warmup\_steps=10000.

\textbf{Settings on M-FAC.} We use the code from~\cite{Frantar_NIPS_2021} and set ngrads=32, damp=0.1. The other hyperparameter settings of M-FAC is the same as that of SGDM used for ResNet34 training.
\section{Additional Results}\label{sec:app-experi-results}
\subsection{Image Classification}

\textbf{More learning rate schedulers.} Table~\ref{tab:cosine-resnet} shows the performance and wall-clock time of training ResNet34 on CIFAR-100 with cosine learning rate decay. By comparison, SGDM+Shampoo still converges faster than SGDM, and have slightly better test performance.

\begin{table}[h]
	\centering
	\caption{
		Performance and wall-clock time of training ResNet34 on the CIFAR-100 dataset with cosine learning rate decay. TA = test accuracy, and WCT = wall-clock time.
	}
	\renewcommand{\arraystretch}{0.9} 
	\begin{tabular}{cccc}
		\toprule
		Epochs & Optimizer & TA (\%) & WCT (min) \\
		\midrule
		200 & SGDM & 79.67 & 116.0 \\
		300 & SGDM & 79.83 & 172.7 \\
		200 & SGDM + 32-bit Shampoo & 80.39 & 152.7 \\
		200 & SGDM + 4-bit Shampoo (our) & 80.22 & 161.7 \\
		\bottomrule
	\end{tabular}
	\label{tab:cosine-resnet}
\end{table}

We also provide the results of training ResNet34 and Swin-Tiny on CIFAR-100 with schedule-free approach~\cite{Defazio_arxiv_2024} in Table~\ref{tab:schedule-free}. From it one can see that AdamWScheduleFree achieves comparable performance to AdamW with cosine decay, while SGDScheduleFree underperforms compared to SGDM. We observe that this schedule-free algorithm shows rapid improvements in training and test accuracy during the early training stages, but may fail to achieve a higher test accuracy ultimately (see Figure~\ref{fig:schedule-free}). Anyway, these methods are still worse than our AdamW+4-bit Shampoo.

\begin{table}[h]
	\centering
	\caption{
		Performance and wall-clock time of training on the CIFAR-100 dataset with cosine learning rate decay and schedule-free approach. ResNet34 is trained for 300 epochs and Swin-Tiny is trained for 150 epochs. TA = test accuracy, and WCT = wall-clock time.
	}
	\renewcommand{\arraystretch}{0.9} 
	\begin{tabular}{c|ccc}
		\toprule
		Model & Optimizer & TA (\%) & WCT (min) \\
		\midrule
		\multirow{2}{*}{ResNet34} & SGDM & 79.83 & 172.7 \\
		& SGDScheduleFree & 75.63 & 169.6 \\
		\midrule
		\multirow{2}{*}{Swin-Tiny} & AdamW & 76.69 & 318.6 \\
		& AdamWScheduleFree & 76.58 & 321.9 \\
		\bottomrule
	\end{tabular}
	\label{tab:schedule-free}
\end{table}

\begin{figure}[h]
	\centering
	\includegraphics[width=0.9\textwidth]{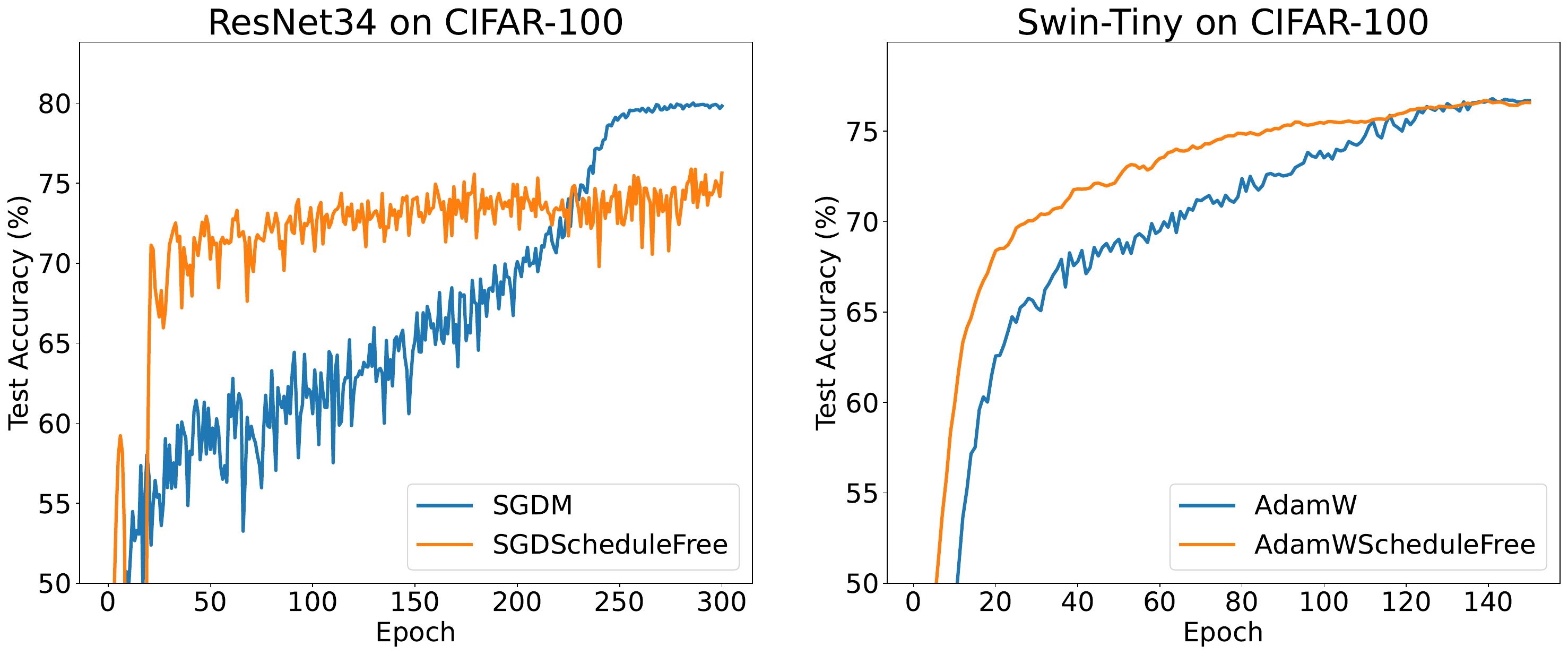}
	\caption{
		Visualization of test accuracies on the CIFAR-100 dataset with cosine learning rate decay and schedule-free approach.
	}
	\label{fig:schedule-free}
\end{figure}

\textbf{More optimizers.} Table~\ref{tab:more-optims} shows results of training Swin-Tiny on CIFAR-100 with NadamW, Adagrad and Adagrad+Shampoo. One can see that Adagrad+4-bit Shampoo converges faster than Adagrad with ignorable extra memory overhead, and also has higher test accuracy. Besides, though NadamW~\cite{Dozat_ICLR_workshop_2016} is slightly better than AdamW, it is still worse than our AdamW+4-bit Shampoo.

\begin{table}[h]
	\centering
	\caption{
		Performance, wall-clock time, and memory cost of training Swin-Tiny on the CIFAR-100 dataset. TA = test accuracy, WCT = wall-clock time, and TMC = total GPU memory cost.
	}
	\renewcommand{\arraystretch}{0.9} 
	\begin{tabular}{cccc}
		\toprule
		Optimizer & TA (\%) & WCT (min) & TMC (MB) \\
		\midrule				
		NadamW & 77.11 & 342.4 & 1465.8 \\
		AdamW + 32-bit Shampoo & 79.34 & 260.8 & 2036.0 \\
		AdamW + 4-bit Shampoo (our) & 78.63 & 273.3 & 1543.9 \\
		\midrule
		Adagrad & 66.56 & 294.6 & 1354.9 \\
		Adagrad + 32-bit Shampoo & 73.55 & 245.3 & 1930.4 \\
		Adagrad + 4-bit Shampoo (our) & 72.66 & 259.6 & 1433.0 \\
		\bottomrule
	\end{tabular}
	\label{tab:more-optims}
\end{table}

M-FAC~\cite{Frantar_NIPS_2021} is a matrix-free method computing inverse-Hessian vector  products with many gradient copies. It is not memory-efficient for M-FAC to maintain $m$ dense gradient copies ($m=1024$ in its official code). Table~\ref{tab:m-fac} shows that both SGDM+32-bit Shampoo and SGDM+4-bit Shampoo enjoy much higher efficiency than M-FAC ($m=32$) for training ResNet34 on CIFAR-100, and enjoy higher test accuracy. EVA~\cite{Zhang_ICLR_2023} is a rank-one second-order optimizer and is memory-efficient. We train ResNet34 on CIFAR-100 with SGDM+EVA, but despite extensive hyper-parameter tuning, we fail to achieve acceleration over SGDM. Instead, we cite EVA's result of training VGG-19 on CIFAR-100 for 200 epochs (see Table 2 in~\cite{Zhang_ICLR_2023}). The test accuracies of SGDM+EVA and SGDM+Shampoo are 73\% and 74.5\%, respectively.  

\begin{table}[H]
	\centering
	\caption{
		Performance and memory cost of training ResNet34 on the CIFAR-100 dataset with cosine learning rate decay. All the optimizers are run for 200 epochs. TA = test accuracy, and TMC = total GPU memory cost.
	}
	\renewcommand{\arraystretch}{0.9} 
	\setlength{\tabcolsep}{3pt} 
	\begin{tabular}{c|cccc}
		\toprule
		Optimizer & SGDM & M-FAC ($m$=32) & SGDM + 32-bit Shampoo & SGDM + 4-bit Shampoo (our) \\
		\midrule
		TA (\%) & 79.67 & 78.56 & 80.39 & 80.22 \\
		\midrule
		TMC (MB) & 822.03 & 3424.8 & 1441.8 & 908.4\\
		\bottomrule
	\end{tabular}
	\label{tab:m-fac}
\end{table}

\begin{table}[htbp]
	\centering
	\small
	\caption{
		Performance, wall-clock time, and memory usage per GPU on natural language modeling tasks. VL = validation loss, WCT = wall-clock time, and TMC = total GPU memory cost.
	}
	\setlength{\tabcolsep}{5pt} 
	\renewcommand{\arraystretch}{0.9} 
	\begin{tabular}{c|c|cccc}
		\toprule
		Dataset & Model & Optimizer & VL & WCT (min) & TMC (MB) \\
		\midrule
		\multirow{8.5}{*}{C4} & \multirow{4}{*}{LLAMA-130M} & AdamW & 3.214 & 346.9 & 47026 \\
		& & AdamW + 32-bit Shampoo & 3.184 & 353.7 & 48813 \\
		& & AdamW + 4-bit Shampoo (naive) & 3.200 & 353.5 & 47316 \\
		& & AdamW + 4-bit Shampoo (our) & 3.194 & 353.1 & 47318 \\
		\cmidrule{2-6}
		& \multirow{4}{*}{LLAMA-350M} & AdamW & 2.939 & 2687 & 54184 \\
		& & AdamW + 32-bit Shampoo & 2.908 & 2776 & 59149 \\
		& & AdamW + 4-bit Shampoo (naive) & 2.930 & 2753 & 54894 \\
		& & AdamW + 4-bit Shampoo (our) & 2.924 & 2795 & 54894 \\
		\midrule
		\multirow{4}{*}{OWT} & \multirow{4}{*}{GPT2-124M} & AdamW & 2.954 & 2310 & 27010 \\
		& & AdamW + 32-bit Shampoo & 2.936 & 2330 & 28490 \\
		& & AdamW + 4-bit Shampoo (naive) & 2.953 & 2359 & 27209 \\
		& & AdamW + 4-bit Shampoo (our) & 2.944 & 2311 & 27209 \\
		\bottomrule
	\end{tabular}
	\label{tab:app-language-models-experi}
\end{table}

\begin{figure}[h]	
	\centering
	\subfigure {
		\includegraphics[width=0.45\textwidth]{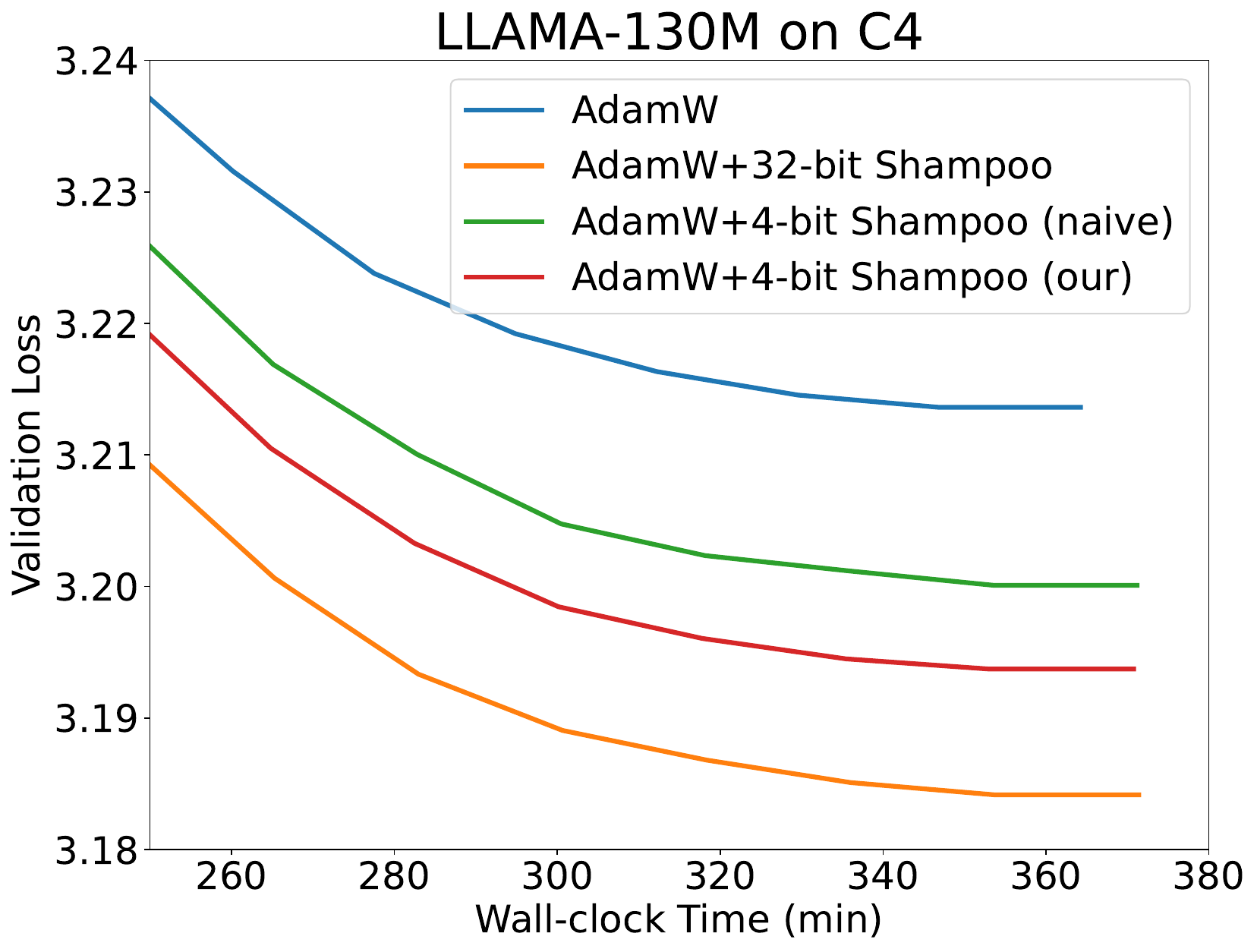}	
	}
	\subfigure {
		\includegraphics[width=0.45\textwidth]{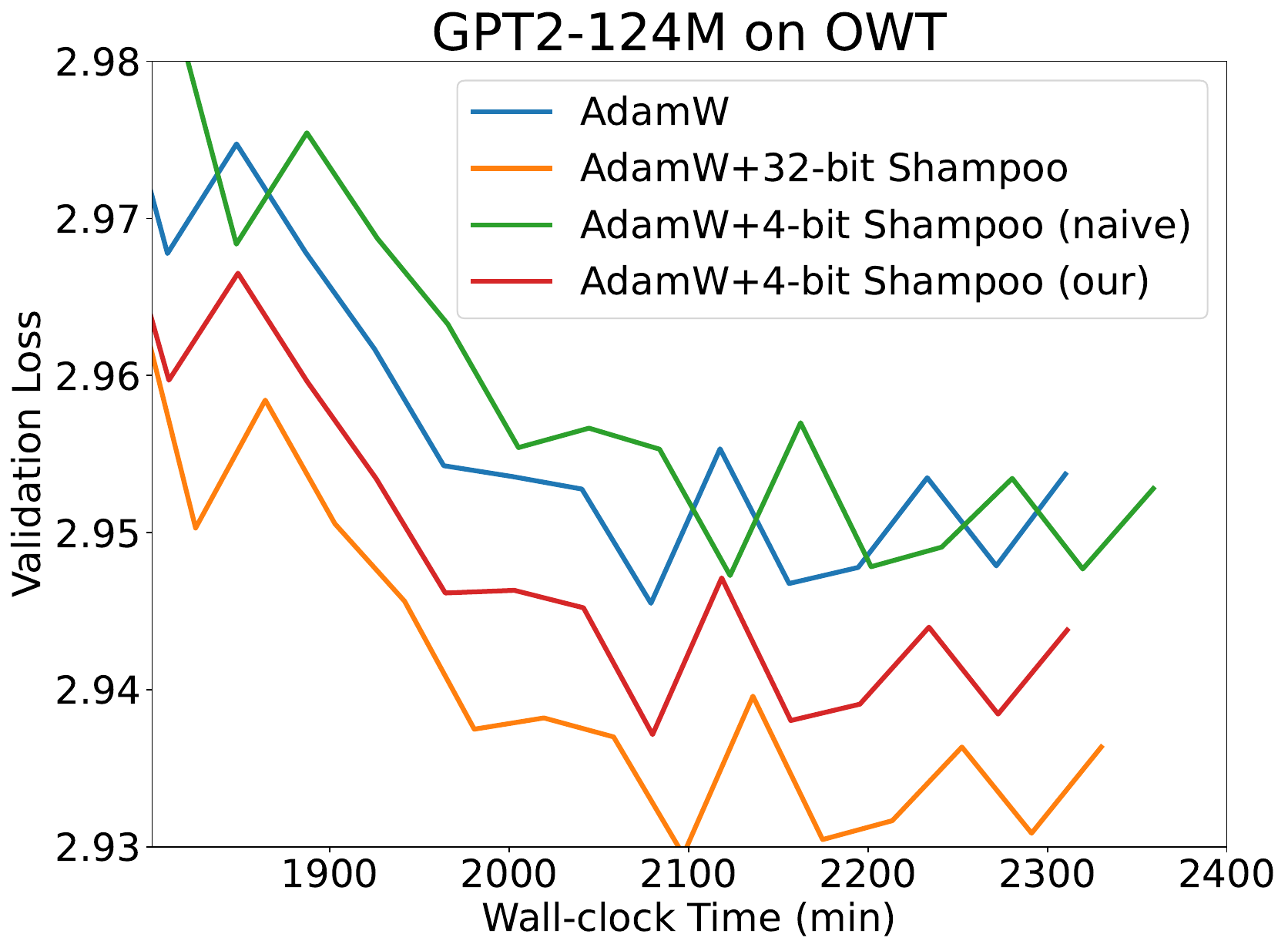}
	}
	\caption{
		Visualization of validation loss on the C4 and OWT datasets.
	}
	\label{fig:app-language-models-experi-curve}
\end{figure}

\subsection{Natural Language Modeling}

\textbf{Models, datasets, and hyperparameters.} We train 124M GPT-2~\cite{Radford_OpenAI_2019} for 60k steps on the OpenWebText (OWT) dataset \footnote[1]{\url{http://Skylion007.github.io/OpenWebTextCorpus}.} following the nanoGPT codebase \footnote[2]{\url{https://github.com/karpathy/nanoGPT}.} with two NVIDIA L40S GPUs, and train 130M LLAMA-2~\cite{Touvron_arxiv_2023} for 20k steps and 350M LLAMA-2 for 60k steps on the C4 dataset~\cite{Raffel_C4_2020} following~\cite{Zhao_ICML_2024} with one A800 GPU. See Appendix~\ref{sec:app-experi-details} for experimental details.

\textbf{Main results.} We show the performance, wall-clock time, and memory cost in Table~\ref{tab:app-language-models-experi}, and the validation loss curves in Figure~\ref{fig:app-language-models-experi-curve}. As with the vision tasks, our AdamW+4-bit Shampoo consistently outperformed AdamW and naive AdamW+4-bit Shampoo in terms of performance, and AdamW+32-bit Shampoo in terms of memory usage. 

\textbf{Memory efficiency.} We further check the memory usage by increasing token batch size for a language model, which is calculated as the batch size multiplied by the context length (see~\cite{Zhao_ICML_2024}). To train LLAMA2-7B on the C4 dataset using a single A800 GPU (with a maximum memory of 81,920 MB), we set the context length to 256 and then determine the maximum batch size allowed by each optimizer. For Shampoo, the maximum order of a preconditioner for training LLAMA2-7B is 2048. In all experiments, gradient checkpointing is enabled. Table~\ref{tab:app-mem-usage} summarizes the evaluation results. By comparison, the 32-bit Shampoo runs out of memory with a batch size of 2, while our 4-bit Shampoo supports a batch size of 64 for standard training and only encounters memory issues at a batch size of 128. These results clearly demonstrate that our 4-bit Shampoo significantly conserves memory compared to the 32-bit version.

\begin{table}[h]
	\centering
	\caption{
		Memory cost of training LLAMA2-7B on the C4 dataset with different optimizers. One A800 GPU with a maximum memory of 81,920 MB is enabled. TMC = total GPU memory cost, and OOM = out of memory.
	}
	\renewcommand{\arraystretch}{0.9} 
	\begin{tabular}{ccc}
		\toprule
		Optimizer & Batch Size & TMC (MB) \\
		\midrule
		8-bit AdamW & 64 & 60135 \\
		8-bit AdamW & 128 & 68689 \\
		8-bit AdamW & 256 & OOM \\
		8-bit AdamW + 32-bit Shampoo & 2 & OOM \\
		8-bit AdamW + 4-bit Shampoo (our) & 64 & 74561 \\
		8-bit AdamW + 4-bit Shampoo (our) & 128 & OOM \\
		\bottomrule
	\end{tabular}
	\label{tab:app-mem-usage}
\end{table}

\end{document}